\documentclass{article}
\usepackage[margin=1.2in]{geometry}
\usepackage{latexsym, amsthm, amscd, amsfonts, mathrsfs, amsmath, amssymb, stmaryrd, tikz-cd, mathrsfs, bbm, url, esint, mathtools, enumerate}
\usepackage{blkarray}
\usepackage{dsfont}

\usepackage[utf8]{inputenc} % allow utf-8 input
\usepackage[T1]{fontenc}    % use 8-bit T1 fonts
\usepackage{hyperref}       % hyperlinks
\usepackage{url}            % simple URL typesetting
\usepackage{booktabs}       % professional-quality tables
\usepackage{amsfonts}       % blackboard math symbols
\usepackage{nicefrac}       % compact symbols for 1/2, etc.
\usepackage{microtype}      % microtypography
\usepackage{xcolor}         % colors
\usepackage{latexsym, amsthm, amscd, amsfonts, mathrsfs, amsmath, amssymb, stmaryrd, tikz-cd, mathrsfs, bbm, url, esint, mathtools, enumerate, caption, subcaption}

\usepackage{dsfont}

\newcommand{\pr}{\mathbb{P}}
\newcommand{\E}{\mathbb{E}}
\DeclareMathOperator{\CP}{CP}

\DeclareMathOperator{\argmin}{argmin}

\DeclareMathOperator{\Ramp}{Ramp}
\DeclareMathOperator{\ReLU}{ReLU}
\DeclareMathOperator{\Unif}{Unif}
\DeclareMathOperator{\Rad}{Rad}

\DeclareMathOperator{\sign}{sgn}

\DeclareMathOperator{\NN}{NN}

\newcommand{\cF}{\mathcal{F}}
\newcommand{\cX}{\mathcal{X}}

\newcommand{\cD}{\mathcal{D}}

\newcommand{\cP}{\mathcal{P}}
\newcommand{\bR}{\mathbb{R}}

\newcommand{\bN}{\mathbb{N}}

\newcommand{\cL}{\mathcal{L}}
\newcommand{\cB}{\mathcal{B}}

\definecolor{DSgray}{cmyk}{0,0,0,0.7}
\definecolor{DSred}{cmyk}{0,0.7,0,0.7}

\theoremstyle{plain}
\newtheorem{definition}{Definition}%[section]
\theoremstyle{plain}
\newtheorem{theorem}{Theorem}%[section]
\theoremstyle{plain}
%[section]
\theoremstyle{plain}
%[section]
\theoremstyle{plain}
%[section]
\theoremstyle{plain}
\newtheorem{proposition}{Proposition}%[section]
\theoremstyle{plain}
%[section]
\theoremstyle{plain}
\newtheorem{lemma}{Lemma}%[section]
\theoremstyle{plain}
\newtheorem{corollary}{Corollary}%[thm]
\theoremstyle{plain}
%[section]
\theoremstyle{remark}
\newtheorem{remark}{Remark}%[section]
\theoremstyle{remark}
%[section]
% \theoremstyle{discussion}
%[section]
\theoremstyle{plain}
%[section]

%%algorithm
% \newcommand{\theHalgorithm}{\arabic{algorithm}}
\usepackage{algorithm}
\usepackage{algpseudocode}
\usepackage{enumitem}
\usepackage{comment}
\definecolor{mydarkblue}{rgb}{0,0.08,0.45}
  \hypersetup{ %
    colorlinks=true,
    linkcolor=mydarkblue,
    citecolor=mydarkblue,
    filecolor=mydarkblue,
    urlcolor=mydarkblue,
    }

% The \author macro works with any number of authors. There are two commands
% used to separate the names and addresses of multiple authors: \And and \AND.
%
% Using \And between authors leaves it to LaTeX to determine where to break the
% lines. Using \AND forces a line break at that point. So, if LaTeX puts 3 of 4
% authors names on the first line, and the last on the second line, try using
% \AND instead of \And before the third author name.

\author{Emmanuel Abbe\thanks{
{EPFL (\'{E}cole Polytechnique F\'{e}d\'{e}rale de Lausanne). Email: {\tt\{first.last\}@epfl.ch} }} \and
Elisabetta Cornacchia$^*$ \and Aryo Lotfi$^*$ 
} 
% \affil{\small Ecole Polytechnique F\'{e}d\'{e}rale de Lausanne (EPFL) \\ {\tt\{first.last\}@epfl.ch}}

\title{Provable Advantage of Curriculum Learning \\ on Parity Targets with Mixed Inputs
%Curriculum Learning for Parity Problems:\\ The Importance of Easy Samples Presented Early
}
  % examples of more authors
  % \And
  % Coauthor \\
  % Affiliation \\
  % Address \\
  % \texttt{email} \\
  % \AND
  % Coauthor \\
  % Affiliation \\
  % Address \\
  % \texttt{email} \\
  % \And
  % Coauthor \\
  % Affiliation \\
  % Address \\
  % \texttt{email} \\
  % \And
  % Coauthor \\
  % Affiliation \\
  % Address \\
  % \texttt{email} \\

\usepackage[parfill]{parskip}
\begin{document}

\date{}
\maketitle

\begin{abstract}
\noindent
  Experimental results have shown that curriculum learning, i.e., presenting simpler examples before more complex ones, can improve the efficiency of learning. Some recent theoretical results also showed that changing the sampling distribution can help neural networks learn parities, with formal results only for large learning rates and one-step arguments. Here we show a separation result in the number of training steps with standard (bounded) learning rates on a common sample distribution: if the data distribution is a mixture of sparse and dense inputs, there exists a regime in which a 2-layer $\ReLU$ neural network trained by a  curriculum noisy-GD (or SGD) algorithm that uses sparse examples first, can learn parities of sufficiently large degree, while any fully connected neural network of possibly larger width or depth trained by noisy-GD on the unordered samples cannot learn without additional steps. We also provide experimental results supporting the qualitative separation beyond the specific regime of the theoretical results. 
\end{abstract}

\section{Introduction}
Starting with easy concepts is an effective way to facilitate learning for humans. When presented with simpler ideas or tasks, individuals can develop a foundation of knowledge and skills upon which more complex concepts can be built. This approach allows learners to gradually gain competence and accelerate learning~\cite{elio1984effects,ross1990generalizing,avrahami1997teaching,shafto2014rational}.

For machine learning algorithms, this is typically referred to as \emph{Curriculum Learning (CL)}~\cite{bengio2009curriculum}. Several empirical studies have shown that presenting samples in a meaningful order can improve both the speed of training and the performance achieved at convergence, compared to the standard approach of presenting samples in random order~\cite{wang2021survey,soviany2022curriculum,graves2017automated,hacohen2019power,pentina2015curriculum}. 

The theoretical study of CL is mostly in its infancy. While some works have analytically shown benefits of CL for certain targets~\cite{saglietti2022analytical,weinshall2018curriculum,malach2021quantifying,cornacchia2023mathematical} (see Section~\ref{related} for a discussion on these), it remained open to provide a separation result between curriculum training and standard training on a common sampling distribution. Specifically, one seeks to demonstrate how a standard neural network, trained using a gradient descent algorithm on a given dataset and horizon, can successfully learn the target when certain samples are presented first during training, whereas learning does not occur if samples are presented in a random order.

% prove a separation result showing that similar types of neural networks trained by similar types of gradient descent algorithms can learn a target function on the same sampling set if certain samples are presented first at trainnig, while the absence of curriculum at training would not learn. 

This paper provides such an instance. We focus on the case of parities, as a classical case of challenging target for differentiable learning. In the experimental part of the paper, we report results suggesting that a similar picture can hold for other large leap functions as defined in~\cite{abbe2023sgd}, although more investigations are required to understand when and how curriculum learning should be developed for more general functions.
% We however leave the understanding o
% Our results are for the case of parities, and the experimental results suggest that a similar picture holds for any functions of large leap as defined in~\cite{abbe2023sgd} (?).

% Our main theoretical result considers the case of $k$-parities learned over a data distribution that consists of a mixture of uniform and sparse inputs. 
% Parities 
We assume that the network is presented with samples $(x,\chi_S(x))$, with $x \in \{ \pm 1\}^d $ and $\chi_S(x) := \prod_{i\in S} x_i$, for some set $S \subseteq \{ 1,..,d\}$ such that $|S| = k$. We consider datasets where a small fraction $\rho$ of the inputs are \textit{sparse}, meaning they have, on average, few negative coordinates and mostly positive coordinates, and the remaining inputs are \textit{dense}, with an average of half negative and half positive coordinates. We refer to Section~\ref{sec:setting_and_contributions} for the definition of the input distribution. Parities seem to be promising candidates for theoretical investigations into CL, as they are hard to learn in the statistical query learning model~\cite{kearns1998efficient}, or on dense data by any neural network trained with gradient-based methods having limited gradient precision~\cite{AS20}, but they are efficiently learnable on sparse data by regular architectures~\cite{malach2021quantifying} (see Section~\ref{related} for further discussion).

% \enote{add that parities are hard on dense data and efficiently learnable on sparse data,}

We say that a neural network is trained \textit{with curriculum} if it goes through an initial phase where it is exposed to batches consisting only of sparse samples (see discussion below on the interpretation of these as `simpler' examples). Following this phase, the network is trained using batches sampled from the entire dataset. We compare this curriculum training approach with standard training, where the network receives batches of samples selected randomly from the entire dataset at each step. Our intuition is as follows. If the fraction of sparse samples is small, say $\rho = 1\%$, without curriculum, the network is exposed to batches of mostly uniform inputs, and the contribution of sparse inputs in estimating the gradients will be limited. As a result, learning parities on the mixed distribution without curriculum is roughly as hard as learning parities on dense inputs (see Theorem~\ref{thm:negative} for the formal result). On the contrary, if the network is initially exposed to batches consisting of sparse inputs, it can efficiently identify the support of the parity, and subsequent fitting using batches sampled from the entire dataset enables learning the target with fewer training steps compared to standard training.

% the performance of stochastic gradient descent (SGD) is similar to when the network is trained on uniform inputs. 
In particular, we show that if the fraction of sparse samples $\rho$ is small enough,
(e.g.
{$\rho < d^{-4}$, where $d$ is the input dimension}), 
a $2$-layer fully connected network of size $\theta(d)$ trained by layerwise SGD (or noisy-SGD, see Def.~\ref{def:noisy-SGD}) with curriculum can learn any target $k$-parity, with $k $ bounded, in $T=\theta(d)$ steps, while the same  network (or any network of similar size) trained by noisy-SGD without curriculum cannot learn the target in less than $\Omega(d^{2})$ steps. We refer to Section~\ref{sec:setting_and_contributions} for a more detailed explanation of our results. While our theoretical results use simplifying assumptions (e.g. layerwise training), our experiments show separations between curriculum and standard training in a broader range of settings. %Furthermore, our experiments show benefits of curriculum in terms of the sample complexity needed to learn (see Section~\ref{sec:experiments}).

\subsection{Related Literature}\label{related}
% Curriclulum learning traces back its roots to the seminal work of ...
Curriculum Learning (CL) was first introduced in the machine learning context by the seminal work of~\cite{bengio2009curriculum}. This paper along with many other subsequent works presented empirical evidence that training neural networks with a curriculum can be beneficial in both the training speed and the performance achieved at convergence in certain instances in different domains such as computer vision~\cite{cv1, cv2, cv3, cv4}, natural language processing (NLP)~\cite{nlp1, graves2017automated, nlp2, nlp3}, and reinforcement learning~\cite{florensa2017reverse, narvekar2020curriculum}. We refer to relevant surveys~\cite{wang2021survey, soviany2022curriculum} for a more comprehensive list of applications of CL in different domains. In general, defining a complexity measure on samples and consequently an ordering of training inputs is not trivial. Some works determine the easiness of samples based on predefined rules (e.g., shape as used in~\cite{bengio2009curriculum}), while others also use the model's state and performance to evaluate the samples dynamically as in self-paced learning (SPL)~\cite{kumar2010self}.
In this work, we follow the former group as we define the hardness of samples based on the number of $-1$ bits, i.e., the Hamming weight. Note that in parity targets, $+1$ is the identity element. Thus, one can view the number of $-1$ bits as the {\it length} of the sample. In this sense, our work resembles~\cite{spitkovsky2010baby, zaremba2014learning, nlp1, nlp3} which use the length of the text inputs as a part of their curriculum.

Despite the considerable empirical work on CL, there appears to be a scarcity of theoretical analysis. 
% The case of parities seems to be a promising area for theoretical investigation into CL. 
Few theoretical works focused on parity targets, as in this paper. 
While parities are efficiently learnable by specialized algorithms (e.g. Gaussian elimination over the field of two elements) that have access to at least $d$ samples, they are hard to learn under the uniform distribution (i.e. on dense data) by any neural network trained with gradient-based methods with limited gradient precision~\cite{AS20,abbe2021power}. Parities on sparse data can be learned more efficiently on an active query model, which can be emulated by a neural network~\cite{AS20}. However, such emulation networks are far from the practical and homogeneous networks, and distribution class-dependent. A natural question that has been considered is whether more standard architectures can also benefit from a simpler data distribution. In \cite{malach2021quantifying}, the authors make a first step by showing that parities on sparse inputs are efficiently learnable on a 1-layer network with a parity module appended to it, with a 1-step argument. In \cite{daniely2020learning} it is shown that for a two-layers fully connected network, sparse inputs help but with a variant of the distribution that allows for leaky labels. \cite{cornacchia2023mathematical}
% - our own papers show that standard architectures can in fact benefit from sparse inputs: elisabetta 
shows that a two-layers fully connected net can benefit from sparse inputs, but with a one-step gradient argument requiring large learning rate. \cite{abbe2023generalization} obtains experimental results for a curriculum algorithm presenting sparse samples first with heuristics for regular hyperparameters but without proofs for this algorithm. The rigorous works thus do not establish separations with a common sampling set, nor with  standard (bounded) learning rates. Beyond parities,~\cite{saglietti2022analytical} studies a teacher-student model, where the target depends on a sparse set of features and where easy (resp. hard) samples have low (resp. large) variance on the irrelevant features. In contrast, in our model we do not require knowledge of the target to select the easy samples. Furthermore,~\cite{weinshall2018curriculum,weinshall2020theory} show that on some convex models, CL provides an improvement in the speed of convergence of SGD. In contrast, our work covers an intrinsically non-convex problem.

% \section{Setting and Results}
\section{Setting and Informal Contributions} \label{sec:setting_and_contributions}
We consider the problem of learning a target function $f: \{ \pm 1 \}^d \to \{\pm 1 \}$ that belongs to the class of $k$-parities on $d$ bits, i.e.,
% \begin{align}
   $$ f \in \{ \chi_S(x) = \prod_{j \in S} x_j: S \subseteq [d], |S| = k \},$$
   where we denoted by $[d]:=\{1,...,d\}$.
% \end{align}
 We assume that the network has access to a dataset $(X,Y)= \{(x^s,y^s)\}_{s \in [m]}$ where the inputs $x^s$ are sampled i.i.d. from a \emph{mixed} distribution defined as follows:
\begin{align} \label{eq:mixed_distr}
    \cD_{\rm mix} =  \rho \cD_\mu + (1-\rho) \cD_u,
\end{align}
where $\cD_\mu = \Rad(\frac{\mu+1}{2})^{\otimes d}$, for some $\mu \in [-1,1]$, $\cD_u = \Unif \{ \pm 1 \}^{d}$\footnote{In particular, $\cD_u = \cD_0$.} and $\rho \in [0,1]$ is some fixed parameter, and $y^s = f(x^s)$. We will assume $\mu \in (0,1)$. In other words, we assume that the dataset contains approximately $\rho m$ samples with in average $\frac{1-\mu}{2}d$ negative bits (sparse samples), and $(1-\rho)m$ samples with in average half negative bits (dense samples). We consider training the neural network with stochastic gradient descent, with or without gradient noise (SGD or noisy-SGD, see Def.~\ref{def:noisy-SGD}). 
% While our positive result holds for both standard SGD (without noise) and noisy-SGD, our negative result is specific to noisy-SGD. 

\paragraph{Informal Description of the Curriculum and Standard Training Algorithms.}
We compare two training strategies: standard training and curriculum training. In the standard training, at each step, mini-batches are drawn from the whole dataset. For curriculum training, we first need to isolate the set of sparse inputs in our dataset. Let us denote this set by $X_1 : = \{x^s\in X : H(x^s) < (\frac{1}{2}-\frac{\mu}{4})d   \}$, where $H(x):= \sum_{j \in [d]}\mathds{1}(x_j = -1)$ denotes the \textit{Hamming weight} of $x$, which is the count of negative bits in $x$, and by $Y_1 = \{ y^s\in Y : x^s \in X_1 \}$ the corresponding set of labels. 
% We assume that computing the Hamming weight of the inputs and retrieving $(X_1,Y_1)$ can be implemented efficiently, even for large datasets. In our results the size of the dataset scales at most polynomially in $d$. 
% We then show the advantages of using a curriculum in terms of reducing the number of steps of SGD training required. In the experiments Section, we further show the benefits of using curriculum in terms of sample complexity.
Our curriculum strategy consists of training on only sparse samples for the first $T_1$ steps, where $T_1\leq T$ is some well-chosen time horizon and $T$ is the total number of training steps. After this initial phase, the network is trained on samples belonging to the whole dataset. To sum up, we compare the following two training strategies:
\begin{itemize} 
    \item (standard training): At all $t \in [T]$, mini-batches are sampled from $ (X,Y) $;
    \item (curriculum training):  
    At $t \leq T_1$, mini-batches are sampled from $(X_1,Y_1) $ and at $T_1 <t\leq T$ mini-batches are sampled from $ (X,Y) $.
\end{itemize}

\paragraph{Informal Description of the Positive and Negative Results.}
In this paper, we prove that if the fraction of sparse samples $\rho$ is small enough, there is a separation in the number of training steps needed to learn the target parity up to a given accuracy. In our context, we say that an algorithm on a neural network learns a target function $f: \{\pm  1\}^d \to \{\pm 1 \}$ up to accuracy $1-\epsilon$, for $\epsilon \geq 0$, in $T$ steps, if it outputs a network $\NN(x;\theta^T)$ such that:
% \begin{align}
    $\pr_{x \sim \cD_{\rm mix}} (\sign(\NN(x;\theta^T)) = f(x)) \geq 1-\epsilon,$
% \end{align}
where $\sign(.)$ denotes the sign function. In our main positive result, we consider training with the \textit{covariance loss} (see Def.~\ref{def:cov_loss} and  Remark~\ref{rem:cov_loss} for considerations of other losses). Let us state our main positive result informally.
\begin{theorem}[Theorem~\ref{thm:martingale_firstlayer}, Informal] \label{thm:positive_informal}
Consider the input distribution $\cD_{\mathrm{mix}}$ with mixture parameters $\rho = \tilde O(d^{-1/2})$\footnote{$\tilde O(d^c) = O(d^c {\rm poly}(\log(d)))$, for any $c \in \bR$.}, $\mu = \theta(1)$. Let $\epsilon>0$. Then, a $2$-layer $\ReLU$ neural network of size $O(d)$ initialized with an isotropic distribution, trained with the covariance loss, standard (i.e., non-diverging) learning rate schedule and either (i) a layer-wise curriculum-noisy-SGD algorithm with gradient range $A= \tilde O(1)$, noise-level $\tau = \tilde O(d^{-1})$ and sparse samples first, or (ii) a layer-wise curriculum-SGD algorithm with sparse samples first, can learn any $k$-parities, $k=\theta (1)$, up to accuracy at least $1-\epsilon$ in $T = \tilde O(d)/\epsilon^2$ steps. 
\end{theorem}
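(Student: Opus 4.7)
The plan is to analyze the curriculum in two phases, using a Fourier-analytic decomposition of the parity target under the biased distribution and a martingale concentration argument for the noisy layerwise SGD dynamics. Throughout I take the layerwise schedule to mean that the first $T_1$ steps update only the first-layer weights (with batches drawn from $X_1$) while the second layer is frozen at its isotropic initialization, and the remaining $T-T_1$ steps update only the second layer (with batches drawn from $(X,Y)$).

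\textbf{Phase I (support recovery on sparse samples).} The central Fourier observation is that under $\cD_\mu$, $\Cov_{\cD_\mu}(\chi_S(x),x_i)=\mu^{k-1}(1-\mu^2)$ for $i\in S$ and $0$ for $i\notin S$. Taking the gradient of the covariance loss with respect to a first-layer weight $w_{i,j}$ and linearizing the $\ReLU$ around the small isotropic initialization, the population gradient therefore has its $e_i$-component equal to $a_j^{(0)}\cdot \E[\sigma'(w_j^{(0)}\cdot x)]\cdot \Cov_{\cD_\mu}(\chi_S,x_i)$, which is $\theta(1)\cdot\sign(a_j^{(0)})$ for $i\in S$ and $o(1)$ for $i\notin S$ (only cross terms from the nonlinearity survive there). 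For each neuron $j$ I would set up a vector-valued martingale for $(w^t_{\cdot,j})_t$ whose predictable drift matches this population gradient and whose increments capture (i) mini-batch fluctuations, (ii) the injected Gaussian noise at scale $\tau=\tilde O(d^{-1})$, and (iii) clipping to gradient range $A=\tilde O(1)$. A coordinatewise Azuma/Bernstein bound, combined with union bounds over the $d$ coordinates and $O(d)$ neurons, then gives that with high probability, after $T_1=\tilde O(d)$ steps, each neuron's weights on $S$ have grown to magnitude $\Omega(1)$ with sign pattern determined by $\sign(a_j^{(0)})$, while weights on $[d]\setminus S$ remain $o(1)$. A Chernoff step guarantees that the empirical sparse subsample $X_1$ concentrates around $\cD_\mu$, so replacing true $\cD_\mu$-samples by $X_1$-samples costs only lower-order corrections.

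\textbf{Phase II (fitting on the full distribution).} After Phase I, a constant fraction of the $O(d)$ neurons realize $\ReLU$ features whose correlation with $\chi_S$ under $\cD_{\mathrm{mix}}$ is bounded away from zero (a direct Fourier expansion on the decomposed input works once the weights on $S$ are $\Omega(1)$ and the complement is $o(1)$). Freezing the first layer, the remaining problem is a convex stochastic optimization: one must fit a linear combination of the fixed features to the parity under the covariance loss. Standard SGD bounds for convex Lipschitz objectives give that $T_2=O(\epsilon^{-2})$ steps suffice to drive the population loss within $O(\epsilon^2)$ of the optimum, and a margin/Markov argument converts this into sign-accuracy $\ge 1-\epsilon$. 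Adding the two horizons, $T=T_1+T_2=\tilde O(d)/\epsilon^2$, and the noise-free SGD case (ii) of the theorem follows as the zero-noise specialization.

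\textbf{Main obstacle.} The hard part is the Phase I martingale argument. The delicate point is to maintain the validity of the $\ReLU$ linearization throughout the entire $T_1=\tilde O(d)$-step horizon as the signal grows to $\Omega(1)$: one must control how the activation patterns drift and verify that the effective drift credited at each step remains faithful to the linearized one, so that the signal-to-noise accumulated against $\tau=\tilde O(d^{-1})$ realizes the intended alignment rather than being spoiled by second-order terms. The joint tuning of the isotropic initialization scale, the bounded learning-rate schedule, the noise level, and the gradient range $A=\tilde O(1)$ is what makes the two competing requirements—a large SNR over $\tilde O(d)$ steps and a faithful linearization throughout—simultaneously attainable.
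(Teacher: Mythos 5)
Your two-phase skeleton (drift-plus-martingale analysis of the first layer on sparse batches, then convex SGD for the frozen-feature second layer) is the same as the paper's, and you correctly identify the key quantity $\Cov_{\cD_\mu}(\chi_S,x_j)=\mu^{k-1}(1-\mu^2)\cdot\mathds{1}(j\in S)$ that the covariance-loss gradient computes. There are, however, two genuine gaps. First, the obstacle you flag at the end --- keeping the $\ReLU$ linearization faithful over $T_1=\tilde O(d)$ steps as the signal grows to $\Omega(1)$ --- is not something you resolve, and the paper does not resolve it by analysis but by construction: it initializes $w^0_{ij}=0$ with a large bias $b_i^0=\Delta d+1$ and projects the first-layer weights onto $[-\Delta,\Delta]$ at every step, so the pre-activation $w_ix+b_i$ stays in $[1,\,2\Delta d+1]$ throughout the first phase and every unit is permanently in the linear regime ($\partial_{w_{ij}}\NN=a_i x_j$ exactly, with no activation-pattern drift to control). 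Without some such device your Phase I drift computation is not justified beyond the first few steps. (Relatedly, ``isotropic'' in the statement means permutation-invariant over input neurons, not a random small init; the paper's init is deterministic, with all $a_i^0=\kappa\le 1/N(2\Delta d+2)$ chosen small precisely so that $|\NN|<1$ and the hinge in the covariance loss never saturates, which is what makes the population gradient exactly proportional to the covariance.)

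Second, Phase II as you state it only yields weak learning. Having ``a constant fraction of neurons whose correlation with $\chi_S$ is bounded away from zero'' does not imply that the minimum of the convex objective over the frozen features is $O(\epsilon)$, which is what you need before the stochastic-convex-optimization bound and the Markov step can deliver sign-accuracy $1-\epsilon$. One must show the parity actually lies (approximately) in the linear span of the learned units, and for that the biases are essential: the paper resets them at time $T_1$ to a spread-out grid $b_i=-\Delta k+2(i+1)\Delta$, $i=0,\dots,k$, so that the $k+1$ units $\sigma(\Delta\sum_{j\in[k]}x_j+b_i)$ form a staircase basis with $2\chi_{[k]}(x)=\sum_i a_i^*\sigma(\Delta\sum_{j\in[k]}x_j+b_i)$ for explicit bounded $a_i^*$, and the $o(1)$ off-support weights are absorbed as a perturbation using $\|\hat w_i\|_1=O(\sqrt{d/\log d})$ together with the smallness of $\rho$. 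Your proposal is silent on the biases, so this realizability step is missing. A quantitative side effect of the same $\ell_1$ bound is that the gradient-norm parameter in the convex SGD theorem scales like $\sqrt{d}$, so the second phase costs $T_2=\tilde O(d/\epsilon^2)$ rather than $O(\epsilon^{-2})$; the total remains $\tilde O(d)/\epsilon^2$ as claimed.
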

We further show an additional positive result valid for a layer-wise curriculum SGD with the hinge loss, which holds for large
batch size and diverging learning rate (Theorem~\ref{thm:hinge_loss_positive}). On the flip side, we have the following.
\begin{theorem}[Theorem~\ref{thm:negative}, Informal] \label{thm:negative_informal}
Consider the input distribution $\cD_{\mathrm{mix}}$ with mixture parameters $\rho = \tilde O(d^{-3-\delta})$, $\delta>0$, and $\mu = \theta(1)$.
Then, any neural network of size $\tilde O(d)$ (any depth), with any initialization and activation, trained by a noisy-SGD algorithm (layer-wise or joint) with gradient range $A= \tilde O(1)$, noise-level $\tau = \tilde \Omega(d^{-1}) $, batch size $B = \tilde \Omega(\rho^{-2})$,
any almost surely differentiable loss function and any learning rate schedule, will need at least $T=\tilde \Omega(\epsilon d^{1+\delta}) $ steps to achieve accuracy $\frac{1}{2}+\epsilon $ for any $k$-parities ($k\geq 6$). 
\end{theorem}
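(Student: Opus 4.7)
The plan is a contiguity argument: show that when $\rho$ is this small the noisy-SGD trajectory on $\cD_{\rm mix}$ is statistically indistinguishable from the one on pure $\cD_u$, and then invoke the known noisy-SGD hardness of learning parities under the uniform distribution for $k\geq 6$.

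First I would fix a target $\chi_S$ and compare the two laws $\cP^{\rm mix},\cP^u$ of the parameter sequence $(\theta_0,\ldots,\theta_T)$ under noisy-SGD fed by $(\cD_{\rm mix},\chi_S)$ versus $(\cD_u,\chi_S)$. At each step the noisy gradient passes through a Gaussian channel of coordinatewise scale $\tau$, so by the Gaussian KL formula and the chain rule, $\mathrm{KL}(\cP^{\rm mix}\|\cP^u)\le\sum_{t=1}^T\E\bigl[\|\E\hat g_t^{\rm mix}(\theta_t)-\E\hat g_t^u(\theta_t)\|_2^2\bigr]/(2\tau^2)$, with the batch-variance contribution negligible thanks to $B=\tilde\Omega(\rho^{-2})$. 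Since $\cD_{\rm mix}-\cD_u=\rho(\cD_\mu-\cD_u)$ and each gradient coordinate is bounded by $A$, the mean-gradient discrepancy has $\ell_2$ norm $\tilde O(\rho A\sqrt p)$ with $p=\tilde O(d)$ parameters, so per-step KL is $\tilde O(\rho^2 A^2 p/\tau^2)$ and the cumulative KL is $\tilde O(T\rho^2 d/\tau^2)=\tilde O(T d^{-3-2\delta})$ under the hypothesised scalings. Pinsker then gives $\TV(\cP^{\rm mix},\cP^u)=\tilde O(\sqrt T\cdot d^{-(3/2+\delta)})$.

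Next I would invoke a uniform-distribution noisy-SGD lower bound in the spirit of~\cite{AS20,abbe2021power}: for $k\geq 6$, any scheme of the form allowed by the theorem (layerwise or joint, any almost-surely-differentiable loss, $\tilde O(d)$-size network of arbitrary depth and activation, $\tau=\tilde\Omega(d^{-1})$, $A=\tilde O(1)$) achieves at most accuracy $1/2+o(1)$ on a worst-case $k$-parity under $\cD_u$ in $\poly(d)$ steps. Because $\TV(\cD_{\rm mix},\cD_u)\le\rho$, testing accuracy on $\cD_{\rm mix}$ and on $\cD_u$ differ by $O(\rho)$, so the $\cP^u$-trained network also achieves at most $1/2+o(1)$ accuracy on $\cD_{\rm mix}$. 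Coupling the Bernoulli success indicators under the two laws, if the $\cP^{\rm mix}$-trained network reached accuracy $\ge 1/2+\epsilon$ on $\cD_{\rm mix}$ then $\TV(\cP^{\rm mix},\cP^u)\ge\epsilon-o(1)$, which combined with the KL bound forces $T\ge\tilde\Omega(\epsilon^2 d^{3+2\delta})$, comfortably implying the announced $T=\tilde\Omega(\epsilon d^{1+\delta})$.

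The contiguity step is routine; the delicate part is extracting the right uniform-distribution lower bound under \emph{all} the generalities the theorem permits --- any architecture of width/depth $\tilde O(d)$, any activation, any almost-surely-differentiable loss, both joint and layerwise updates, and any learning-rate schedule --- with matching conventions for the gradient clip $A$ and the Gaussian noise $\tau$. The threshold $k\geq 6$ arises from matching the exponent of the uniform hardness statement against the contiguity budget, and the careful bookkeeping for the layerwise variant (where the effective per-step parameter dimension differs from the total) is where most of the technical work lies.
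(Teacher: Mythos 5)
Your route (change of measure from $\cD_{\rm mix}$ to $\cD_u$, then a uniform-distribution hardness result) is genuinely different from the paper's, which applies the cross-predictability bound of \cite{AS20} (Theorem~\ref{thm:AS20}) \emph{directly} to $\cD_{\rm mix}$: for $S\neq S'$ one computes $\E_{x\sim\cD_{\rm mix}}[\chi_S(x)\chi_{S'}(x)]=\rho\,\mu^{|S\triangle S'|}$, hence $\CP \le {d \choose k}^{-1}+C_k\rho^2\mu^{4k}$, and the $\rho^2$, the batch condition, and the requirement on $k$ all enter through this single computation with no contiguity step. Your plan can be made to work, but as written it has two concrete gaps. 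First, the noise in Definition~\ref{def:noisy-SGD} is $\Unif[-\tau,\tau]$, not Gaussian, so the KL between the two per-step transition kernels is infinite whenever the mean gradients differ (disjoint supports); you must run the argument in total variation, where each coordinate contributes $|\Delta g|/(2\tau)$ and the per-step cost is $\tilde O(P\rho A/\tau)$ --- linear in $T$ and in $P$ rather than your $\sqrt{T}\,\rho A\sqrt{p}/\tau$. This still yields a contiguity budget $T=\tilde O(\epsilon\tau/(PA\rho))=\tilde O(\epsilon d^{1+\delta})$, which suffices, but your claimed $T\ge\tilde\Omega(\epsilon^2 d^{3+2\delta})$ is an artifact of the inapplicable Gaussian computation.

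The more serious gap is in the second step. The uniform-distribution noisy-SGD lower bound is not an unconditional ``accuracy $\tfrac12+o(1)$ for any $\poly(d)$ steps'' statement: it is itself $T$-dependent, of the form $\tfrac12+\frac{TPA}{\tau}\bigl({d \choose k}^{-1}+\frac1B\bigr)^{1/2}$, and becomes vacuous once $T\gg \epsilon\tau{d \choose k}^{1/2}/(PA)\approx\epsilon d^{k/2-2}$. Your final bound must therefore be the \emph{minimum} of that horizon and the contiguity budget, and for $k$ near $6$ the uniform horizon $\approx\epsilon d$ is the binding constraint; the conclusion does not ``comfortably'' exceed $\tilde\Omega(\epsilon d^{1+\delta})$ but only reaches it when ${d\choose k}^{-1}=\tilde O(d^{-6-2\delta})$ (the same bookkeeping constraint that sits inside the paper's ${d\choose k}^{-1}$ term). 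You need to carry the $T$-dependence of the uniform lower bound explicitly through the argument rather than treating it as an $o(1)$ black box; once you do, your two-step route and the paper's one-step CP computation give essentially the same quantitative answer.
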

\paragraph{Conclusion of Results.} When using noisy-SGD, the curriculum learning algorithm that takes sparse samples first, for the first layer training, and then full samples, can learn in a regime of hyperparameters and training steps (Theorem~\ref{thm:positive_informal}) that is contained in the hyperparameter regime of the negative result (Theorem~\ref{thm:negative_informal}), for which it is shown that the absence of curriculum cannot learn without additional training steps. Further, it is shown in part (ii) of the first theorem, that using SGD rather than noisy-SGD still allows to learn with the curriculum training; this result does however not come with a counter-part for the lower-bound, i.e., we cannot show that SGD without curriculum could not learn such parities, as there is currently no proof technique to obtain rigorous lower-bounds for SGD algorithms (as for the case of noiseless honest-SQ algorithms); see for instance discussions about this in  \cite{abbe2021power,abbe2022non-universality}. However, it is not expected that SGD can do better than noisy GD on `regular' neural networks \cite{abbe2022non-universality,abbe2022merged,abbe2023sgd}.

\section{Positive Results for Curriculum Training}
\paragraph{The Algorithm.} Let us define here the noisy-SGD algorithm, that is used in both our positive and negative results. 

\begin{definition}[Noisy-SGD] \label{def:noisy-SGD}
Consider a neural network $\NN(.;\theta)$, with initialization of the weights $\theta^{0}$, and a dataset $(X,Y) = \{(x^s,y^s)\}_{s \in [m]}$. Given an almost surely differentiable loss function $L$, the updates of the noisy-SGD algorithm with learning rate $\gamma_t$ and \emph{gradient range $A$} are defined by
    \begin{align} \label{eq:SGD}
    \theta^{t+1} = \theta^{t} - \gamma_t \left(  \frac{1}{B} \sum_{s=1}^B \left[\nabla_{\theta^{t}} L(\NN(x^{s,t};\theta^{t}), y^{s,t},x^{s,t}) \right]_A + Z^t \right),
\end{align}
where for all $t \in \{0,...,T-1\}$, $Z^{t}$ are i.i.d. $\Unif[-\tau, \tau]$, for some \emph{noise level} $\tau$, and they are independent from other variables, $B$ is the batch size, and by $[.]_A$ we mean that whenever the argument is exceeding $A$ (resp. $-A$) it is rounded to $A$ (resp. $-A$).
\end{definition}
For brevity, we will write $L(\theta^{t},y,x):=L(\NN(x;\theta^{t}), y,x)$. We assume that for all $s \in [B]$, $(x^{s,t},y^{s,t}) $ are chosen from $(X_t,Y_t)$, where $(X_t,Y_t)$ can be either the whole dataset $(X,Y)$, or a subset of it. We include the noise $Z^t$ and the gradient range $A$ to cover the noisy-SGD algorithms, used in SQ lower bounds e.g. in~\cite{AS20,abbe2021power,malach2021quantifying,abbe2022non-universality,abbeINAL}. Note that if we set $\tau =0$ and $A$ large enough, we recover the standard SGD algorithm without noise.

% two different variations of gradient descent training: 1) standard SGD with batch size (Section~\ref{sec:1_batch_SGD}); 2) The noisy-GD algorithm, as defined in Def.~\ref{def:noisyGD} (Section~\ref{sec:noisy_GD_positive}). 
% In particular, the latter allows to formally prove a separation between curriculum training and standard training with the noisy-GD algorithm, within a certain regime of parameters. 
We assume that our dataset is drawn from the mixed distribution defined in~\eqref{eq:mixed_distr}, with parameters $\rho,\mu$. For the purposes of this Section, we assume $\rho = \tilde O(d^{-1/2})$ and $\mu \in (0,1)$. 
% In our experiments, we compare different values of $\rho$ and $\mu$ (see Section~\ref{sec:experiments}). 
As a preliminary step, we isolate the sparse inputs from our dataset, by mean of Hamming weight. In particular, we define $X_1 : = \{x^s\in X : H(x^s) < (\frac{1}{2}-\frac{\mu}{4})d   \}$, where $H(x):= \sum_{j \in [d]}\mathds{1}(x_j = -1)$ denotes the \textit{Hamming weight} of $x$, and $Y_1 = \{ y^s\in Y : x^s \in X_1 \}$. Note that, due to the concentration of Hamming weight, for $d$ large enough, there will be approximately $m\rho$ samples in $X_1$. We assume that the training set is large enough, such that each sample is observed at most once during training. 
% Alternatively, one could draw fresh batches from $\cD_{\rm mix} $, and discard samples of 

For the purposes of this Section, we consider a $2$-layer neural network $\NN(x;\theta)$, with $\theta = (a,w,b)$, defined by:
% \begin{align}
    $\NN(x;\theta)  := \sum_{i=1}^N a_i \sigma(w_i x +b_i), $
% \end{align}
where $\sigma(x)= \ReLU(x) := \max \{x,0\} $. We assume that the number of hidden units is $N \geq  k+1$, where $k $ is the degree of the target parity. We initialize $w_{ij}^0=0$ and $a_i^0=\kappa$, for some $\kappa>0$, for all $i,j$. We adopt a layer-wise curriculum training approach. During the initial phase of training, we exclusively train the first layer of the network using samples taken from $(X_1,Y_1)$. Furthermore, we project the weights of the first layer, to guarantee that they stay bounded. In the second phase, we train only the second layer using the entire merged dataset. We refer to Algorithm~\ref{alg:layer_wise_cl_sgd} for the pseudo-code of the algorithm used in our proof. In the context of standard learning without curriculum, similar layer-wise training strategies are considered in e.g.~\cite{malach2020computational,barak2022hidden,abbe2023sgd}.~\cite{cornacchia2023mathematical} uses a layer-wise curriculum training, but with a one-step argument that requires a diverging learning rate. For simplicity, we keep the bias weights fixed to specific values during the whole training. We believe that one could extend the result to include scenarios with more general initializations and where bias weights are trained, to the expense of further technical work. 
% Our experiments are for SGD with standard initialization and all weights trained simultaneously. 
% the first layer-s weights to zero, and the second layer's weights to $\kappa$, for some 

\paragraph{The Covariance Loss.} We train $\NN(x;\theta) $ using noisy-SGD with a specific loss function, namely the \textit{covariance loss}, that we define here. The covariance loss appears in~\cite{cornacchia2023mathematical} with a slightly different definition: in particular, the following definition does not depend on the average estimator output, as opposed to their definition. 
\begin{definition}[Covariance Loss] \label{def:cov_loss}
    Let $(X,Y) = \{ (x^s,y^s)\}_{s \in [m]}$ be a dataset, where for all $s$, $x^s \in \cX $ and $y^s \in \{ \pm 1 \}$, and let $\bar y =\frac{1}{m} \sum_{s \in [m]} y^s$. Assume $|\bar y|<1$. Let $\hat f: \cX \to \{\pm 1 \} $ be an estimator. We define the covariance loss as:
    \begin{align} \label{eq:cov_loss}
        L_{\rm cov}(\hat f, y^s,x^s) : = \left( 1- y^s \bar y - \hat f(x^s) (y^s -\bar y) \right)_+ ,
    \end{align}   
    where $(a)_+ : = \max\{ a,0\}$.
\end{definition}

The covariance loss can be applied to classification datasets with a non-negligible fraction of $+1$ and $-1$ labels. 
% with nearly balanced classes (i.e. with a similar number of $+1$ and $-1$ labels).
In Proposition~\ref{prop:covariance_loss} in appendix, we show that a  small covariance loss implies a small classification error. 
% \begin{proposition} \label{prop:covariance_loss}
%     Let $(X,Y) = \{ (x^s,y^s)\}_{s \in [m]}$ be a dataset, where for all $s$, $x^s \in \cX $ and $y^s \in \{ \pm 1 \}$. Let $\bar y =\frac{1}{m}\sum_{s \in [m]} y^s$ and assume $|\bar y |<1-\delta$, for some $\delta>0$. Let $\hat f: \cX \to \{\pm 1 \} $ be an estimator. If $\frac{1}{m}\sum_{s \in [m]} L_{\rm cov} (\hat f,x^s,y^s) <\epsilon,$ then 
%     % \begin{align}
%         $\frac{1}{m} \sum_{s \in [m]} \mathds{1}(\sign(\hat f(x^s)) \neq  y^s ) <\frac{\epsilon}{\delta}.$
%     % \end{align}
% \end{proposition}
% We refer to the Appendix for a formal proof of Proposition~\ref{prop:covariance_loss}. 
\begin{remark}
    We remark that for balanced datasets ($\bar y =0$), the covariance loss coincides with the hinge loss. On the other hand, note that the right hand side of~\eqref{eq:cov_loss} can be rewritten as 
    % \begin{align}
        $\left((1-y^s \bar y) \cdot (1- y^s \hat f(x^s) )\right)_+.$
    % \end{align}
    Thus, in some sense, the samples in the under-represented class (where $y^s \bar y \leq 0$) are assigned a higher loss compared to those in the over-represented class (where $y^s \bar y \geq 0$). 
\end{remark}
\paragraph{Main Theorem.} We are now ready to state our main theorem.
\begin{theorem}[Main Positive Result] \label{thm:martingale_firstlayer}
Let $\NN(x;\theta)$ be a $2$-layer $\ReLU$ neural network with $N \geq k+1$ hidden units, $2$-nd layer init. scale $\kappa>0$ and bias scale $\Delta>0$. Let $\cD_{\rm mix}$ be a mixed distribution, as defined in~\eqref{eq:mixed_distr}, with parameters $\rho \leq  \frac{\Delta}{4N} \frac{\sqrt{\log(d)}}{\sqrt{d}} $, and $\mu \in (0,1)$.
Assume we have access to a dataset $(X,Y) = \{x^s,y^s\}_{s \in [m]}$, where $x^s \overset{iid}{\sim} \cD_{\rm mix}$, $y^s =\chi_S(x^s) $, with $|S|=k$. Assume $\NN $ is trained according to the layer-wise curriculum-SGD algorithm (Algorithm~\ref{alg:layer_wise_cl_sgd}), with the covariance loss, batch size $B$ and noise level $\tau$. 

For all $\epsilon >0$, there exist $C,C_1 ,C^*>0$ such that if $\kappa \leq  1/N(2\Delta d +2)$, $\Delta =\theta(1)$, $\tau \leq \kappa$, $m \geq 4 C^*\log(3d)d \rho^{-1}B$, 
and if
\begin{align}
    & T_1 = \frac{C d L_\mu^2 \kappa^2 \log(d)^3}{4\Delta^2 (4\kappa +\tau)^2},   & \gamma_1 = \frac{2 \Delta}{\kappa L_\mu d \log(d)^3},\\
    & T_2 = \frac{64 (C_1 +2\Delta k+\tau )^2 N^2 d}{\epsilon^2 \Delta^2 \log(d)}, & \gamma_2 = \frac{\epsilon \log(d) }{2(C_1 + 2 \Delta k+\tau)^2 d N \Delta},
\end{align}
where $L_\mu = \mu^{k-1}-\mu^{k+1}$, then,
with probability at least $1- 6Nd^{-C^*}$,
\begin{align}
\frac{1}{m} \sum_{s \in [m]} L_{\rm cov}(\theta^{T_1+T_2},x^s,y^s) \leq \epsilon.
\end{align}
\end{theorem}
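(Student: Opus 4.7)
The plan is to analyze the two phases of the algorithm separately and then combine them. In Phase~1 the first layer is trained on the sparse sub-dataset $(X_1,Y_1)$ and must align its weights with the support $S$ of the target parity; in Phase~2 the second layer is trained on the full mixed dataset $(X,Y)$ and, viewed as a linear problem in the features produced by Phase~1, admits a clean convex-optimisation analysis. The signal that drives Phase~1 is the quantity $L_\mu = \mu^{k-1}-\mu^{k+1}$ already appearing in the statement, which is exactly the advantage of a coordinate $j\in S$ over a coordinate $j\notin S$ under the sparse distribution $\cD_\mu$.

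\textbf{Phase 1 (alignment of the first layer).} At initialisation $w^{0}=0$ every hidden unit outputs the constant $\sigma(b_i)$, so the network output is independent of $x$, and on the active side of the hinge the subgradient of $L_{\rm cov}$ with respect to $w_{ij}$ is proportional to $-a_i\sigma'(w_i\cdot x^s+b_i)\,x_j^s\,(y^s-\bar y)$. Under $\cD_\mu$ with $y=\chi_S(x)$ one has $\E[x_j \chi_S(x)]=\mu^{k-1}\mathds{1}\{j\in S\}+\mu^{k+1}\mathds{1}\{j\notin S\}$, while $\bar y\approx \mu^k$ on $(X_1,Y_1)$; subtracting $\bar y\,\E[x_j]=\mu^{k+1}$ yields a population drift of exactly $L_\mu$ on coordinates in $S$ and of $0$ on coordinates outside $S$. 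I would then establish by induction on $t\leq T_1$ that, thanks to the projection and to the bias scale $\Delta$, (a) no $\sigma'(w_i^t\cdot x+b_i)$ ever flips from $\sigma'(b_i)$, (b) the clipped gradient coincides with the true gradient, and (c) the drift identity above continues to hold. A martingale concentration bound applied to each scalar walk $t\mapsto w_{ij}^t$ (whose increments have magnitude $\tilde O(4\kappa+\tau)$ from minibatch fluctuations, injected noise, and clipping) would then produce, with probability at least $1-6Nd^{-C^*}$, a clean separation: $w_{ij}^{T_1}=\Theta(T_1\gamma_1\,\kappa\,L_\mu)$ for $j\in S$, while $w_{ij}^{T_1}$ stays inside an $\tilde O(\sqrt{T_1}\,\gamma_1(4\kappa+\tau))$ band around $0$ for $j\notin S$. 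The chosen $T_1,\gamma_1$ are precisely those that make the ratio of these two ranges scale as $\sqrt{\log d}$.

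\textbf{Phase 2 (convex fit on the frozen features).} Once $w^{T_1}$ is aligned with $S$ in the sense above, the feature map $\phi(x)_i := \sigma(w_i^{T_1}\cdot x + b_i)$ depends on $x$ essentially only through $\sum_{j\in S} x_j$, a quantity taking $k+1$ distinct values and determining $\chi_S(x)$. Because there are $N\geq k+1$ hidden units with distinct biases of scale $\Delta$, one can interpolate the $\pm 1$ values of $\chi_S$ at these $k+1$ points with some $a^{\ast}$ whose norm I would bound by the constant $C_1$ of the statement, giving population covariance loss $\leq \epsilon/4$ on $\cD_{\rm mix}$. Phase~2 then runs noisy-SGD on a function that is convex (piecewise linear) in $a$, with bounded subgradients (from the range $A$ and the bound $|\phi(x)_i|\leq \Delta+1$) and bounded noise. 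Standard convex-SGD analysis balancing $\|a^{\ast}-a^{0}\|^2/(\gamma_2 T_2)$ against $\gamma_2$ times the stochastic-gradient second moment delivers empirical covariance loss $\leq \epsilon$ after exactly the $T_2$ and $\gamma_2$ of the statement. A union bound over the two phases finishes the proof.

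\textbf{Main obstacle.} The delicate step is Phase~1: one must control $Nd$ correlated noisy scalar trajectories while simultaneously guaranteeing that no ReLU flips, that the cap at $A$ is inactive throughout, and that replacing the empirical mean $\bar y$ by its population value $\mu^k$ only costs lower-order terms. These three requirements are what dictate the interlocked constraints $\kappa\leq 1/(N(2\Delta d+2))$, $\tau\leq \kappa$, and $\rho\leq \Delta\sqrt{\log d}/(4N\sqrt d)$ in the theorem. The bound on $\rho$ is in turn used through concentration of the Hamming weight to certify $|X_1|\geq T_1 B$, so every minibatch in Phase~1 consists of fresh samples and the martingale argument remains clean.
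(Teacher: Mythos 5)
Your proposal follows essentially the same route as the paper's proof: a drift-plus-martingale decomposition for the first-layer dynamics with signal $L_\mu$ (Azuma-type concentration keeping off-support weights near zero while the projection pins on-support weights near $\Delta$), followed by an interpolation argument for the frozen features and standard convex-SGD convergence for the second layer. One small misattribution: the upper bound $\rho\le\frac{\Delta}{4N}\frac{\sqrt{\log(d)}}{\sqrt{d}}$ is used in the paper's Phase-2 approximation lemma to control the mean shift $\rho\mu\|\hat w_i\|_1$ of the irrelevant coordinates under $\cD_{\rm mix}$, whereas having enough sparse samples for Phase 1 is guaranteed by the lower bound on $m$.
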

The following corollary is a direct consequence of Theorem~\ref{thm:martingale_firstlayer} and Proposition~\ref{prop:covariance_loss}.
\begin{corollary} \label{cor:positive}
     Let the assumptions of Theorem~\ref{thm:martingale_firstlayer} be satisfied and assume that $\tau = O( \kappa) $ and $\Delta,k $ constants. There exists a $2$-layer $\ReLU$ network of size at most $(d+2)(k+1)$ and initialization invariant to permutations of the input neurons, such that for all $\epsilon>0$ the layer-wise curriculum noisy-SGD algorithm with gradient range $A = \tilde O(1)$, after at most $T=\tilde O(d)$ steps of training with bounded learning rates, outputs a network such that 
    \begin{align}
        \pr_{x \sim \cD_{\rm mix} }(\sign(\NN(x;\theta^T)) =f(x)) \geq 1-\epsilon,
    \end{align}
    with high probability.
\end{corollary}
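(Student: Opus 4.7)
The plan is to specialize Theorem~\ref{thm:martingale_firstlayer} to the smallest admissible architecture and then apply Proposition~\ref{prop:covariance_loss}. Set $N = k+1$, so that the $2$-layer $\ReLU$ network has $d(k+1)$ first-layer weights, $k+1$ first-layer biases and $k+1$ second-layer weights, for a total of at most $(d+2)(k+1)$ parameters. The initialization $w_{ij}^0 = 0$, $a_i^0 = \kappa$, together with a uniform bias scale $\Delta$, is trivially invariant under permutations of the input coordinates, as the corollary requires. With $k,\Delta = \theta(1)$ and $\tau \leq \kappa$ (a WLOG consequence of $\tau = O(\kappa)$), the horizons in the theorem become $T_1 = \tilde O(d)$ and $T_2 = \tilde O(d/\epsilon^2)$, so $T = T_1+T_2 = \tilde O(d)$, while $\gamma_1 = O(1/(d\log(d)^3))$ and $\gamma_2 = O(\epsilon \log(d)/d)$ are bounded (in fact vanishing) learning rates. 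Because the projection step of Algorithm~\ref{alg:layer_wise_cl_sgd} keeps first-layer weights of order $1$ and the covariance loss has gradient of order $\poly(N,\log d) = \tilde O(1)$ on inputs in $\{\pm 1\}^d$, a clip at $A = \tilde O(1)$ does not truncate the trajectory, so Theorem~\ref{thm:martingale_firstlayer} applies verbatim.

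Applying the theorem with target empirical covariance loss $\epsilon' \leq \epsilon/4$ yields, with probability $1 - 6Nd^{-C^*} = 1 - o(1)$, that $\frac{1}{m}\sum_{s\in[m]} L_{\rm cov}(\theta^{T_1+T_2}, x^s, y^s) \leq \epsilon'$. To translate this to a classification bound I invoke Proposition~\ref{prop:covariance_loss}. The population mean of the labels is $\E_{x \sim \cD_{\rm mix}}[\chi_S(x)] = \rho\mu^k = \tilde O(d^{-1/2}) = o(1)$, and Hoeffding gives $|\bar y| = o(1)$ w.h.p.\ on the dataset; hence for $d$ large, each misclassified sample contributes at least $2(1-|\bar y|) \geq 1$ to the covariance loss, so the empirical $0$-$1$ error of $\sign(\NN(\cdot;\theta^T))$ on $(X,Y)$ is at most $\epsilon'$.

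To pass from empirical to population classification error as in the statement, I use the ``fresh sample'' slack built into the theorem: since $m \geq 4 C^* \log(3d)\, d\rho^{-1} B = \tilde\Omega(d^{3/2} B)$ and the optimizer only touches $B(T_1+T_2) = \tilde O(d B/\epsilon^2)$ samples, there remains a large holdout subset that is never inspected by the algorithm, is i.i.d.\ from $\cD_{\rm mix}$, and is independent of $\theta^T$. A direct Hoeffding on this holdout turns the empirical $\epsilon'$ bound into a population bound of $\epsilon' + o(1)$. Choosing $\epsilon' = \epsilon/2$ and $d$ large enough delivers $\pr_{x \sim \cD_{\rm mix}}(\sign(\NN(x;\theta^T)) = f(x)) \geq 1-\epsilon$ with high probability, as claimed.

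The only non-routine point is the last paragraph. Strictly speaking, Theorem~\ref{thm:martingale_firstlayer} bounds the loss on all of $(X,Y)$, which also includes the visited training samples; I sidestep this by restricting the Hoeffding step to the never-visited portion of $(X,Y)$, where freshness is preserved. An alternative would be a uniform-convergence bound over the class of 2-layer $\ReLU$ networks with $O(d)$ parameters and $O(1)$ weight norms maintained by the projection, which yields the same asymptotics; the choice does not affect the claimed $T = \tilde O(d)$ complexity or the constants in the probability of success.
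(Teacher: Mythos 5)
Your proposal is correct and follows essentially the same route as the paper, which simply combines Theorem~\ref{thm:martingale_firstlayer} (with $N=k+1$, giving the $(d+2)(k+1)$ parameter count and a permutation-invariant initialization) with Proposition~\ref{prop:covariance_loss}. The holdout argument in your last paragraph is a legitimate fix but is not needed: although the theorem is stated as an empirical bound, its proof in the appendix actually establishes the population bound $L_{\rm cov}(\theta^{T_1+T_2},f,\cD)\leq\epsilon$ via Lemma~\ref{lem:fitting_2ndlayer} and Theorem~\ref{thm:convergence_SGD_martingale}, so the population version of Proposition~\ref{prop:covariance_loss} (with $|\E_{x\sim\cD_{\rm mix}}\chi_S(x)|=\rho\mu^k=o(1)$, and noting the per-sample lower bound for a misclassified point is $1-|\bar y|$ rather than your $2(1-|\bar y|)$, which changes nothing asymptotically) applies directly.
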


\begin{remark} \label{rem:cov_loss}
    The choice of the covariance loss is motivated by a simplification of the proof of Theorem~\ref{thm:martingale_firstlayer}. Indeed, with this loss we can show that in the first part of training the weights incident relevant coordinates move non-negligibly, while the other weights stay close to zero (see proof outline). If we used another loss, say the hinge loss, all weights would move non-negligibly. However, we believe that one could still adapt the argument to cover the hinge loss case, by adding another projection to the iterates. 
    In the following, we also give a positive result for the hinge loss, with a different proof technique, that holds for large batch size and large learning rate (Theorem~\ref{thm:hinge_loss_positive}). 
\end{remark}

\begin{remark}
%     Furthermore, while we show separation in the number training steps with SGD (or noisy-SGD) between curriculum and standard training, we need to notice that,
% % our results do not establish a computational separation. In fact,
While we show that with appropriate hyperparameters, the layer-wise curriculum-SGD can learn $k$-parities in $\tilde O(d)$ steps, we need to notice that
to implement the curriculum, one has to first retrieve $(X_1,Y_1)$. This involves computing the Hamming weight of $d/\rho$ samples, which adds computational cost if the curriculum learning includes this part. Nevertheless, in the experiments that we performed, the recovery of $(X_1,Y_1)$ was consistently efficient, with an average of less than $1$ second running time for a dataset of $10$ millions samples of dimension $100$. On the other hand, the reduction of training steps due to the curriculum yields significant time savings.
% Moreover, in the experiments section, we show separation in the sample complexity required to learn with and without curriculum. {\color{purple}While Theorem~\ref{thm:positive_informal} further gives an upper bound in the sample complexity needed to learn with curriculum, there is no corresponding lower bound for learning with off-line SGD without curriculum.}
\end{remark}
\begin{remark}
    It is worth noting that Theorem~\ref{thm:martingale_firstlayer} provides an upper bound on the sample complexity required to learn using a curriculum, of $\tilde \theta(dB/\rho)$ for SGD with batch size $B$. Consequently, while the number of iterations needed for learning remains unaffected by $\rho$, the sample complexity increases as $\rho$ decreases. 
    Additionally, we do not currently have a lower bound on the sample complexity for learning parities with standard training with offline SGD. However, our experiments in Section~\ref{sec:experiments} demonstrate a noticeable difference in the sample complexity required for learning with and without a curriculum. These findings may encourage future work in establishing a theoretical advantage in the number of samples needed to learn with curriculum.
\end{remark}

\paragraph{Proof Outline.} The proof of Theorem~\ref{thm:martingale_firstlayer} follows a similar strategy as in~\cite{arous2021online,abbe2023sgd,tan2019online}. We decompose the dynamics into a drift and a martingale contributions, and we show that the drift allows to recover the support of the parity in $\tilde O(d)$ steps, while in the same time horizon, the martingale contribution remains small, namely of order $O(1/\sqrt{d \log(d)}$. In particular, we show that after the first phase of training, with high probability, all weights corresponding to coordinates in the parity's support are close to $\Delta$ and the others are close to zero. The fit of the second layer on the whole dataset corresponds to the analysis of a linear model, and follows from standard results on convergence of SGD on convex losses (e.g. in~\cite{shalev2014understanding}). The formal proof can be found in appendix.
% Thus, at time $T_1$ the target parity belongs to the linear span of the hidden units, and training the second layer to 

% \begin{proposition}[First layer training] 
% % Let $\Delta,\kappa >0$ be constants and let $L_\mu:= \mu^{k-1} - \mu^{k+1}$. Assume the network is initialized . For any constant $C>0$, there exist constants $C_0,C_1,C_2,C_3$ such that if 
% Under the assumptions of Theorem~\ref{thm:martingale_firstlayer}, for all $C^*$ there exist $C_1,C_2$ such that with probability $ 1-6Nd^{-C^*}$, for all $i \in [N]$:
% % \begin{align*}
% %     T_1 = \frac{\Delta^2 d}{L_\mu^2}, \qquad \gamma_1 = \frac{L_\mu }{C_0 \Delta \kappa \log(d)^{C_1} d},
% % \end{align*}
% % with probability at least $1- N d^{-C}$, for all $i \in [N]$:
% \begin{enumerate}
%     \item For all $j \in [k]$: $| w_{ij}^{T_1} - \Delta | \leq \frac{C_1}{\sqrt{d \log(d)}}$;
%     \item For all $j \not\in [k]$: $|w_{ij}^{T_1}| \leq \frac{C_2}{\sqrt{d \log(d)}}$.
% \end{enumerate}
% \end{proposition}

\paragraph{Hinge Loss.} While the use of the covariance loss simplifies considerably the proof of Theorem~\ref{thm:martingale_firstlayer}, we believe that the advantages of curriculum extend beyond this specific loss. Here, we present a positive result that applies to a more commonly used loss function, namely the hinge loss. 
\begin{theorem}[Hinge Loss]
\label{thm:hinge_loss_positive}
Assume we have access to a dataset $(X,Y) = \{ (x^s,y^s)\}_{s \in [m]} $, with $x^s \overset{iid}{\sim} \cD_{\rm mix} $, with parameters $\rho>0, \mu \in (0,1)$, and $y^s =\chi_S(x) $, with $|S| =k$. Let $\NN(x;\theta) = \sum_{i=1}^N a_i \sigma(w_ix + b_i) $ be a 2-layers fully connected network with activation $\sigma(y) := \Ramp(y) $ and $N = \tilde  \theta(d^2 \log(1/\delta))$. 
% Consider training $\NN(x;\theta) $ with Noisy-GD on the hinge loss with full batch and noise level $\tau = \tilde \theta(\frac{\epsilon \mu^k \log{1/\delta}}{d^6})$. 
Then, for any $\epsilon>0$, there exists an initialization, a learning rate schedule, such that the layer-wise curriculum SGD on the hinge loss with batch size $B = \tilde \theta(d^{10}/\epsilon^2\log(1/\delta))$, noise level $\tau = \tilde \theta(\frac{\epsilon \mu^k \log{1/\delta}}{d^6})$, $T_1=1 $ and $T_2 = \tilde \theta(d^6 /\epsilon^2)$
with probability $1-3\delta$, outputs a network such that 
    \begin{align}
        \pr_{x \sim \cD_{\rm mix}}(\sign(\NN(x;\theta^T)) = f(x)) \geq 1-\epsilon.
    \end{align}  
% with $T_1=1$ 
% and a layerwise  strategy such that the noisy-GD algorithm (Def.~\ref{def:noisyGD})  with noise level $\tau = \tilde \theta(\frac{\epsilon \mu^k \log{1/\delta}}{d^6})$ after $T = \tilde \theta(d^6 /\epsilon^2)$ steps of training, with probability $1-3\delta$, outputs a network such that 
%     \begin{align}
%         \pr(\NN(x;\theta^T) = f(x)) \geq 1-\epsilon.
%     \end{align}
\end{theorem}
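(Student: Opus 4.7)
I would split the argument along the two phases. Since $T_1 = 1$ and the batch size $B = \tilde \theta(d^{10}/\epsilon^2)$ is enormous, Phase~1 can be treated as a one-step population-gradient argument in the spirit of~\cite{cornacchia2023mathematical,abbe2022merged}; after this step the first layer is frozen, so Phase~2 becomes $\tilde \theta(d^6/\epsilon^2)$ iterations of SGD on a convex hinge-loss objective in the second layer, which I would handle by standard stochastic convex-optimization bounds.

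For Phase~1 I would initialize $w_i^0 = 0$ and take the second-layer weights $a_i^0 \in \{\pm \kappa\}$ in balanced pairs so that $\NN(x;\theta^0) \equiv 0$ and every hinge subgradient is active. The per-sample first-layer gradient then reads $\nabla_{w_{ij}} L(\theta^0,y,x) = -y\, a_i^0\, \sigma'(b_i^0)\, x_j$. Under $x \sim \cD_\mu$ with $y = \chi_S(x)$, one computes $\E[y\, x_j] = \mu^{k-1}$ when $j \in S$ and $\mu^{k+1}$ when $j \notin S$, so the mean-field update of $w_i$ is $-\gamma_1 a_i^0 \sigma'(b_i^0)\bigl(\mu^{k-1}\mathbf{1}_S + \mu^{k+1}\mathbf{1}_{[d]\setminus S}\bigr)$. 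A Hoeffding bound combined with a union over the $Nd = \tilde O(d^4)$ clipped coordinates, together with the SGD noise $\tau = \tilde \theta(\epsilon\mu^k/d^6)$, would show that the empirical update deviates from its mean in $\ell_\infty$ norm by at most $\tilde O(\epsilon/d^5)$ with probability $1-\delta$. Consequently, on every $x \in \{\pm 1\}^d$, the post-step preactivation equals $w_i^1 \cdot x + b_i^0 = c_i\, u + c_i'\, v + b_i^0 + \tilde O(\epsilon/d^4)$, where $u := \sum_{j\in S} x_j$, $v := \sum_{j\notin S} x_j$, $c_i = \Theta(\gamma_1 \kappa \mu^{k-1})$, and $|c_i'/c_i| = \mu^2$.

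For Phase~2 the first-layer features are frozen and the hinge loss is convex and $\tilde O(\poly(d))$-Lipschitz in the second-layer parameters. I would lay out the $N = \tilde \theta(d^2)$ biases $b_i^0$ on a grid covering the bounded range of $c_i u + c_i' v$ so that linear combinations of the ramp features $\phi_i(x) = \sigma(c_i u + c_i' v + b_i^0)$ approximate any Lipschitz function of $(u,v)$ to accuracy $\epsilon/8$ with polynomially-bounded coefficients. Since $\chi_S(x)$ depends only on $u$, there exists $a^*$ with $\|a^*\|_2 = \poly(d)$ whose population hinge loss under $\cD_{\rm mix}$ is at most $\epsilon/4$. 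Projected SGD with step size $\gamma_2 = \tilde \theta(\epsilon/\poly(d))$ run for $T_2$ iterations converges in expectation to within $\epsilon/4$ of this minimum by standard results (Theorem~14.11 in~\cite{shalev2014understanding}), and Markov's inequality converts the expected hinge loss into classification error at most $\epsilon$ with probability $1-\delta$. A union bound over the three $\delta$-events (Phase~1 gradient concentration, Phase~2 SGD convergence, final accuracy concentration) yields the $1-3\delta$ guarantee.

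\paragraph{Main obstacle.} The most delicate step will be controlling the residual off-support component $c_i' \sim \mu^2$ of each first-layer weight, which, summed against $x\in\{\pm 1\}^d$, injects a $\tilde O(\mu^2 \sqrt d)$ dependence on the irrelevant sum $v$ into every feature $\phi_i$. The Phase~2 bias grid must be rich enough to let linear combinations of ramps cancel this $v$-dependence while reproducing the parity's $u$-dependence with unit hinge margin, which is what forces $N = \tilde \theta(d^2)$. Matching the empirical-gradient concentration scale $\tilde O(\epsilon/d^5)$, the SGD noise level $\tau$, and this margin requirement together drive the $B = \tilde \theta(d^{10}/\epsilon^2)$ batch size and the $T_2 = \tilde \theta(d^6/\epsilon^2)$ iteration count stated in the theorem; the remaining ingredients (subgradient existence for the hinge loss, Hoeffding concentration on $\{\pm 1\}^d$, and accuracy-from-hinge-loss conversion) are routine.
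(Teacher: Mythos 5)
Your overall architecture matches the paper's: a single population-gradient step on $\cD_\mu$ with a huge batch (so the empirical gradient concentrates), followed by freezing the first layer and running standard convex SGD on the hinge loss for the second layer, concluded via the Shalev-Shwartz--Ben-David convergence bound. The gradient computation $\E_{x\sim\cD_\mu}[y x_j]=\mu^{k-1}$ on the support and $\mu^{k+1}$ off the support, and the Hoeffding/union-bound control of the deviation, are exactly what the paper does.

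However, there is a genuine gap in your Phase-2 representation step. After the first step every neuron has (up to the concentration error) the \emph{same} weight direction, so every feature is a ramp of the single scalar $z = c\,u + c'\,v$ with $c'/c=\mu^2$. Linear combinations of such features can only produce functions of $z$; they cannot ``approximate any Lipschitz function of $(u,v)$'' as you claim, and no enrichment of the bias grid changes this. The question is therefore whether $\chi_{[k]}$, which oscillates with period $2$ in $u$, is a function of $z=u+\mu^2 v$ at all: for generic $\mu$ it is not (e.g.\ if $\mu^2=1/2$ the inputs $(u,v)$ and $(u+2,v-4)$ give identical preactivations for every neuron but opposite labels), so the ``cancellation of the $v$-dependence'' you defer to the bias grid is impossible by choice of second-layer weights alone. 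The paper resolves this with a specific trick you are missing: it trains on $\cD_\mu$ with $\mu=\sqrt{1-\tfrac{1}{2(d-k)}}$ (together with a diverging first-step learning rate $\gamma_0=2N\mu^{-(k-1)}$ and a bias rescaling), so that $z=(u+v)-\tfrac{v}{2(d-k)}$ with $\tfrac{v}{2(d-k)}\in[-\tfrac12,\tfrac12]$; since $u+v$ is an integer of fixed parity, $z$ determines both $u+v$ and $v$, hence $u$. An exact representation lemma (Lemma 6 of the cited curriculum paper) then writes $\chi_{[k]}$ \emph{exactly} as a combination of $\Ramp$ features over a grid of $(d+1)(d-k+1)$ biases with $\|a^*\|_\infty\le 4(d-k)$; the width $N=\tilde\theta(d^2\log(1/\delta))$ is a coupon-collector requirement that every grid point be hit by some randomly initialized bias, not an approximation-theoretic one. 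Without fixing $\mu$ (or adding a second, independent weight direction) your outline does not yield a valid hypothesis class for Phase 2.
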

As opposed to Theorem~\ref{thm:martingale_firstlayer}, Theorem~\ref{thm:hinge_loss_positive} only holds for SGD with large batch size and unbounded learning rate schedules.
The proof of Theorem~\ref{thm:hinge_loss_positive} follows a 1-step argument, similarly to~\cite{cornacchia2023mathematical}, and it is deferred to the appendix.

\section{Negative Result for Standard Training}
% For the negative result, 
\label{sec:negative}
% \enote{change to noisy-GD with uniform noise}
In this Section, we present a lower bound for learning $k$-parities on the mixed distribution of eq.~\eqref{eq:mixed_distr}, without curriculum.
\begin{theorem}[Main Negative Result] \label{thm:negative}
    Assume we have access to a dataset $(X,Y) = \{ (x^s,y^s)\}_{s \in [m]} $, with $x^s \overset{iid}{\sim} \cD_{\rm mix} $, with parameters $\rho=o_d(1), \mu \in (0,1)$, and $y^s =\chi_S(x) $, with $|S| =k$.
    % Let $f(x) = \chi_S(x) $, for some $S$ such that $|S| =k$. Let $\cD$ be a mixed distribution, as defined in~\eqref{eq:mixed_distr}, with $\mu \gg d^{-1/4}$.  
    Let $\NN(x;\theta)$ be a fully-connected neural network of size $P$, with initialization that is invariant to permutation of the input neurons. Then, the noisy-SGD algorithm with noise level $\tau$, gradient range $A$, batch size $B$, any learning rate schedule and any loss function, after $T$ steps of training without curriculum, outputs a network such that 
    \begin{align}
        \pr(\sign(\NN(x;\theta^{T})) = f(x)) \leq \frac{1}{2} + \frac{T P A}{\tau} \cdot \left( {d \choose k}^{-1} + C_k \rho^2 \mu^{4k} + \frac{1}{B} \right)^{1/2},
    \end{align}
    where the probability is over $x \sim \cD_{\rm mix} $ and any randomness in the algorithm, and where $C_k$ is a constant that depends on $k$.
\end{theorem}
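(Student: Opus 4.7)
The plan is to carry out a statistical-query-style lower bound in the spirit of~\cite{AS20,abbe2021power,abbeINAL,abbe2022non-universality} by comparing the noisy-SGD trajectory under the true target $\chi_S$ with one under an independent random target $\chi_{S'}$, $S'\sim\Unif\binom{[d]}{k}$. The permutation-invariance of the initialization guarantees that $\theta^0$ is distributed identically under every $S$, so all information about $S$ must be built up through the $T$ stochastic updates and is capped by the available signal-to-noise budget.

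For any two fixed targets $S,S'$, let $\cL_S$ denote the joint law of the trajectory $(\theta^t)_{t=0}^T$. I would couple $\cL_S$ and $\cL_{S'}$ by reusing the same mini-batches and the same noise realizations $Z^t$. Each clipped per-coordinate gradient lies in $[-A,A]$ and the noise is $\Unif[-\tau,\tau]$, so the per-coordinate one-step TV between the two coupled updates is at most $|g_i^{S,t}-g_i^{S',t}|/(2\tau)$. Summing over the $P$ parameters, telescoping across the $T$ steps through a standard hybrid/chain-rule argument for TV distance, and combining the trivial bound $|g_i^{S,t}-g_i^{S',t}|\le 2A$ with Cauchy--Schwarz yields an estimate of the form
\[
\TV(\cL_S,\cL_{S'}) \;\le\; \frac{T\,P\,A}{\tau}\,\sqrt{\,\E\bigl[\,c_t(S,S')\,\bigr]\,}\,,
\]
where $c_t(S,S')$ is an averaged (over parameters and randomness) normalized squared correlation between the target-dependent parts of the two stochastic gradients at step $t$. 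The task reduces to bounding $\E_{S,S'}\E\,c_t(S,S')$.

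The gradient depends on the label $y=\chi_S(x)$ only through a factor that is Lipschitz in $y$ with constant $\le A$, so each coordinate of the population gradient is essentially $\E_x[h_i(\theta,x)\,\chi_S(x)]$ plus an $S$-free contribution. Under $\cD_{\rm mix}$,
\[
\E_x[\chi_S(x)\chi_{S'}(x)] = (1-\rho)\,\mathds{1}\{S=S'\} + \rho\,\mu^{|S\triangle S'|},
\]
so the averaged squared population-gradient difference is essentially the \emph{cross-predictability} of degree-$k$ parities on $\cD_{\rm mix}$. This decomposes into three contributions: (i) a diagonal term of size $O\bigl(\binom{d}{k}^{-1}\bigr)$ from the uniform component of the mixture; (ii) an off-diagonal term of size $O(\rho^2\mu^{4k})$, since for $S\neq S'$ of common size $k$ one has $|S\triangle S'|\ge 2$ and the dominant contribution comes from pairs with symmetric difference of order $2k$; and (iii) a mini-batch variance term of size $O(1/B)$ from the fluctuation of the empirical gradient around its population mean on the common batch. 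Combining these yields the stated bound, and a final Le Cam / data-processing step converts TV into the classification guarantee: by permutation symmetry, the random classifier $\sign(\NN(x;\theta^T))$ averaged over $S'$ agrees with $\chi_S$ on only half the inputs in expectation, so its true-target success probability exceeds $1/2$ by at most the TV distance above.

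The main obstacle I anticipate is tracking the $1/B$ term rigorously: one must couple the empirical mini-batch gradients for the two targets on the same samples, cleanly separate the population-gradient difference (giving the $\binom{d}{k}^{-1}+C_k\rho^2\mu^{4k}$ cross-predictability contributions) from the sample-level fluctuation (giving $O(1/B)$), and propagate both through the per-step TV bound without breaking the signal-to-noise balance. Additional care is needed to handle the clipping operator and the mixture structure of $\cD_{\rm mix}$ when splitting the cross-predictability computation into its sparse and dense components.
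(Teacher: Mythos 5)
Your proposal is correct and follows essentially the same route as the paper: an SQ-style lower bound via the cross-predictability of degree-$k$ parities under $\cD_{\rm mix}$, yielding the diagonal term $\binom{d}{k}^{-1}$, the off-diagonal term $O(\rho^2\mu^{4k})$ dominated by disjoint pairs, and the $1/B$ mini-batch term, followed by the permutation-invariance argument to pass from the average over targets to each fixed parity. The only difference is that you sketch re-deriving the trajectory-comparison/TV step from scratch, whereas the paper imports it wholesale as Theorem~3 of~\cite{AS20}.
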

\begin{remark}
    For the purposes of Theorem~\ref{thm:negative}, the network can have any fully connected architecture and any activation such that the gradients are well defined almost everywhere. Furthermore, the loss can be any function that is differentiable almost everywhere. 
    % The initialization can be from any distribution that is invariant to permutation of the input neurons. 
\end{remark}
Theorem~\ref{thm:negative} follows from an SQ-like lower bound argument, and builds on previous results in~\cite{AS20}[Theorem 3]. We defer the formal proof to the appendix. Theorem~\ref{thm:negative} implies the following corollary.
\begin{corollary} \label{cor:negative}
    Under the assumptions of Theorem~\ref{thm:negative}, if $\rho =\tilde O( d^{-3-\delta})$ and $k\geq 6$, any fully connected network of size at most $\tilde O(d)$ with any permutation-invariant initialization and activation, trained by the noisy-SGD algorithm with batch size $ B=\tilde \Omega( \rho^{-2})$, noise-level $ \tau = \tilde O(1/ d) $ and gradient range $A = \tilde O(1)$, after $T =\tilde O(d)$ steps of training, will output a network such that 
    \begin{align}
        \pr(\sign(\NN(x;\theta^T) ) = f(x) )\leq  \frac{1}{2}+ C d^{-\delta}\log(d)^c, 
\end{align} for some $c,C>0$.
\end{corollary}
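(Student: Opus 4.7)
The plan is to substitute the hypothesized parameter rates directly into the probability bound of Theorem~\ref{thm:negative} and simplify. First I would bound the multiplicative prefactor $TPA/\tau$; then I would bound each of the three summands inside the square root and combine via the elementary inequality $\sqrt{x+y+z}\leq\sqrt{3\max(x,y,z)}$.

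For the prefactor, combining $T = \tilde O(d)$, $P = \tilde O(d)$, $A = \tilde O(1)$, and $1/\tau = \tilde O(d)$ (which is how one should read the noise-level hypothesis, in line with the informal statement $\tau = \tilde\Omega(d^{-1})$) gives
\begin{align}
\frac{TPA}{\tau} \leq d^{3}\log^{c_1}(d)
\end{align}
for some $c_1 > 0$. Hence it suffices to show that the expression under the square root is $\tilde O(d^{-6-2\delta})$, so that its square root contributes $\tilde O(d^{-3-\delta})$ and the overall product is $\tilde O(d^{-\delta}\log^{c}(d))$, matching the claim.

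I then bound the three summands in turn. For the binomial contribution, $\binom{d}{k}^{-1}\leq k!/(d(d-1)\cdots(d-k+1)) = \tilde O(d^{-k})$; under $k \geq 6$, taken sufficiently large to dominate the exponent $6+2\delta$, this is $\tilde O(d^{-6-2\delta})$. For the covariance-like term, $C_k\rho^2\mu^{4k}\leq C_k \rho^2 = \tilde O(d^{-6-2\delta})$, since $\mu \in (0,1)$, $C_k$ depends only on the fixed $k$, and $\rho^2 = \tilde O(d^{-6-2\delta})$ by the hypothesis $\rho = \tilde O(d^{-3-\delta})$. For the batch-size term, $1/B \leq \tilde O(\rho^2) = \tilde O(d^{-6-2\delta})$ by $B = \tilde\Omega(\rho^{-2})$. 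Combining the three via the above inequality bounds the square root by $\tilde O(d^{-3-\delta})$, and multiplying by the prefactor finishes the argument.

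There is no real obstacle here: the corollary is a direct arithmetic consequence of Theorem~\ref{thm:negative}, and the only care needed is tracking the polylogarithmic factors that are absorbed into the constants $C$ and $c$ of the final bound.
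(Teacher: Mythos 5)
Your overall approach is the intended one: the paper offers no separate proof of this corollary, treating it as a direct substitution of the stated parameter rates into the bound of Theorem~\ref{thm:negative}, exactly as you do, and your reading of the noise-level condition as $1/\tau = \tilde O(d)$ (i.e.\ $\tau = \tilde\Omega(d^{-1})$, as in the informal statement) is the right one --- otherwise the prefactor $TPA/\tau$ would be unbounded.

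The one step that does not follow from the stated hypotheses is your treatment of the binomial term. You write that $\binom{d}{k}^{-1} = \tilde O(d^{-k})$ is $\tilde O(d^{-6-2\delta})$ ``under $k \geq 6$, taken sufficiently large to dominate the exponent $6+2\delta$'' --- but $k$ is given by the hypothesis, not chosen by you, and the hypothesis only guarantees $k \geq 6$. For $k = 6$ the term is $\Theta(d^{-6})$, its square root is $\Theta(d^{-3})$, and after multiplying by the prefactor $\tilde O(d^{3})$ the final bound is $\Theta(\mathrm{polylog}(d))$, which is vacuous rather than $O(d^{-\delta}\log(d)^{c})$. The argument as you run it requires $k \geq 6 + 2\delta$ (for instance $k \geq 7$ together with $\delta \leq 1/2$), or else the conclusion must be weakened to $C d^{-\min(\delta,\,(k-6)/2)}\log(d)^{c}$. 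This imprecision is arguably present in the corollary statement itself (the condition $k\geq 6$ appears to be carried over from Theorem~\ref{thm:negative_informal}), but your proof should state the extra requirement on $k$ explicitly rather than smuggling it in with ``taken sufficiently large''. The remaining two terms, $C_k\rho^{2}\mu^{4k}$ and $1/B$, are handled correctly.
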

In particular, in Corollary~\ref{cor:negative}, the network can have any width and depth (even beyond $2$ layers), as long as the total number of parameters in the network scales as $\tilde O(d)$ in the input dimension. Combining Corollary~\ref{cor:positive} and Corollary~\ref{cor:negative}, we get that if $\rho =\tilde O( d^{-3-\delta}) $, for some $\delta >0$, and $k\geq 6$,  and for $\epsilon =1/10$, say, a $2$-layer fully connected network can learn any $k$ parities up to accuracy $1-\epsilon$ in $T=\theta(d)$ steps with curriculum, while the same network in the same number of steps without curriculum, can achieve accuracy at most $1-2\epsilon$, for $d$ large enough.
\begin{remark}
    While our results show a first separation in the number of training steps between curriculum and standard training, we do not claim that the given range of $\rho$ is maximal. For instance, one could get a tighter negative bound by considering Gaussian, instead of uniform, noise in the noisy-SGD iterates. However, getting a positive result with curriculum for Gaussian noise would require a more involved argument. Our experiments show separation in the number of training steps and in the sample complexity for different values of $\rho$.
\end{remark}

% \begin{corollary}[Separation.]
    
% \end{corollary}

\section{Experiments} \label{sec:experiments}
In this section, we present empirical results demonstrating the benefits of the proposed curriculum method in settings not covered by our theoretical results.\footnote{Code: \url{https://github.com/aryol/parity-curriculum}}
% More specifically, we show that using the curriculum method not only reduces the convergence time but also reduces the sample complexity. 
% In addition, we focus on more common settings for the training of neural networks: we train all layers together and we use standard initialization. 
We use a multi-layer perceptron (MLP) with 4 hidden layers of sizes $512$, $1024$, $512$, and $64$ as our model. We optimize the model under the $\ell_2$ loss using mini-batch SGD with batch size 64. We train all layers simultaneously. Each of our experiments is repeated with 10 different random seeds and the results are reported with $95\%$ confidence intervals. We present results on other architectures and loss functions in Appendix \ref{app:experiments}. 
\begin{figure}[t]
     \centering
     \begin{subfigure}[b]{0.32\textwidth}
         \centering
         \includegraphics[width=\textwidth]{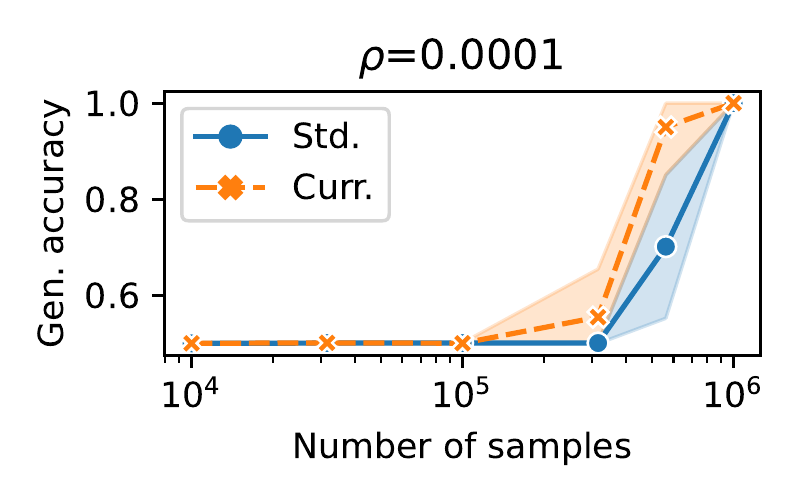}
     \end{subfigure}
     \hfill
     \begin{subfigure}[b]{0.32\textwidth}
         \centering
         \includegraphics[width=\textwidth]{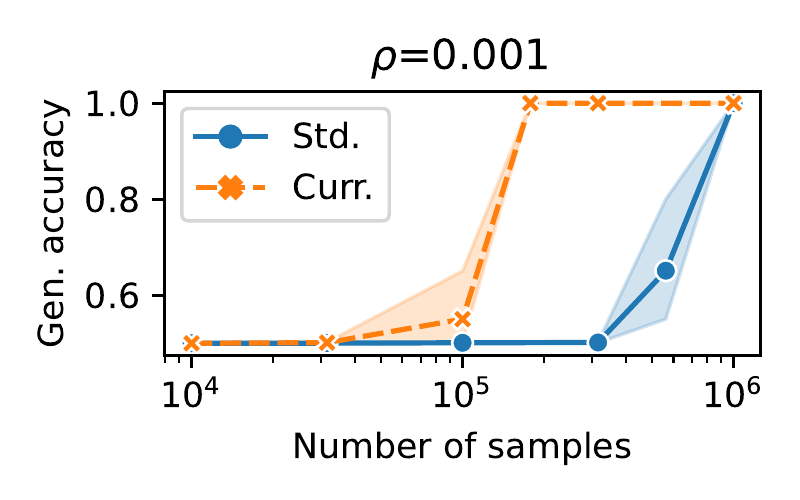}
     \end{subfigure}
     \hfill
     \begin{subfigure}[b]{0.32\textwidth}
         \centering
         \includegraphics[width=\textwidth]{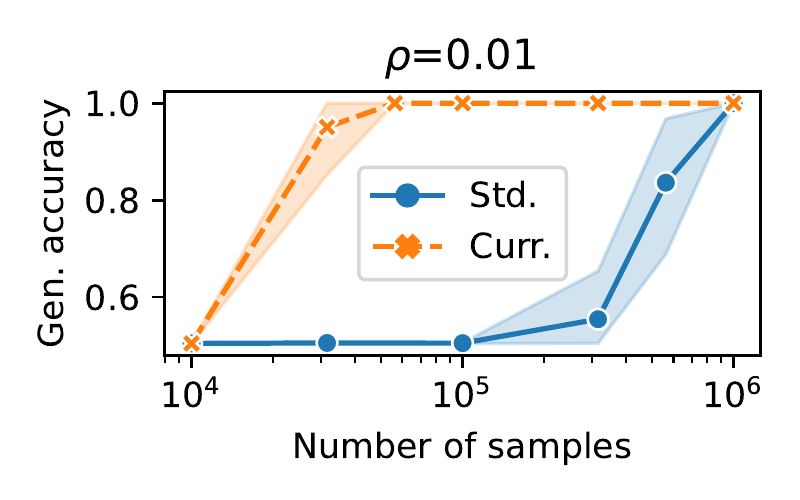}
     \end{subfigure}
        \hfill
     \begin{subfigure}[b]{0.32\textwidth}
         \centering
         \includegraphics[width=\textwidth]{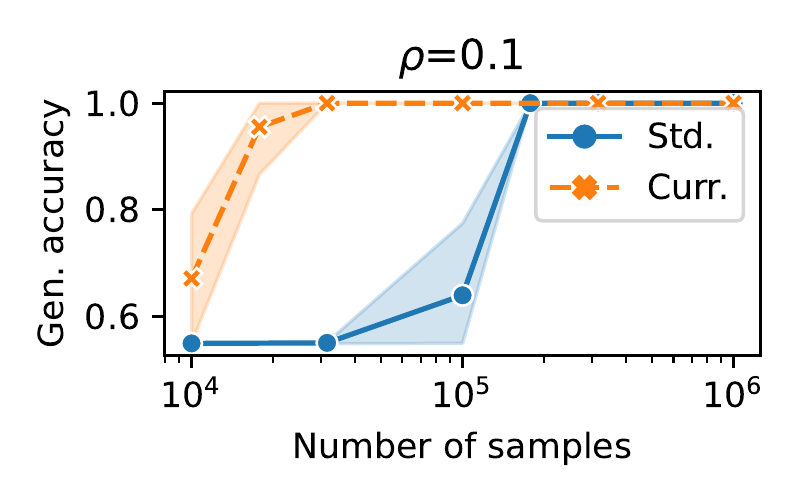}
     \end{subfigure}
     \hfill
     \begin{subfigure}[b]{0.32\textwidth}
         \centering
         \includegraphics[width=\textwidth]{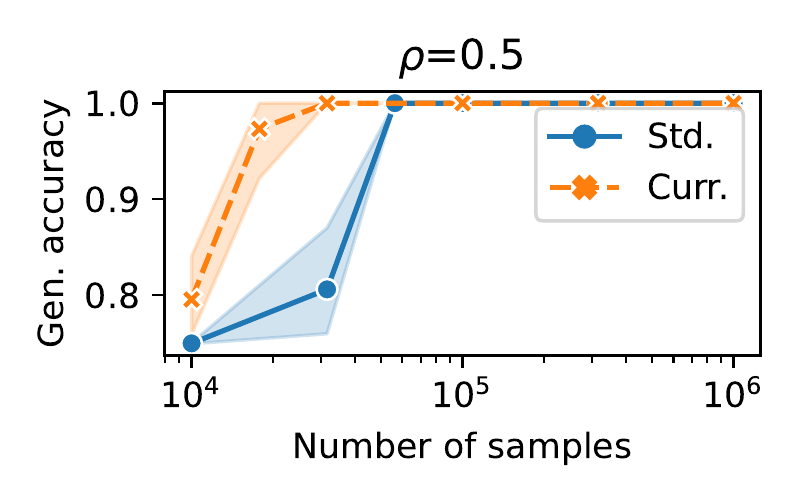}
     \end{subfigure}
     \hfill
     \begin{subfigure}[b]{0.32\textwidth}
         \centering
         \includegraphics[width=\textwidth]{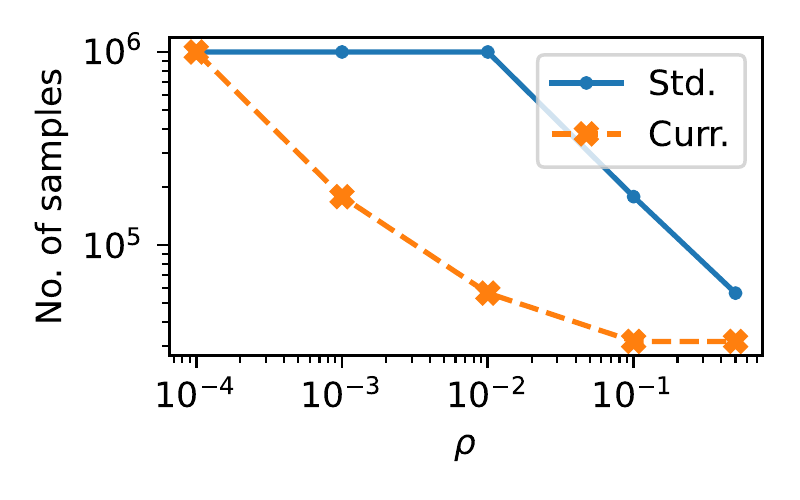}
         % \label{fig:parity5-rho-sample-complexity}
     \end{subfigure}
    \caption{Comparison of the performance of the curriculum strategy and standard training for different training set sizes and values of $\rho$. In the bottom-right plot we report the number of samples needed to achieve accuracy near to $1$ for different values of $\rho$, based on the other plots.}
    \label{fig:parity5-rho-samples}
\end{figure}

\paragraph{Number of Samples.} First, we focus on the difference in sample complexity, between curriculum and standard training. We consider learning $f(x_1, \ldots, x_{100}) = x_1x_2x_3x_4x_5$, with inputs sampled from a mixed distribution with $\mu = 0.98$ and different values of $\rho= 10^{-4},10^{-3},10^{-2},0.1,0.5$. For each $\rho$, we train the network on training sets of different sizes (between $10^4$ and $10^6$) with and without curriculum. When using the curriculum, we first select the sparse samples as those with Hamming weight smaller than $(\frac{1}{2} -\frac{\mu}{4})d$, and we train on those until convergence (i.e. training loss smaller than $10^{-2}$). We then train the model on the entire training set until convergence. Similarly, when the curriculum is not used, the model is trained using all the available samples until convergence. Figure~\ref{fig:parity5-rho-samples} shows the validation accuracy achieved for different values of $\rho, m$.
% When using the curriculum, we first select the sparse samples by checking for which samples the fraction of $+1$ bits is more than $\frac{1}{2} + \frac{\mu}{4}$. The model is then trained on these samples until the training loss becomes less than $10^{-2}$. Afterward, in the second phase of the training, we train the model on the entire training set until the training loss reaches below $10^{-3}$. Similarly, when the curriculum is not used, the model is trained using all the available samples until training loss goes below $10^{-3}$. 
% Consider learning $f(x_1, \ldots, x_{100}) = x_1x_2x_3x_4x_5$, with inputs sampled from a mixed distribution with $\mu = 0.98$ and different values of $\rho$. Figure \ref{fig:parity5-rho-samples} shows the accuracy of the model for different values of $\rho$ and different training set sizes whether the curriculum is used or not. 
It can be seen that for some values of $\rho$, there is a reduction in the number of samples needed to achieve validation accuracy near to $1$, when the curriculum is used. 
% Note that if we were learning this function using uniform inputs, we could learn it using $10^6$ samples without curriculum. This is the same number of samples needed to achieve accuracy near $1$ for $\rho \in \{10^{-2}, 10^{-3}, 10^{-4}\}$ if the curriculum is not used. 
For the purpose of visualization, in Figure~\ref{fig:parity5-rho-samples} (bottom-right), we sketch the number of samples needed to achieve accuracy close to $1$ for different values of $\rho$, based on the previous plots. 
% Based on data points from these plots, we extract the plot of sample complexity based on $\rho$. 
We notice that this sample complexity decreases as $\rho$ is increased, for both curriculum and standard training. 
Among the $\rho$ that we tried, the gap between the two training strategies is maximal for $\rho = 0.01$, and it decreases as $\rho $ gets larger or smaller.
% This is indeed consistent with the intuition given by theory. 
Also, note that for small values of $\rho$ the sample complexity for standard training is constant and equals $10^6$. We note that with $10^6$ samples, we can learn $f(x)$ even with uniform inputs ($\rho = 0$). 
% On the other hand, it can be seen that the sample complexity increases with the decrease of $\rho$ when the curriculum is applied. This is also consistent with the intuition since the number of sparse samples is proportional to $\rho$ and the curriculum needs a minimal amount of sparse samples to work. 
% Thus, we get a range of $\rho$ for which there is a gain in sample complexity when the curriculum is employed.  
\begin{figure}[t] 
     \centering
     \begin{subfigure}[t]{0.32\textwidth}
     \centering
     \includegraphics[width=\textwidth]{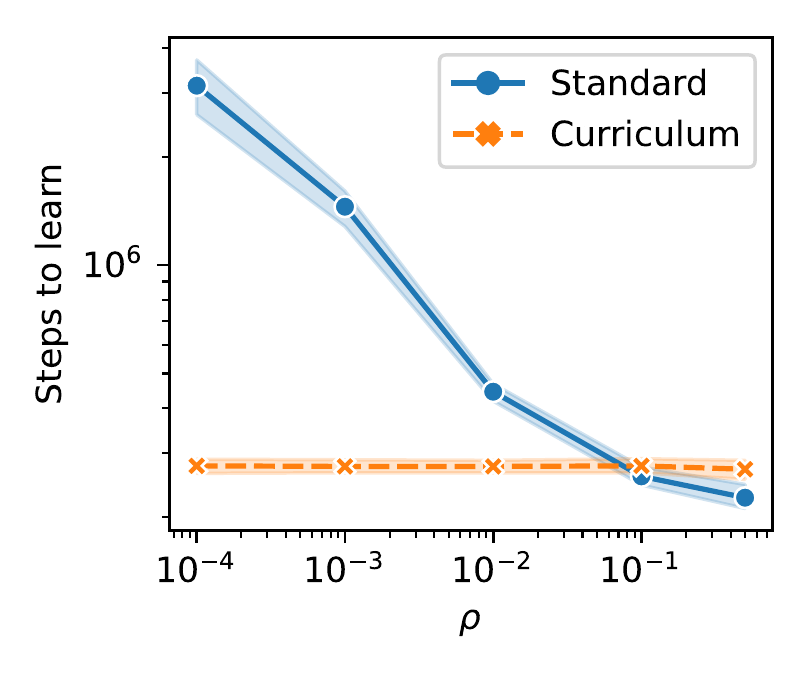}
     % \caption{Number of iterations}
     % \label{fig:parity5-rho-computation-complexity}
     \end{subfigure}
     \hfill
     \begin{subfigure}[t]{0.32\textwidth}
         \centering
         \includegraphics[width=\textwidth]{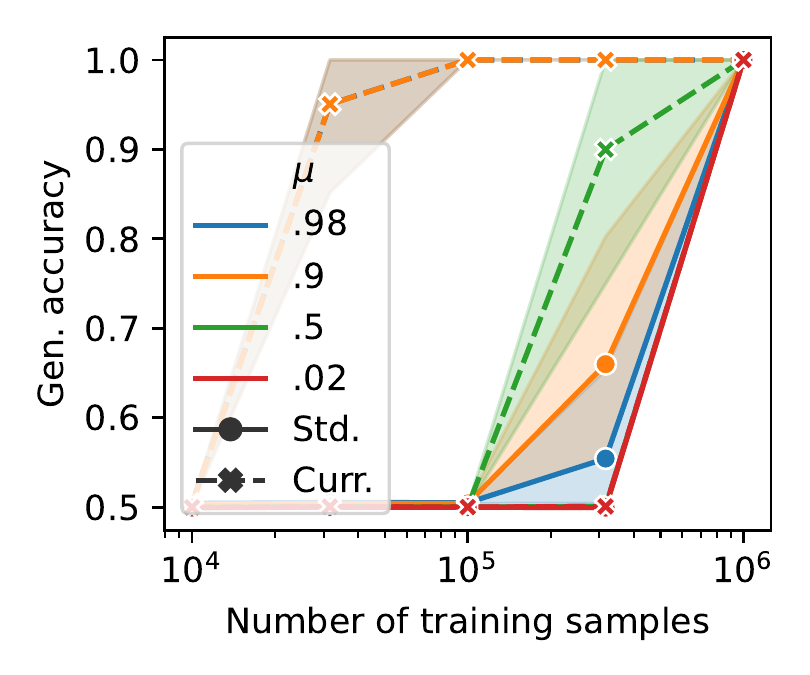}
         % \caption{Dependency on $\mu$}
         % \label{fig:parity5-mu-sample}
     \end{subfigure}
     \hfill
     \begin{subfigure}[t]{0.32\textwidth}
         \centering
         \includegraphics[width=\textwidth]{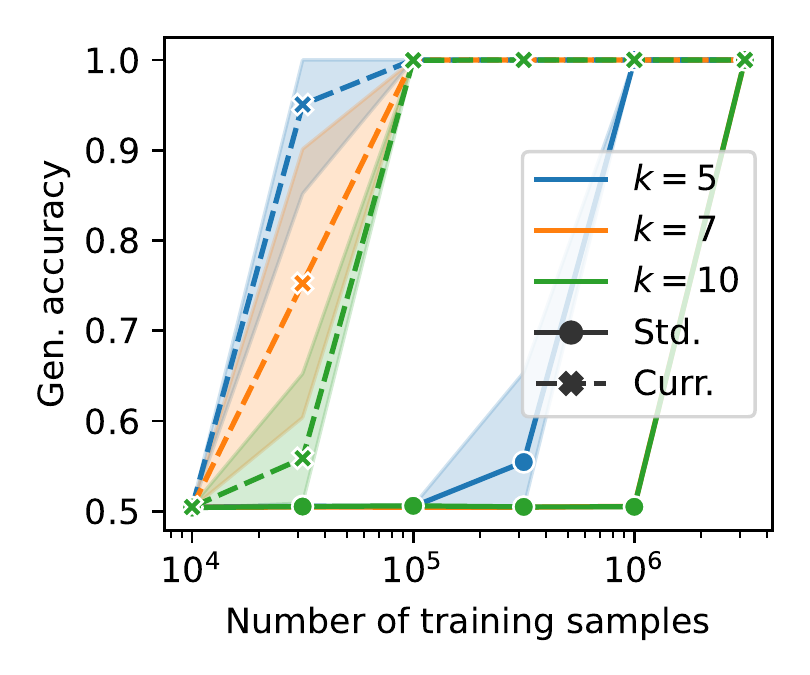}
         % \caption{Dependency on the degree of parity, $k$}
         % \label{fig:parity-different-degree-sample}
     \end{subfigure}
    \caption{(Left) Number of training steps needed for learning $5$-parity for different values of $\rho$. The gap between curriculum and standard training decreases and $\rho$ increases.
    % In standard training, the number of steps increases with the decrease of $\rho$, while it remains almost constant with the curriculum resulting in gain of the curriculum for small values of $\rho$. 
    (Middle) Performance for different values of $\mu$ and different sample sizes. Curriculum boosts the learning for large values of $\mu$, while the performance of standard training depends mildly on $\mu$. (Right) Performance for different parity functions. The benefit of curriculum potentially increases with the degree of the parity.} 
    \label{fig:parity5-computation-and-mu-and-parity}  
    % fig, ax = plt.subplots(1, figsize=(3.6 * 0.90, 3 * 0.90), dpi=320)
    % (dashed blue is covered by dashed orange, also, dashed red and solid green are covered by solid red)
\end{figure}

\paragraph{Number of Training Steps.} Now we focus on the number of iterations needed for learning the same $f$ as above. In order to study the number of iterations independently of the number of samples, we draw fresh mini-batches at each step. When using the curriculum, we sample from $\mathcal{D}_\mu$ in the initial phase and from $\mathcal{D}_{\mathrm{mix}}$ in the second phase, while when using standard training we sample from the mixed distribution at each step. In all cases, we train until convergence. The number of iterations needed for learning $f$ for different values of $\rho$ is depicted in Figure \ref{fig:parity5-computation-and-mu-and-parity} (left). It can be seen that the number of training step needed for learning with curriculum is constant w.r.t. $\rho$. In contrast, for standard training, the number of iterations increases with the decrease of $\rho$. Both these observations are consistent with our theoretical results. In particular, for small $\rho$, the curriculum allows to reduce significantly the number of training steps, while for large $\rho$ it provides little or no benefit or, for $\rho=0.5$, it is harmful. 
% {\color{purple}Thus, for small enough $\rho$, using the curriculum training also reduces the number of optimization steps.}

% \begin{figure}[h]
%      \centering
%      \begin{subfigure}[b]{0.4\textwidth}
%          \centering
%          \includegraphics[width=\textwidth]{figures/plt2_1.pdf}
%          \caption{Sample complexity}
%          \label{fig:parity5-rho-sample-complexity}
%      \end{subfigure}
%      \hfill
%      \begin{subfigure}[b]{0.4\textwidth}
%          \centering
%          \includegraphics[width=\textwidth]{figures/plt2_2.pdf}
%          \caption{Number of iterations}
%          \label{fig:parity5-rho-computation-complexity}
%      \end{subfigure}
%     \caption{}
%     \label{fig:parity5-rho-computation-and-sample-complexity}
% \end{figure}
\paragraph{Dependency on $\boldsymbol{\mu}$ and $\boldsymbol{k}$.}
Next, we investigate the dependency of our results on $\mu$ and on the parity degree $k$. For evaluating dependency on $\mu$, we consider the function $f$ as above and $\rho=0.01$. We compare the performance of the curriculum and standard training for different training set sizes and different $\mu$. The results are shown in Figure \ref{fig:parity5-computation-and-mu-and-parity} (middle). It can be seen that for large values of $\mu$ such as $0.98$ and $0.90$, there is a gap in the number of samples needed for achieving generalization accuracy close to $1$ between curriculum and standard training. On the other hand, if the value of $\mu$ is small, e.g., $\mu = 0.02$, then there is no difference between the two methods. Indeed, for $\mu$ small, $\cD_\mu$ is close to $\cD_u$, thus all samples in the dataset are dense. To analyze the effect of the parity  degree, $k$, we fix $\mu = 0.98$ and the input dimension $d=100$, and we compare the performance of the model trained with different amounts of training data. The results are portrayed in Figure \ref{fig:parity5-computation-and-mu-and-parity} (right). One can see that, while without curriculum the parity problem becomes harder to learn as $k$ increases, the complexity with curriculum has mild dependence on $k$ and it allows to learn all the parities considered with $10^5$ samples. The plots exhibiting the dependence of the number of optimization steps on $\mu$ and $k$ are reported in Appendix \ref{app:experiments}.

\paragraph{Beyond Parities.}
Finally, we study target functions composed of multiple monomials. We consider three examples in 100-dimensional space: (i) $f_{\mathrm{left}}(x) = x_1x_2x_3x_4x_5 + \frac{1}{2}x_1x_2x_3x_4x_5x_6$, (ii) $f_{\mathrm{middle}}(x) = x_1x_2x_3x_4x_5 + \frac{1}{2}x_6\cdots x_{11}$, and (iii) $f_{\mathrm{right}}(x) = \frac{1}{2}(x_1x_2 + x_1\cdots x_{6})$. We also fix $\mu = 0.98$. In Figure~\ref{fig:different-functions}, we investigate the gain of curriculum method in sample complexity for $\rho=0.01$ (top row) and number of training steps (bottom row) needed to learn. The plots in the bottom row are obtained by training the network with fresh batches. The left, middle, and right column correspond to $f_{\mathrm{left}}$, $f_{\mathrm{middle}}$, and $f_{\mathrm{right}}$, respectively. 
% In the bottom-left and in the bottom-right plots, we report the number of iterations needed to learn the target up to error close to zero. On the other hand, in the bottom-middle plot we report the number of iterations needed to weakly-learn $f_{\rm middle}$ (i.e. the time needed to learn the first monomial of $f_{\rm middle}$). 
Note that curriculum is beneficial for $f_{\mathrm{left}}$, while it hardly helps with $f_{\mathrm{right}}$. Interestingly, for $f_{\mathrm{middle}}$, curriculum is advantageous for weak learning of the function. More specifically it helps learning $x_1x_2x_3x_4x_5$ monomial of $f_{\mathrm{middle}}$. Due to this, for the corresponding iterations plot (Figure \ref{fig:different-functions} bottom middle), we plot the number of iterations needed for getting a loss below $0.26$. This suggests that the curriculum method put forward in this paper mostly helps with learning of the monomial with the lowest degree. We leave improvements of our curriculum method to future work. 

\begin{figure}[t]
     \centering
     \begin{subfigure}[b]{0.32\textwidth}
         \centering
         \includegraphics[width=\textwidth]{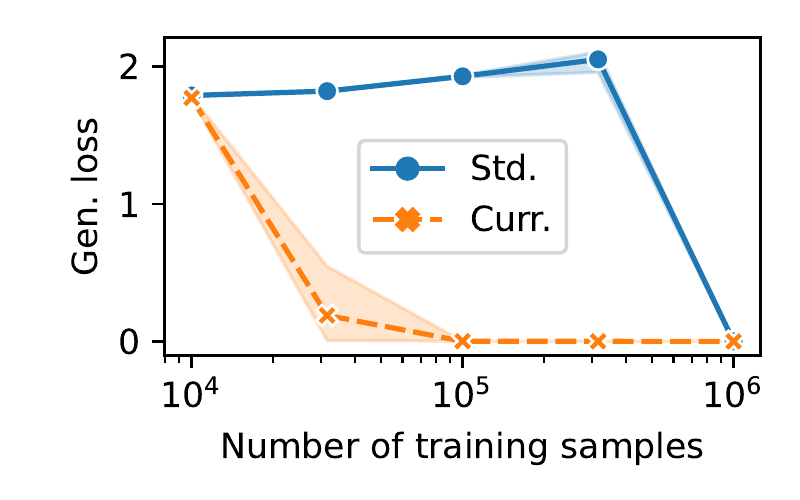}
         % \caption{Dependency on $\mu$}
     \end{subfigure}
     \hfill
     \begin{subfigure}[b]{0.32\textwidth}
         \centering
         \includegraphics[width=\textwidth]{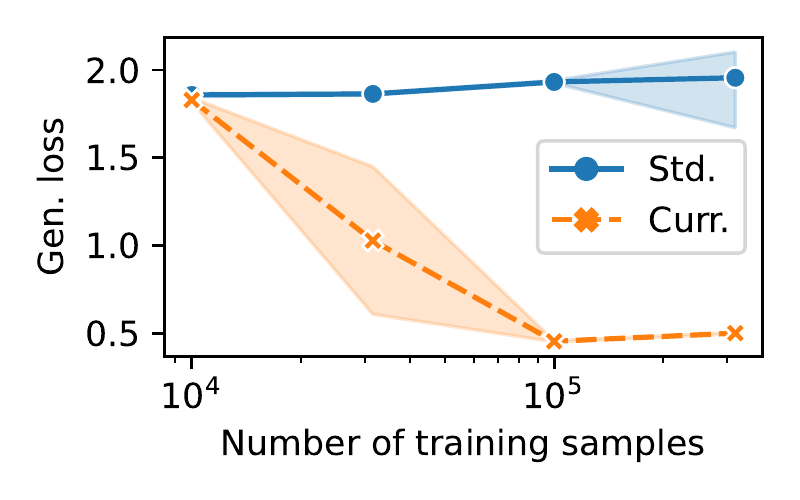}
         % \caption{Dependency on $\mu$}
     \end{subfigure}
     \hfill
     \begin{subfigure}[b]{0.32\textwidth}
         \centering
         \includegraphics[width=\textwidth]{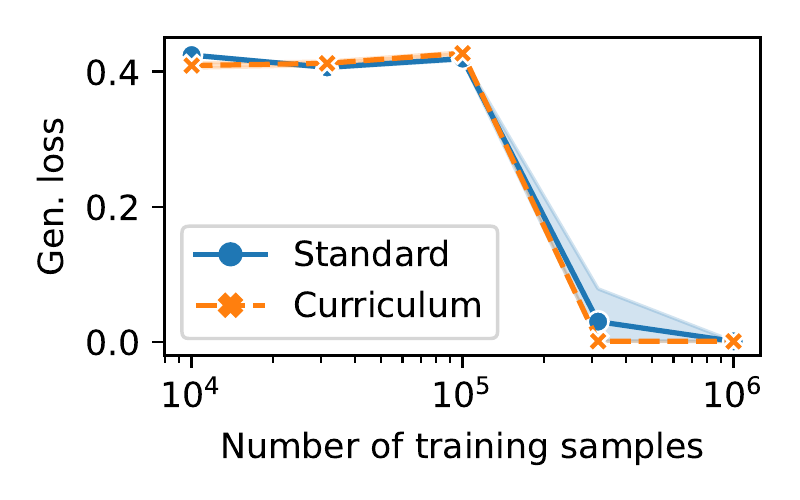}
         % \caption{Dependency on $\mu$}
     \end{subfigure}
     \hfill
     \begin{subfigure}[b]{0.32\textwidth}
         \centering
         \includegraphics[width=\textwidth]{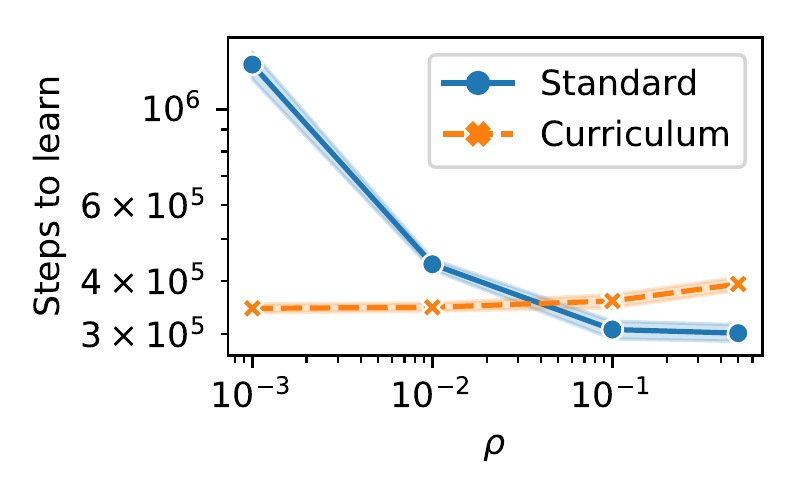}
         % \caption{Dependency on $\mu$}
     \end{subfigure}
     \hfill
     \begin{subfigure}[b]{0.32\textwidth}
         \centering
         \includegraphics[width=\textwidth]{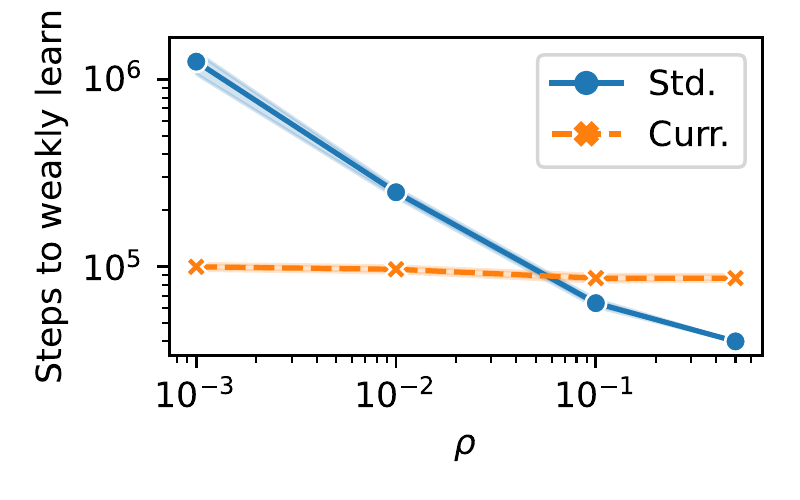}
         % \caption{Dependency on $\mu$}
     \end{subfigure}
     \hfill
     \begin{subfigure}[b]{0.32\textwidth}
         \centering
         \includegraphics[width=\textwidth]{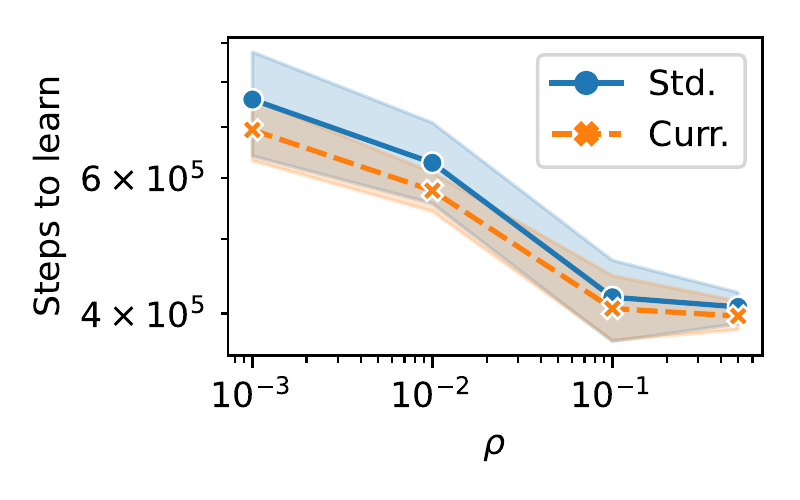}
         % \caption{Dependency on $\mu$}
     \end{subfigure}
     \hfill  
     \caption{Considering gains in sample complexity (top row) and number of steps (bottom row) beyond parity targets. Plot in the left, middle, and right column present cases in which curriculum is beneficial, is only beneficial for weak learning, and is not beneficial, respectively.}
    \label{fig:different-functions} 
\end{figure}

\section{Conclusion}
    % In this paper, we defined a curriculum strategy for learning parities on mixed distributions, and we showed that training on only sparse samples for a first phase, allows to reduce the number of training steps and the sample complexity that are necessary for learning, compared to standard training. In the experiments section, we reported few results for certain other functions with multiple monomials. While we do not have a complete picture, these results suggest that starting from sparse samples is beneficial in the initial part of training. It would be interesting to analyse whether our curriculum, or a variant of it, can help learning beyond the initial phase and beyond parities. For instance, one could consider several curriculum phases of training on samples with increasing Hamming weight, or a self-paced curriculum, where samples are selected dynamically during training.  
In this paper, we introduced a curriculum strategy for learning parities on mixed distributions. Our approach involves training on sparse samples during the initial phase, which leads to a reduction in the number of required training steps and sample complexity compared to standard training. In the experiments section, we presented some results for certain functions with multiple monomials. Although a complete picture requires further investigation,  these results suggest that starting from sparse samples provides advantages in the early stages of training. It would be interesting to investigate what variation of our curriculum strategy fits best the case of multiple monomials, beyond the initial phase. For example, one could explore multiple curriculum phases where training progresses from samples with increasing Hamming weight, or employ a self-paced curriculum that dynamically selects samples during training. Such analyses could offer valuable insights into the broader applicability of our approach. Here we focused mainly on establishing a formal separation in the number of training steps, using the canonical case of parities.

\bibliography{references}
\bibliographystyle{alpha}
\newpage
\appendix

\section{Proof of Theorem~\ref{thm:martingale_firstlayer}}

\begin{algorithm}[h]
   \caption{Layer-wise curriculum-SGD. 
   Init. scales $\kappa,\Delta>0$, learning rates $\gamma_1,\gamma_2>0$, step counts $T_1,T_2$, input param. $\rho, \mu $, batch size $B$, noise level $\tau$.}
   \label{alg:layer_wise_cl_sgd}
\begin{algorithmic}[1]
\State {\bfseries input: } Data matrix $X \in \{\pm 1 \}^{d \times n}$ and label vector $Y \in \{\pm 1 \}^{ n} $  
\State {\bfseries initialize: } For all $i \in [N], j \in [d]$, $w_{i,j}^0 = 0, a_i^0 = \kappa$, $b_i^0 = \Delta d+1$
\State $X_1 = \{ x^s: H(x^s) < d (\frac{1}{2}+\frac{\mu}{4}) \}$ and $Y_1 =  \{ y^s : x^s \in X_1 \}$ \Comment{Select sparse inputs}
\For{$t=0$ {\bfseries to} $T_1-1$ {\bfseries :}} 
\State Select unused $x^{s,t} \sim X_1$ and $y^{s,t} \sim Y_1$, $s \in [B]$. 
\State    $\forall i\in [N],j \in [d]$:
    \State $  \hspace{2em} \tilde w_{ij}^{t+1} =w_{ij}^{t} - \gamma_1 \left( \frac{1}{B}\sum_{s \in [B]} \partial_{w_{ij}^t} [L(\theta^t, x^{s,t},y^{s,t})]_A+ \Unif[-\tau,\tau] \right)$ 
     \State $  \hspace{2em} w_{ij}^{t+1} = [ \tilde w_{ij}^{t+1}]_\Delta $   \Comment{ $[.]_\Delta$: proj. on $[-\Delta, \Delta]$}
\State $\hspace{2em} b_{i}^{t+1} = b_{i}^{t} $, $a_{i}^{t+1} = a_{i}^{t} $ 
\EndFor

\State For all $i \in [N]$, $b_i^{T_1} = -1+2(i+1)\Delta $, $a_i^{T_1} =0$
% \vspace{0.2em} \newline 
\For{$t=T_1$ {\bfseries to} $T_2-1$ {\bfseries :}} 
\State Select unused $x^{s,t} \sim X$ and $y^{s,t} \sim Y$, $s \in [B]$.
    % \vspace{0.2em} \newline
\State $ \forall i \in [N]$:  
    \State $ \hspace{2em} a_{i}^{t+1} =a_{i}^{t} - \gamma_2 \left( \frac{1}{B}\sum_{s \in [B]} \partial_{a_{i}^t} [L(\theta^t, x^{s,t},y^{s,t})]_A+ \Unif[-\tau,\tau] \right)$ 
    \State $\hspace{2em} w_{ij}^{t+1} = w_{ij}^{t} $, $b_{i}^{t+1} = b_{i}^{t} $ 
\EndFor
\end{algorithmic}
\end{algorithm}

In this section, we report the proof of Theorem~\ref{thm:martingale_firstlayer}. The proof follows a similar argument used in~\cite{tan2019online,arous2021online,abbe2023sgd}: we decompose the dynamics into a drift and a martingale contribution. Our work differs from the ones above in considering a mixed input distribution and the covariance loss. For the purposes of this Section, we consider a 2-layers neural network: $\NN(x;\theta) = \sum_{i=1}^N a_i \sigma(w_ix+b_i)$, with $N = k+1$ hidden neurons and activation $\sigma :=\ReLU$. We consider the following initialization:
\begin{align}
    &w_{ij}^0 = 0,\\
    & a_i^0 = \kappa \leq \frac{1}{N (2\Delta d+2) },\\
    & b_i^0 = d \Delta + 1.
\end{align} 
We train the neural network using the layer-wise curriculum-SGD algorithm (see Algorithm~\ref{alg:layer_wise_cl_sgd} for pseudo-code). 
Without loss of generality, we assume that the target parity is $\chi_{[k]}:=\prod_{j \in [k]} x_j$. The result for other $k$-parities follows directly since the initialization and the curriculum-SGD algorithm are invariant to permutation of the input neurons. We show the following:
\begin{itemize}
    \item With a large enough training set, there are enough sparse inputs, which can be identified by computation of the Hamming weight.
    \item We train the first layer's weights $w_{ij}$ on the sparse data, keeping the $a_i, b_i$ fixed. We first show that with enough sparse samples, the labels'average will concentrate around $\E_{x \sim \cD_\mu} [f(x)]$, with high probability. This term is needed in the computation of the covariance loss. Then, we show that after $T_1 = \tilde O(d)$ steps, with appropriate learning rate, with high probability the target parity will belong to the linear span of the hidden units. This is obtained by showing that the weights corresponding to the first $k$ input coordinates move noticeably, while the ones corresponding the other $d-k$ coordinates remain small.
    \item We train the second layer's weights $a_i$ on the full data, keeping the first layers weights and biases fixed. Established results on convergence of SGD on covex losses allow to conclude.
\end{itemize}

\paragraph{Bounding the Number of Samples.} In the following Lemma, we show that if the training set is large enough, the fraction of sparse inputs is large, and their Hamming weights will concentrate around expectation with high probability as $d$ increases.
\begin{lemma} \label{lem:bound_samples}
    Let $\{ x^s\}_{s \in [m]}$ be $m$ i.i.d. inputs sampled from $\cD_{\rm mix}$, with parameters $\rho, \mu$. Let $X_1 :=\{ x^s : H(x^s) < d(\frac{1}{2}-\frac{\mu}{4} )\} $ and let $S_1 := \{ x^s: x^s \sim \cD_\mu \}$. If $m \geq d$ and $m \geq \frac{2 m_1}{\rho} $, for $C^*>0$ and for $d$ large enough,
    \begin{enumerate}
        \item[i)] With probability at least $1-2d^{-C^*}$, $|S_1|  \geq  m_1$.
        \item[ii)] With probability at least $1- d^{-C^*}$, $S_1 = X_1$.
    \end{enumerate}
    \begin{proof}
    \begin{itemize}
        \item[i)]  For all $s\in [m]$, $\pr(x^s \in S^1 ) = \rho$, thus, 
        \begin{align}
            \pr\Big(\sum_{s \in [m]} \mathds{1}(x^s \in S_1) \leq  m_1\Big) & =\pr\Big( \sum_{s \in [m]} \mathds{1}(x^s \in S_1)-m\rho \leq m_1 - m\rho\Big)\\
            & \overset{(a)}{\leq} \pr\Big(m\rho - \sum_{s \in [m]} \mathds{1}(x^s \in S_1) \geq \sqrt{C^*\log(d) m/2 }\Big)\\
            & \overset{(b)}{\leq } 2 \exp \left( -2 C^* \log(d) \right) \leq 2 d^{-C^*},
        \end{align}
    where in $(a)$ we used that for $d$ large enough
    \begin{align}
        \sqrt{C^* \log(d) m/2} \leq\frac{ \sqrt{d}}{2} \sqrt{m} \leq m,
    \end{align}
    and in $(b)$ we used Hoeffding's inequality.
        \item[ii)] It is enough to show that for all $x^s\in X_1$, $H(x^s ) < \bar h$, for $\bar h = d \left(\frac{1}{2}-\frac{\mu}{4} \right)$ and for all $x^s \not\in X_1$, $H(x^s)\geq \bar h$. 
        % Applying Hoeffding's inequality one more time, we get,
        Thus, for $x^s \in X_1$, 
        \begin{align}
            \pr \left( H(x^s) \geq  \bar h \right) & =  \pr \left(H(x^s) - \frac{1-\mu}{2} d  \geq \bar h - \frac{1-\mu}{2} d \right) \\
            & \leq \exp \left( -\frac{\mu^2 d}{8  } \right).
        \end{align}
    Similarly, for $x^s \not\in X_1$, $\pr \left( H(x^s) <  \bar h \right)< \exp(-C'd)$, for some $C'>0$. The result follows by union bound.     
    \end{itemize}
    \end{proof}
\end{lemma}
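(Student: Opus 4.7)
The plan is to treat the two parts of the lemma independently, both as elementary applications of concentration for sums of independent Bernoulli random variables.

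For part (i), I will unpack the mixture structure: since $\cD_{\mathrm{mix}} = \rho \cD_\mu + (1-\rho)\cD_u$, each sample $x^s$ belongs to $S_1$ (i.e.\ is drawn from the $\cD_\mu$-component) independently with probability $\rho$, so $|S_1| \sim \mathrm{Binomial}(m, \rho)$ with mean $m\rho$. The hypothesis $m \geq 2 m_1/\rho$ forces $m\rho \geq 2 m_1$, giving a safety margin $m\rho - m_1 \geq m\rho/2$. I will then apply Hoeffding's inequality to $|S_1|$ with a deviation radius of order $\sqrt{m\log(d)}$, and verify (using $m \geq d$) that this radius comfortably fits inside the margin $m\rho - m_1$. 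Specifically, picking the radius to equal $\sqrt{C^{*} m \log(d)/2}$ makes the Hoeffding exponent exactly $C^{*}\log d$, yielding the stated bound $2 d^{-C^{*}}$.

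For part (ii), the identity $X_1 = S_1$ is precisely the statement that the Hamming-weight threshold $\bar h := d(1/2 - \mu/4)$ classifies every sample correctly. For $x^s \sim \cD_\mu$, the $d$ coordinates are independent Bernoullis with $\mathbb{E}[H(x^s)] = (1-\mu)d/2$, which sits below $\bar h$ by a margin of $d\mu/4$; Hoeffding applied to the $d$ bits therefore gives $\pr(H(x^s) \geq \bar h) \leq \exp(-d\mu^2/8)$. Symmetrically, for $x^s \sim \cD_u$ the mean $d/2$ exceeds $\bar h$ by the same margin $d\mu/4$, so the same bound applies to $\pr(H(x^s) < \bar h)$. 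A union bound over the $m$ samples then gives a total failure probability of $2m \exp(-d\mu^2/8)$, which drops below $d^{-C^{*}}$ once $d$ is large enough, since $\mu = \Theta(1)$ and $m$ is polynomial in $d$ in the regime of interest (recall $m \leq \tilde O(d/\rho)$ with $\rho$ at most polynomially small in $d$).

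Neither step is technically demanding; the only bookkeeping point is matching the concentration rates to the quantitative hypotheses. Part (i) asks that the Hoeffding radius for the mixture-indicator sum fit inside the gap $m\rho - m_1$, which is exactly what the hypotheses $m \geq d$ and $m \geq 2 m_1/\rho$ jointly guarantee. Part (ii) asks that the per-sample concentration rate $\exp(-\Omega(d\mu^2))$ absorb the union bound over $m$ samples, which is automatic once $\mu$ is bounded away from zero. I do not anticipate a genuine obstacle here — the lemma is really an ``inputs have been correctly filtered'' sanity check, and both parts reduce to routine Chernoff/Hoeffding calculations.
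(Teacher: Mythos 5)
Your proposal is correct and follows essentially the same route as the paper's proof: part (i) is Hoeffding applied to the $\mathrm{Binomial}(m,\rho)$ count $|S_1|$ with radius $\sqrt{C^{*}m\log(d)/2}$ absorbed by the margin $m\rho-m_1\geq m\rho/2$, and part (ii) is per-sample Hoeffding on the $d$ coordinates giving $\exp(-\mu^2 d/8)$ followed by a union bound over the $m$ samples. If anything you are slightly more careful than the paper in noting that the union bound in (ii) needs $m$ to be at most polynomial (or subexponential) in $d$, a point the paper leaves implicit.
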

In the rest of the proof, we will assume that the event in Lemma~\ref{lem:bound_samples} holds.
\subsection{First Layer Training}
For the first training phase, we only use the sparse samples. 
\paragraph{Computing $\boldsymbol{\bar y_1}$.}
As a first step, we need to estimate the average of $y$ over the sparse set. To that purpose, we sample one batch of size $B_1$ from $ X_1$. We denote by $\bar y_1$ such estimate
\begin{lemma} \label{lem:compute_ybar}
    Let $\bar y_1 = \frac{1}{B_1} \sum_{s \in [B_1]} y^s$. If $B_1 \geq 2 C^* \log(d)/\zeta^2 $, with probability at least $1-d^{-C^*}$, 
    \begin{align}
        \Big|\bar y_1 - \mu^k \Big| \leq \zeta.
    \end{align}
\end{lemma}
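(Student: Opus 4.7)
The plan is a direct application of Hoeffding's inequality to a sum of i.i.d.\ bounded random variables. First, I would condition on the high-probability event of Lemma~\ref{lem:bound_samples}(ii), under which $X_1 = S_1$, so the $B_1$ samples used to form $\bar y_1$ are genuinely i.i.d.\ draws from $\cD_\mu$ rather than being a dataset-dependent subselection via Hamming weight. This step is what lets us treat the labels $y^s$ as i.i.d.\ copies of $\chi_{[k]}(x)$ with $x \sim \cD_\mu$.

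Next I would compute the mean. Under $\cD_\mu$ each coordinate $x_j$ is an independent $\pm 1$ random variable with $\E[x_j] = \mu$, so by independence
$$\E[y^s] \;=\; \E\Big[\prod_{j \in [k]} x_j\Big] \;=\; \prod_{j \in [k]} \E[x_j] \;=\; \mu^k,$$
which is exactly the quantity we want to concentrate around. (Without loss of generality the target parity is $\chi_{[k]}$, by the permutation invariance already invoked in the proof outline.)

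Finally, since $y^s \in \{\pm 1\}$ the summands lie in a range of length $2$, so Hoeffding's inequality applied to the $B_1$ i.i.d.\ variables gives
$$\pr\bigl(|\bar y_1 - \mu^k| \geq \zeta\bigr) \;\leq\; 2\exp\!\left(-\tfrac{B_1 \zeta^2}{2}\right).$$
Plugging in $B_1 \geq 2 C^* \log(d)/\zeta^2$ yields a bound of $2 d^{-C^*}$, and the constant $2$ is absorbed by a trivial adjustment of $C^*$ (or by noting $2 d^{-C^*} \leq d^{-C^*/2}$ for $d$ large), matching the stated $d^{-C^*}$.

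There is essentially no obstacle here; the only point that requires care is the first one, namely that the conditioning implicit in ``sampling from $X_1$'' coincides with sampling from the true $\cD_\mu$-component $S_1$ on a high-probability event, so that independence is preserved and Hoeffding applies cleanly. Everything else is routine.
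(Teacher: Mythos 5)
Your proof is correct and follows essentially the same route as the paper's: invoke Lemma~\ref{lem:bound_samples} to treat the labels as i.i.d.\ draws from $\cD_\mu$ with mean $\mu^k$, then apply Hoeffding's inequality with the stated $B_1$ to get the bound $2\exp(-B_1\zeta^2/2) \leq 2d^{-C^*}$. You even flag the same cosmetic factor-of-$2$ mismatch with the stated probability that the paper's own proof leaves implicit.
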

\begin{proof}
By construction, and by Lemma~\ref{lem:bound_samples}, $\bar y_1$ is the mean of $B_1$ i.i.d random variables sampled from $\cD_\mu$. Thus, by Hoeffding's inequality, 
        \begin{align}
            \pr(|\bar y_1 - \E_{x\sim \cD_\mu} [f(x)] | \geq \zeta ) &\leq 2 \exp \left( - \frac{ B_1 \zeta^2}{2 }\right) \leq 2 d^{-C^*}.
        \end{align}
\end{proof}
We assume that each sample is presented to the network at most once. For instance, this can be done by discarding each sample after it is used, assuming that the number of samples is large enough for covering all training steps. Denoting by $G_{w_{ij}^t}$ the gradient computed at step $t<  T_1$ for weight $w_{ij}^t$, the updates of the first layer weights are given by: 
\begin{align}
    \tilde w_{ij}^{t+1} &= w_{ij}^t - \gamma G_{w_{ij}^t}\\
     w_{ij}^{t+1} &= [\tilde w_{ij}^{t+1}]_\Delta
\end{align}
where by $[.]_{\Delta}$ we denote the projection of all coordinates that exceed $\Delta$ (resp. $-\Delta$) to $\Delta$ (resp. $-\Delta$).
Note that with our assumption on the initialization, $|\NN(x;\theta^t)| < 1$ for all $t < T_1$. Thus, the gradients at each step are given by: 
\begin{align}
    G_{w_{ij}^{t}} &= - \frac{1}{B} \sum_{s \in [B]} ( y^{s,t} - \bar y_1) \cdot \partial_{w_{ij}^t} \NN(x^{s,t}; \theta^t) + Z_{w_{ij}^t},
\end{align}

We decompose the dynamic into a \textit{drift} and a \textit{martingale} contribution: 
\begin{align}
    G_{w_{ij}^t} =  \underbrace{\bar G_{w_{ij}^t}}_{\text{drift}} + \underbrace{(G_{w_{ij}^t} - \bar  G_{w_{ij}^t})}_{\text{martingale}} ,
\end{align}
where we denoted by
\begin{align}
    \bar G_{w_{ij}^t} = \E_x G_{w_{ij}^t}.
\end{align}
the (deterministic) population gradients.

\paragraph{Drift Contribution.}
In the following, we denote by $w_{j}^t =w_{ij}^t$ for an arbitrary neuron $i \in [N]$. We first bound the drift term.
\begin{lemma} \label{lem:drift}
Let $L_\mu := \mu^{k-1}- \mu^{k+1} $. Let $\zeta>0$. If $B_1 \geq 2 C^* \log(d)/\zeta^2  $, with prob. $1-d^{-C^*}$ for all $t \leq T_1$,
\begin{align}
    & \forall j \in [k]: \quad | \bar G_{w_j^t}  + \kappa L_\mu | \leq \kappa \mu \zeta;\\
    & \forall j \not\in [k]: \quad |\bar G_{w_j^t}| \leq  \kappa \mu \zeta.
\end{align}

\end{lemma}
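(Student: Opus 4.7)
The plan is to compute the population gradient in closed form using the structure of the parity target and the biased distribution $\cD_\mu$, and then absorb the error between $\bar y_1$ and $\mu^k$ using Lemma~\ref{lem:compute_ybar}. The key simplification is that throughout the first phase every ReLU unit is active and the covariance loss is in its positive regime, so the gradient has a very clean form.

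First I would verify the two structural facts that make the gradient tractable. By construction, during the first phase $a_i^t = \kappa$ remains fixed and $|w_{ij}^t| \leq \Delta$ for every $i,j$ because of the projection onto $[-\Delta,\Delta]$. Hence for every input $x \in \{\pm 1\}^d$,
\begin{equation}
w_i^t \cdot x + b_i^t \geq -\Delta d + (\Delta d + 1) = 1 > 0,
\end{equation}
so $\sigma'(w_i^t x + b_i^t) = 1$. This gives $\partial_{w_{ij}^t}\NN(x;\theta^t) = \kappa\, x_j$. Moreover, using $N \kappa (2\Delta d + 1) < 1$ from the hypothesis on $\kappa$, one gets $|\NN(x;\theta^t)| < 1$, from which a short case analysis on $y \in \{\pm 1\}$ shows that $1 - y\bar y_1 - \NN(x;\theta^t)(y - \bar y_1) > 0$, i.e., the loss is in its active branch. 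Thus the per-sample gradient equals $-(y - \bar y_1)\kappa\, x_j$, and in expectation
\begin{equation}
\bar G_{w_{ij}^t} = -\kappa\, \E_{x \sim \cD_\mu}\bigl[(y - \bar y_1)\, x_j\bigr].
\end{equation}

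Next I would evaluate this expectation using independence of coordinates under $\cD_\mu$. Since $\E_{x \sim \cD_\mu}[x_l] = \mu$ and $x_j^2 = 1$,
\begin{align}
\E_{x \sim \cD_\mu}[\chi_{[k]}(x)\, x_j] &= \mu^{k-1} \text{ if } j \in [k], \qquad \E_{x \sim \cD_\mu}[\chi_{[k]}(x)\, x_j] = \mu^{k+1} \text{ if } j \notin [k],
\end{align}
and $\E_{x \sim \cD_\mu}[\bar y_1\, x_j] = \bar y_1\,\mu$. Hence for $j \in [k]$, $\bar G_{w_{ij}^t} = -\kappa(\mu^{k-1} - \bar y_1 \mu)$, while for $j \notin [k]$, $\bar G_{w_{ij}^t} = -\kappa(\mu^{k+1} - \bar y_1 \mu) = -\kappa\mu(\mu^k - \bar y_1)$.

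Finally I would invoke Lemma~\ref{lem:compute_ybar}: with $B_1 \geq 2C^* \log(d)/\zeta^2$, with probability at least $1 - d^{-C^*}$ we have $|\bar y_1 - \mu^k| \leq \zeta$. Plugging this in, for $j \in [k]$,
\begin{equation}
\bar G_{w_{ij}^t} + \kappa L_\mu = -\kappa(\mu^{k-1} - \bar y_1\mu) + \kappa(\mu^{k-1} - \mu^{k+1}) = \kappa\mu(\bar y_1 - \mu^k),
\end{equation}
so $|\bar G_{w_{ij}^t} + \kappa L_\mu| \leq \kappa\mu\zeta$, and for $j \notin [k]$, $|\bar G_{w_{ij}^t}| = \kappa\mu|\bar y_1 - \mu^k| \leq \kappa\mu\zeta$. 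The bounds hold uniformly in $t \leq T_1$ because $\bar y_1$ is fixed once computed and the drift above does not depend on $t$ under the active-ReLU/active-loss regime. The main (mild) obstacle is really bookkeeping: checking once and for all that the projection step, the initialization of $a_i, b_i$ and the bound on $\kappa$ together enforce both $\sigma' \equiv 1$ and the positivity of the covariance loss throughout the first $T_1$ iterations, so the population gradient has the clean linear form above.
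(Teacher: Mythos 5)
Your proposal is correct and follows essentially the same route as the paper: compute the population gradient in closed form using $\partial_{w_{ij}^t}\NN = \kappa x_j$ (active ReLU, active loss branch), evaluate $\E_{x\sim\cD_\mu}[\chi_{[k]}(x)x_j]$ as $\mu^{k-1}$ or $\mu^{k+1}$ depending on whether $j\in[k]$, and absorb $|\bar y_1-\mu^k|\le\zeta$ via Lemma~\ref{lem:compute_ybar}. You are in fact somewhat more explicit than the paper in verifying the structural preconditions (that the projection and the choice of $b_i^0=\Delta d+1$ and $\kappa\le 1/(N(2\Delta d+2))$ keep every unit active and the covariance loss in its positive branch throughout phase one), which the paper only asserts.
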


\begin{proof}
    With our initialization, for all $ t < T_1$, 
    \begin{align}
        \partial_{w_{ij}^t} \NN(x^t; \theta^t) = - \kappa x_j^t.
    \end{align}
    Thus, for all $j \in [k]$,
    \begin{align}
            \bar G_{w_j^t} &= - \kappa \left( \E_x[f(x) x_j] - \bar y \E_x[x_j ] \right)\\
            & =- \kappa  \left( \mu^{k-1} - \bar y_1 \mu \right).
    \end{align}
    Applying Lemma~\ref{lem:compute_ybar}, with probability $1-\epsilon'$,
    \begin{align}
        | \bar G_{w_j^t} + \kappa L_\mu | & = | \kappa \mu (\bar y - \mu^k) |\leq \kappa \mu \zeta.
    \end{align} 
    Similarly, for all $j \not\in [k]$,
    \begin{align}
        \bar G_{w_j^t} = -\kappa \mu (\mu^k -\bar y),
    \end{align}
    thus, applying one more time Lemma~\ref{lem:compute_ybar}, we get the result.
    % \\
    % Moreover, for all $t<T_1$:
    % \begin{align}
    %     \sign( \bar G_{w_j^t } ) &= \sign(- \E [(1-f(x) \bar y_1) f(x) x_j])\\
    %     & =
    % \end{align}
\end{proof}

\paragraph{Martingale Contribution.}
% We consider $w_{j}^t =w_{ij}^t$ for an arbitrary neuron $i \in [N]$.
For $t \in \bN$ and for all $j$, we define the martingale contribution to the dynamics up to time $t$ by
\begin{align}
    M_j^t = \sum_{s=1}^t G_{w_j^s} - \bar G_{w_j^s}.
\end{align}
We show that for appropriate time horizon and learning rate, the martingale contribution stays small.
\begin{lemma}[Lemma 4 in~\cite{abbe2023sgd}] \label{lem:martingale_part}
    Fix $T \leq C_0  d \log(d)^{C_0}$, for some $C_0>0$. For all $C^*$, there exists $C$ that depends on $C_0,C^*$ such that if:
    \begin{align}
        \gamma_1^2 T \leq \frac{1}{C (4\kappa+\tau)^2  \log(d)^2 d }, \label{eq:gamma2_T}
    \end{align}
    with probability at least $1-d^{-C^*}$:
    \begin{align}
        \max_{0 < t < T } \max_{j \in [d]} |\gamma_1 M_j^{t} | \leq \frac{1}{\sqrt{d \log(d)}}.
    \end{align}
\end{lemma}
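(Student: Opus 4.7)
The statement is a standard maximal Azuma--Hoeffding bound on the stochastic part of first-layer noisy-SGD, followed by a union bound over coordinates. I would first identify $(M_j^t)_t$ as a martingale with respect to the natural filtration $\cF_t$ generated by all batch samples and noise variables drawn through step $t-1$. This is immediate since $\bar G_{w_j^s} = \E[\,G_{w_j^s}\mid \cF_s\,]$: the batch at step $s$ is fresh, the noise $Z^s$ is independent of $\cF_s$, and the iterate $\theta^s$ is $\cF_s$-measurable.

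Next I would bound the per-step increment $|G_{w_j^s}-\bar G_{w_j^s}|$. During phase one the biases stay at $b_i^0 = d\Delta+1$ and the second-layer weights stay at $a_i^0=\kappa$. Since $|w_i^t\cdot x|\le \Delta d$ under the projection $[\cdot]_\Delta$, the ReLU is always active, so $\partial_{w_{ij}^t}\NN(x;\theta^s) = -\kappa\, x_j$ has modulus $\kappa$. Combined with $|y^{s,t}-\bar y_1|\le 2$, each per-sample summand has magnitude at most $2\kappa$, so (taking the batch average and adding the uniform-noise bound $\tau$) $|G_{w_j^s}|\le 2\kappa+\tau$ and hence $|G_{w_j^s}-\bar G_{w_j^s}|\le 4\kappa+\tau$. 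The clipping $[\cdot]_A$ is inactive provided $A\ge 2\kappa$, so it can be ignored.

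With bounded differences, the maximal Azuma--Hoeffding inequality yields
\[
\pr\!\Bigl(\max_{0<t<T}|M_j^t|> x\Bigr)\le 2\exp\!\Bigl(-\tfrac{x^2}{2T(4\kappa+\tau)^2}\Bigr).
\]
Choosing $x = (4\kappa+\tau)\sqrt{2T(C^*+1)\log d}$ drives the right-hand side below $2 d^{-(C^*+1)}$; a union bound over $j\in[d]$ then gives $\max_{j,t}|M_j^t|\le (4\kappa+\tau)\sqrt{2T(C^*+1)\log d}$ with probability at least $1-2d^{-C^*}$. Multiplying through by $\gamma_1$ and requiring this quantity to be at most $1/\sqrt{d\log d}$ rearranges exactly to $\gamma_1^2 T \le 1/\bigl(C(4\kappa+\tau)^2 d \log^2 d\bigr)$ for a suitable $C=C(C^*)$.

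There is no real obstacle here; the only item requiring care is that the hypothesis $T\le C_0 d\log(d)^{C_0}$ is used only to absorb $\log T = O(\log d)$ into the constant $C$, so that the $\log^2 d$ denominator in the final condition accommodates both the $\log d$ inside the Azuma exponent and the additional $\log d$ produced by the union bound. Once this is tracked, the remainder is a routine invocation of the maximal inequality, and the $C_0$-dependence of $C$ is explicit.
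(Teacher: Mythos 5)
Your proof is correct and follows essentially the same route as the paper: bound the martingale increments by $|G_{w_j^t}|+|\bar G_{w_j^t}|\le 4\kappa+\tau$, apply Azuma--Hoeffding, and union bound. The only difference is that you invoke the maximal form of Azuma--Hoeffding over $t$, whereas the paper applies the pointwise inequality at each $t$ and union bounds over all $t\le C_0 d\log(d)^{C_0}$ as well as over $j\in[d]$; this is why the hypothesis on $T$ genuinely enters the paper's argument (it controls the number of events in the union bound), while in your maximal-inequality version it is not actually needed, contrary to your closing remark about absorbing $\log T$.
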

\begin{proof}
Note that in our setting (Boolean inputs, $\ReLU$ activation) we have that for all $t <T_1$
\begin{align}
    |M^t_j - M_j^{t-1} | & \leq |G_{w_j^t} |+ | \bar G_{w_j^t}| \leq 4 \kappa + \tau. 
\end{align}
    Thus, by Azuma-Hoeffding inequality:
    \begin{align}
        % \pr \left(\max_{1 \leq t \leq T} | \gamma M_i^t| \geq \epsilon \right) \leq 2 \exp \left( - \frac{\epsilon^2}{32 T \gamma^2 \kappa^2} \right).
        \pr \left( | \gamma_1 M_j^t| \geq \epsilon \right) \leq 2 \exp \left( - \frac{\epsilon^2}{2 T \gamma_1^2 (4\kappa+\tau)^2} \right).
    \end{align}
    The result follows by choosing $\epsilon = 1/\sqrt{d \log(d)}$ and by union bound on all $t\leq C_0d\log(d)^{C_0} $ and $j \in [d] $.
\end{proof}

\paragraph{Bounding the Contributions to the Dynamics.}
We define the following stopping times:
\begin{align}
   & \tau_j^{\Delta} := \inf \{ t \geq 0: |w_j^{t+1} | \geq \Delta -  \gamma_1(4 \kappa+\tau)  \}\\
   & \tau^\Delta = \sup_{j \in [k]} \tau_j^\Delta.
\end{align}
We show that for $t \geq \tau_j^\Delta$, $w_j^t$ stays close to $\Delta$. 
\begin{lemma} \label{lem:bound_dyn1}
Fix $T \leq C_0  d \log(d)^{C_0}$, for some $C_0>0$. Let $\zeta>0$ and let $B_1 \geq 2C^*\log(d)/\zeta^2$. For all $C^*$, there exists $C$ such that if $\gamma_1$ satisfies~\eqref{eq:gamma2_T} and if $\zeta \leq \frac{1}{\sqrt{d \log(d)^{C_0}}}$, with probability $1-3d^{-C^*}$ for all $ j \in [d]$:
\begin{align}
    \inf_{\tau_j^\Delta <t< T } w_j^t \geq \Delta -\frac{3+ \sqrt{C_0}\mu}{\sqrt{d \log(d)}}.
\end{align}
\end{lemma}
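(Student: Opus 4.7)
The plan is to condition on three high-probability events and then analyze the post-hitting dynamics as a reflected random walk with negative drift on the potential $P_j^t := \Delta - w_j^t$. First I would union-bound so that, with probability at least $1 - 3 d^{-C^*}$, the following hold simultaneously: the empirical sparse set equals the true sparse set and contains enough elements (Lemma~\ref{lem:bound_samples}); the empirical mean obeys $|\bar y_1 - \mu^k| \leq \zeta$ (Lemma~\ref{lem:compute_ybar}, applied with $B_1 \geq 2C^*\log(d)/\zeta^2$); and $\sup_{t < T} \sup_j |\gamma_1 M_j^t| \leq 1/\sqrt{d\log(d)}$ (Lemma~\ref{lem:martingale_part}). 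Under these events, Lemma~\ref{lem:drift} yields $-\gamma_1 \bar G_{w_j^t} \geq \gamma_1 \kappa L_\mu - \gamma_1 \kappa \mu \zeta > 0$ for $j \in [k]$ and $|\gamma_1 \bar G_{w_j^t}| \leq \gamma_1 \kappa \mu \zeta$ for $j \notin [k]$.

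For $j \notin [k]$, I would show that $\tau_j^\Delta \geq T$, so that the infimum in the statement is over the empty set and the claim holds vacuously. Indeed, as long as no projection has yet been triggered the update is unclipped, so telescoping gives $|w_j^t| \leq \gamma_1 T \kappa \mu \zeta + |\gamma_1 M_j^t|$, which the hypotheses on $\gamma_1, T, \zeta$ force to stay strictly below $\Delta - \gamma_1(4\kappa+\tau)$. The same telescoping applied to $j \in [k]$ up to time $\tau_j^\Delta$ shows that $w_j^t$ is strictly positive (the drift contribution $\gamma_1 t \kappa L_\mu$ dominates both the drift error and the martingale fluctuation at the hitting scale), so the exit at $\tau_j^\Delta$ happens at the upper boundary: $w_j^{\tau_j^\Delta+1} \geq \Delta - \gamma_1(4\kappa+\tau)$.

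For $j \in [k]$ with $\tau_j^\Delta < T$, the iterate stays bounded away from $-\Delta$ by the same positive-drift control, so the projection only clips at $+\Delta$, and $P_j^t \in [0,2\Delta]$ obeys Lindley's recursion $P_j^{t+1} = (P_j^t + X_t)^+$ with $X_t := \gamma_1 \bar G_{w_j^t} + (\gamma_1 M_j^{t+1} - \gamma_1 M_j^t)$. Because the drift $\gamma_1 \bar G_{w_j^t}$ is strictly negative, taking the Lindley maximum over possible ``restart'' points $t_0 \in \{\tau_j^\Delta+1,\ldots,t\}$ gives
\begin{align}
P_j^t \;\leq\; \gamma_1(4\kappa+\tau) \;+\; \gamma_1 \kappa \mu \zeta\, T \;+\; 2 \sup_{s \leq T} |\gamma_1 M_j^s|.
\end{align}
Plugging in $\gamma_1(4\kappa+\tau) \leq 1/(\sqrt{d}\log(d))$ and $\gamma_1 T \kappa \mu \zeta \leq \sqrt{C_0}\mu/\sqrt{d\log(d)}$, both of which follow from $\gamma_1^2 T \leq 1/(C(4\kappa+\tau)^2 \log(d)^2 d)$ combined with $T \leq C_0 d\log(d)^{C_0}$ and $\zeta \leq 1/\sqrt{d\log(d)^{C_0}}$ for $C$ chosen large enough depending on $C_0, C^*$, together with the martingale bound on event (c), produces the announced $(3 + \sqrt{C_0}\mu)/\sqrt{d\log(d)}$ upper bound on $P_j^t = \Delta - w_j^t$.

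\textbf{Main obstacle.} The delicate step is the Lindley bookkeeping: because the drift is strictly negative, the supremum of $P_j^t$ over an arbitrary stretch is controlled by the worst upward excursion of the increments, and one must argue carefully that this excursion is dominated by $2\sup|\gamma_1 M|$ plus the small slack $\gamma_1 \kappa \mu \zeta\, T$ produced by the $\zeta$-error in Lemma~\ref{lem:drift}. The constants $3$ and $\sqrt{C_0}\mu$ then emerge from matching the three contributions (post-hit initial gap, accumulated drift error, martingale oscillation) against the common denominator $\sqrt{d\log(d)}$, and one additionally needs to verify that $-\Delta$ is never reached so that the recursion genuinely models a one-sided reflection.
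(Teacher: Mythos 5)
Your proposal is correct and follows essentially the same route as the paper: the Lindley/reflected-walk decomposition with a maximum over restart points is exactly the paper's choice of $s=\sup\{t'\leq t: w_j^{t'}\geq \Delta-\gamma_1(4\kappa+\tau)\}$, followed by the same three-term bound (post-hit gap $\gamma_1(4\kappa+\tau)$, accumulated drift error $\gamma_1 T\kappa\mu\zeta$, and martingale oscillation $2\sup|\gamma_1 M_j|$) matched against $1/\sqrt{d\log(d)}$. Your extra care about the $-\Delta$ projection and the vacuousness for $j\notin[k]$ is sound but not needed beyond what the paper implicitly uses.
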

\begin{proof} 
For all $j$, let $s = \sup \{t'\leq t: w_j^{t'} \geq \Delta -\gamma_1( 4 \kappa +\tau) \}$, then with probability $1-3d^{-C^*}$ for all $ j \in [d]$:
\begin{align}
    w_j^t & = w_j^s - \gamma_1 \sum_{l=s}^{t-1} \bar G_{w_j^l} + \gamma_1 ( M_j^{t-1}- M_j^s)\\
    & \overset{(a)}{\geq} \Delta - \gamma_1 (4 \kappa +\tau)- \gamma_1 t \kappa \mu \zeta - \frac{2}{\sqrt{d \log(d)}}\\
    & \overset{(b)}{\geq } \Delta - \frac{3}{\sqrt{d \log(d)}} -  \frac{\sqrt{t} \zeta \mu}{\sqrt{d\log(d)}},\\
    & \overset{(c)}{\geq } \Delta - \frac{3}{\sqrt{d \log(d)}} -  \frac{\sqrt{C_0} \mu }{\sqrt{d\log(d)}},
\end{align}
where in $(a)$ we used that by Lemma~\ref{lem:drift} with prob. $1-2d^{-C^*}$, $ \bar G_{w_j^l} < -\kappa L_\mu + \kappa \mu \zeta <\kappa \mu \zeta $ for all $l \in [t]$ and Lemma~\ref{lem:martingale_part}, in $(b)$ we used that by~\eqref{eq:gamma2_T}, $\gamma_1(4\kappa+\tau) \leq \gamma_1 (4\kappa+\tau)   \sqrt{t} \leq 1/\sqrt{d\log(d)}$, and in $(c)$ we used the assumptions on $T$ and $\zeta$. Thus, the result holds. 
\end{proof}
We will now establish a high-probability bound on the time that it takes for all weights associated with coordinates in the support of the target parity to approach proximity with $\Delta$.
\begin{lemma} \label{lem:bound_dyn2}
% Fix $T \leq C_0  d$, for some $C_0>0$. 
Let $\zeta>0$ and let $B_1 \geq 2C^* \log(d)/\zeta^2$. For all $C^*$, there exists $C$ such that if $\gamma_1$ satisfies~\eqref{eq:gamma2_T}, $\zeta \leq \frac{1}{\sqrt{d \log(d)^{C_0}}}$ and $d$ is large enough, such that $d \log(d) \geq \frac{(\sqrt{C_0}\mu +1)^2}{\Delta^2}$, with probability $1-3d^{-C^*}$:
    \begin{align}
        \tau^\Delta \leq \frac{2\Delta}{\gamma_1 \kappa L_\mu}.
    \end{align}
\end{lemma}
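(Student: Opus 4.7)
}

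The plan is to argue by contradiction on the high-probability event on which both Lemma~\ref{lem:drift} (control of the drift for $j\in[k]$) and Lemma~\ref{lem:martingale_part} (uniform bound on the martingale) hold; a union bound shows this event has probability at least $1-3d^{-C^*}$. On this event I fix an arbitrary $j\in[k]$ and suppose that $\tau_j^\Delta > T^\ast$, where $T^\ast := \lceil 2\Delta/(\gamma_1 \kappa L_\mu)\rceil$. By definition of $\tau_j^\Delta$, this means $|w_j^{t+1}| < \Delta - \gamma_1(4\kappa+\tau)$ for every $0\le t < T^\ast$. Since in one step $|\tilde w_j^{t+1}-w_j^t|\le \gamma_1(4\kappa+\tau)$ (the gradient is bounded by $4\kappa$ because all $|a_i^0|=\kappa$ and $x_j\in\{\pm 1\}$, and the injected noise is bounded by $\tau$), the unprojected iterate $\tilde w_j^{t+1}$ stays in $(-\Delta,\Delta)$ for every $t\le T^\ast$. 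Therefore the projection $[\,\cdot\,]_\Delta$ is the identity throughout this window and I may write the telescoping identity
\begin{equation*}
w_j^{T^\ast} \;=\; -\gamma_1 \sum_{s=0}^{T^\ast-1} \bar G_{w_j^s} \;+\; \gamma_1 M_j^{T^\ast-1}.
\end{equation*}

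Next I plug in the bounds already at my disposal. Lemma~\ref{lem:drift} gives $\bar G_{w_j^s} \le -\kappa(L_\mu-\mu\zeta)$ for every $s$, so the drift contribution satisfies $-\gamma_1\sum_{s=0}^{T^\ast-1}\bar G_{w_j^s} \ge \gamma_1 T^\ast \kappa(L_\mu-\mu\zeta) \ge 2\Delta\bigl(1-\mu\zeta/L_\mu\bigr)$. Lemma~\ref{lem:martingale_part} gives $\gamma_1 M_j^{T^\ast-1} \ge -1/\sqrt{d\log d}$ (using that $T^\ast$ is of the form allowed by Lemma~\ref{lem:martingale_part} — this is where the hypothesis~\eqref{eq:gamma2_T} enters). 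Combining,
\begin{equation*}
w_j^{T^\ast} \;\ge\; 2\Delta \;-\; \frac{2\Delta\mu\zeta}{L_\mu} \;-\; \frac{1}{\sqrt{d\log d}}.
\end{equation*}

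The final step is to check that the right-hand side exceeds $\Delta-\gamma_1(4\kappa+\tau)$, which would contradict the assumption $\tau_j^\Delta > T^\ast$. Dropping the (non-negative) term $\gamma_1(4\kappa+\tau)$ on the right, it suffices to show $\Delta \ge 2\Delta\mu\zeta/L_\mu + 1/\sqrt{d\log d}$. The bound $\zeta\le 1/\sqrt{d\log(d)^{C_0}}$ makes the first summand $O(\Delta/\sqrt{d\log(d)^{C_0}})$, which is $\le \Delta/2$ for $d$ large (using $\mu,L_\mu=\theta(1)$), and the standing hypothesis $d\log d \ge (\sqrt{C_0}\mu+1)^2/\Delta^2$ makes the second summand $\le \Delta/(\sqrt{C_0}\mu+1)\le \Delta/2$. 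Hence the contradiction is reached, so $\tau_j^\Delta \le T^\ast$. Taking the max over $j\in[k]$ (the union bound over this finite set is already absorbed in the uniform-in-$j$ statements of Lemmas~\ref{lem:drift}–\ref{lem:martingale_part}) yields $\tau^\Delta \le 2\Delta/(\gamma_1\kappa L_\mu)$, as desired.

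The main obstacle I anticipate is the bookkeeping that guarantees the projection has truly been inactive for the whole window $[0,T^\ast]$; one has to check that whenever $|w_j^{t+1}|<\Delta-\gamma_1(4\kappa+\tau)$, the one-step increment is too small to push $\tilde w_j^{t+1}$ past $\pm\Delta$, so that the unprojected recursion really does describe the iterate. Once this is clear, the rest is an elementary combination of the drift and martingale bounds with the hypotheses on $\zeta$ and on $d\log d$ chosen precisely to absorb the two error terms below $\Delta$.
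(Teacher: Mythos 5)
Your proposal is correct and follows essentially the same route as the paper: on the joint event of Lemmas~\ref{lem:drift} and~\ref{lem:martingale_part}, telescope the (unprojected, since the projection is inactive before the stopping time) iteration, lower-bound the drift by $\gamma_1 t\kappa(L_\mu-\mu\zeta)$ and the martingale by $-1/\sqrt{d\log d}$, and conclude that by time $2\Delta/(\gamma_1\kappa L_\mu)$ the weight must have crossed $\Delta-\gamma_1(4\kappa+\tau)$. The paper states this as a direct lower bound on $\alpha_t=\min_{j\in[k]}w_j^t$ rather than an explicit contradiction, and absorbs the drift-error term as $\sqrt{t}\zeta\mu/\sqrt{d\log d}$ via~\eqref{eq:gamma2_T}, but these are only cosmetic differences; your explicit check that the projection is the identity throughout the window is a point the paper leaves implicit.
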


\begin{proof}
Let $T \leq C_0  d \log(d)^{C_0}$, for some $C_0>0$.
    Let $\alpha_t = \min \{w_j^t : j \in [k], t \leq \tau_j^\Delta, t \leq T \}$. Then, with probability $1-3d^{-C^*}$:
\begin{align}
    \alpha_t & \geq - \gamma_1 \sum_{s=0}^{t-1} \bar G_{\alpha_s} - \gamma_1 | M_j^t| \\
    & \geq - \gamma_1 t (-\kappa L_\mu +\kappa \mu \zeta) + \frac{1}{\sqrt{d \log(d)}} \\
    & \overset{(a)}{\geq} \gamma_1 t \kappa L_\mu - \frac{\sqrt{t}\zeta \mu}{\sqrt{d \log(d)}}- \frac{1}{\sqrt{d \log(d)}}\\
    & \overset{(b)}{\geq}  \gamma_1 t \kappa L_\mu- \frac{\sqrt{C_0}\mu+1}{\sqrt{d\log(d)}},
\end{align}
where in $(a)$ we used that by~\eqref{eq:gamma2_T}, $\sqrt{t}\gamma_1 \kappa \leq 1/\sqrt{d \log(d)}$, and in $(b)$ we used the assumptions on $T$ and $\zeta$. Thus if $t\geq \frac{2\Delta}{\gamma_1 \kappa L_\mu}$ and $d$ is large enough, such that $\frac{\sqrt{C_0}\mu+1}{\sqrt{d\log(d)}}<\Delta $, we have
\begin{align}
    \alpha_t \geq 2\Delta -\Delta > \Delta - \gamma_1(4 \kappa +\tau).
\end{align}
Thus, the result holds.

\end{proof}
As our last Lemma, we show that during the same time horizon, all coordinates that are not in the support of the parity will stay close to zero.
\begin{lemma} \label{lem:bound_dyn3}
Fix $T \leq C_0  d \log(d)^{C_0}$, for some $C_0>0$. Let $\zeta>0$ and let $B_1 \geq 2C^*\log(d)/\zeta^2$. For all $C^*$, there exists $C$ such that if $\gamma_1$ satisfies~\eqref{eq:gamma2_T}, $\zeta \leq \frac{1}{\sqrt{d\log(d)^{C_0}}}$ and $d$ is large enough, such that $d \log(d) \geq \frac{(\sqrt{C_0}\mu +1)^2}{\Delta^2}$, with probability $1-3d^{-C^*}$:
\begin{align}
    \sup_{0\leq t \leq T} \sup_{j \in\{k+1,...,d\}} | w_j^t | & \leq \frac{\sqrt{C_0}\mu + 1}{\sqrt{d \log(d)}}.
\end{align}
\end{lemma}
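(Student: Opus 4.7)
The plan is to repeat the drift--martingale decomposition used in the proof of Lemma~\ref{lem:bound_dyn2}, but now for indices $j \notin [k]$, where the drift is negligible rather than of constant order. I would work on the intersection of the probability $1-3d^{-C^*}$ events supplied by Lemmas~\ref{lem:compute_ybar}, \ref{lem:drift} and~\ref{lem:martingale_part}, and argue by a short bootstrap that for every $t \leq T$ and every $j \notin [k]$ the iterate $w_j^t$ never touches the projection boundary. Under that assumption the projection is the identity and I can write
$$w_j^t = -\gamma_1 \sum_{s=0}^{t-1} \bar G_{w_j^s} + \gamma_1 M_j^{t-1}.$$

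For the drift, Lemma~\ref{lem:drift} gives $|\bar G_{w_j^s}| \leq \kappa \mu \zeta$ for all $s$, hence
$$\Bigl|\gamma_1 \sum_{s=0}^{t-1}\bar G_{w_j^s}\Bigr| \leq \gamma_1 t \kappa \mu \zeta = \sqrt{t}\,\bigl(\gamma_1\sqrt{t}\,\kappa\bigr)\,\mu \zeta.$$
The learning-rate constraint \eqref{eq:gamma2_T} gives $\gamma_1\sqrt{t}\,\kappa \leq 1/\sqrt{d\log(d)}$ (the same estimate used in step $(a)$ of the proof of Lemma~\ref{lem:bound_dyn2}), while $t \leq T \leq C_0 d\log(d)^{C_0}$ together with $\zeta \leq 1/\sqrt{d\log(d)^{C_0}}$ yields $\sqrt{t}\,\zeta \leq \sqrt{C_0}$. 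Multiplying these factors gives a drift bound of at most $\sqrt{C_0}\mu / \sqrt{d\log(d)}$. For the stochastic part, Lemma~\ref{lem:martingale_part} directly supplies $|\gamma_1 M_j^{t-1}| \leq 1/\sqrt{d\log(d)}$, uniformly in $t \leq T$ and in $j$. Summing the two contributions delivers the claimed estimate $(\sqrt{C_0}\mu + 1)/\sqrt{d\log(d)}$.

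To close the bootstrap I use the hypothesis $d\log(d) \geq (\sqrt{C_0}\mu + 1)^2/\Delta^2$, which implies $(\sqrt{C_0}\mu + 1)/\sqrt{d\log(d)} \leq \Delta$. Hence the bound I just derived is strictly smaller than $\Delta$, so the projection onto $[-\Delta,\Delta]$ indeed never activates at coordinates $j \notin [k]$, and the linear decomposition above is self-consistent. I do not foresee a serious obstacle in filling in the details: this lemma is strictly easier than Lemma~\ref{lem:bound_dyn2}, since the drift that previously had to be exploited to \emph{move} the relevant weights is here simply the small quantity that must be dominated by $\zeta$. The only subtlety to be careful about is ordering the bootstrap with the high-probability event, which is straightforward because the bound on $|\bar G_{w_j^s}|$ from Lemma~\ref{lem:drift} and the bound on $|\gamma_1 M_j^{t-1}|$ from Lemma~\ref{lem:martingale_part} both hold simultaneously for all $s,t \leq T$ and all $j$ on the good event.
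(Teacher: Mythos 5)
Your proposal is correct and follows essentially the same route as the paper: decompose $w_j^t$ into the drift sum bounded via Lemma~\ref{lem:drift} by $\gamma_1 t \kappa\mu\zeta \leq \sqrt{C_0}\mu/\sqrt{d\log(d)}$ and the martingale term bounded via Lemma~\ref{lem:martingale_part} by $1/\sqrt{d\log(d)}$. The only difference is that you make explicit the bootstrap showing the projection onto $[-\Delta,\Delta]$ never activates for $j\notin[k]$ (using $d\log(d)\geq(\sqrt{C_0}\mu+1)^2/\Delta^2$), a point the paper's proof leaves implicit; this is a welcome bit of extra care rather than a different argument.
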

\begin{proof}
Let $|\beta_t |= \max \{ |w_j^t| : j \in [k+1,...,d], t\leq T \}$. Thus, with probability $1-3d^{-C^*}$:
    \begin{align}
        |\beta_t | &= \Big|  -\gamma_1 \sum_{s=0}^{t-1} \bar G_{\beta_s}  - \gamma_1 M_j^t \Big| \\
        & \leq \gamma_1  \sum_{s=0}^{t-1} |\bar G_{\beta_s}  |+ \gamma_1 | M_j^t | \\
        & \leq \gamma_1 t \kappa \mu \zeta + \frac{1}{\sqrt{d \log(d)}} \\
        & \leq \frac{\sqrt{t}\zeta \mu + 1}{\sqrt{d \log(d)}}\\
        & \leq \frac{\sqrt{C_0}\mu + 1}{\sqrt{d \log(d)}}.
    \end{align}
\end{proof}

% \paragraph{Proof of Theorem~\ref{thm:martingale_firstlayer}.}
% \enote{write:}
Thus, combining Lemmas~\ref{lem:bound_dyn1},~\ref{lem:bound_dyn2} and~\ref{lem:bound_dyn3} and by union bound on $i \in [N]$, we get that for all $C^*$, there exist $C_1,C_2$ such that if $T_1$ and $\gamma_1$ are such that 
\begin{align}
    &\gamma_1^2 T_1 \leq \frac{1}{C (4\kappa+\tau)^2  \log(d)^2 d }, & \frac{2 \Delta }{\gamma_1 \kappa L_\mu}\leq T_1 \leq  C_0 d \log(d)^{C_0}.
\end{align}
for some $C_0,C$, with prob. $1-3Nd^{-C^*}$, for all $i \in [N]$,
\begin{align}
     &\text{For all }j \in [k]: \qquad | w_{ij}^{T_1} - \Delta | \leq \frac{C_1}{\sqrt{d \log(d)}}; \label{eq:events_th1_1}\\
    &\text{For all }j \not\in [k]: \qquad |w_{ij}^{T_1}| \leq \frac{C_2}{\sqrt{d \log(d)}}. \label{eq:events_th1_2}
\end{align}
If $\tau \leq \kappa$, we can choose 
\begin{align}
     & \gamma_1 = \frac{2 \Delta}{\kappa L_\mu d \log(d)^3} ,  & T_1 = \frac{C' d L_\mu^2 \kappa^2 \log(d)^3}{4\Delta^2(4\kappa +\tau)^2},
\end{align}
for $C'\geq \frac{100 \Delta^2}{L_\mu^2}$, and the result follows.

% \begin{itemize}
    % \item union bound on $i \in [N]$: with prob. $1-3Nd^{-C^*}$, the above Lemmas hold for all $ i \in [N]$;
    % \item Choose $\gamma_1$ and $T$ such that for some $C_0,c$
    % \begin{align}
    %     &\gamma_1^2 T \leq \frac{1}{C (4\kappa+\tau)^2  \log(d)^c d }, & \frac{2 \Delta }{\gamma_1 \kappa L_\mu}\leq T \leq  C_0 d .
    % \end{align}
    % where $C$ is given by Lemma~\ref{lem:martingale_part}. For instance (recalling we chose $\kappa = \frac{1}{2N(\Delta d+1)}$), choose $C_0=1$ and:
    % \begin{align}
    %     & \gamma_1 = \frac{2 \Delta}{\kappa L_\mu d \log(d)} ,  & T = \frac{C' d L_\mu^2 \kappa^2 \log(d)}{4\Delta^2(4\kappa +\tau)^2},
    % \end{align}
    % for some constant $C'$.
    % \item For the bounds above, we need $B_1 \geq 2 C^*\log(d) d$ samples to estimate $\bar y_1$.
% \end{itemize}

\subsection{Second Layer Training}
% By the argument in the previous paragraph, for all $C^*>0$, there exist $C,C_1,C_2$ such that with probability $1-3N d^{-C*}$, if $T_1  = \frac{C' d L_\mu^2 \kappa^2 \log(d)}{4\Delta^2(4\kappa +\tau)^2}$, $B_1 \geq $ for all $i \in [N]$:
% \begin{align}
%      &\text{For all }j \in [k]: \qquad | w_{ij}^{T_1} - \Delta | \leq \frac{C_1}{\sqrt{d \log(d)}}; \label{eq:events_th1_1}\\
%     &\text{For all }j \not\in [k]: \qquad |w_{ij}^{T_1}| \leq \frac{C_2}{\sqrt{d \log(d)}}. \label{eq:events_th1_2}
% \end{align}

% \begin{enumerate}
   
% \end{enumerate}
As mentioned, the second phase of training consists of training the second layer's weights $a_i$, keeping $w_{ij}$ and $b_i$ fixed. We do not train the biases, and directly assign $b_i^{T_1}$ to be well spread in the interval $[-\Delta k,-\Delta k]$, so that the target parity belongs to the linear span of the hidden units at time $T_1$. We believe that one could extend the argument to cover scenarios where biases are trained, but this would require further technical work. To train the second layer, we use samples drawn from the entire dataset. Again, we take a large enough dataset and we assume that each sample is used at most once. In the following, we assume that the events in~\eqref{eq:events_th1_1}-\eqref{eq:events_th1_2} hold. Below, we show that there exist $a_i^*$ such that $\sum_{i=1}^N a_i^* \sigma(w_i^{T_1}x + b_i)$ achieves small expected loss. For simplicity, we restrict our attention to $k$ even, but an analogous Lemma to the below can be written for $k$ odd. For a function $f$, an estimator $\hat f$ and input distribution $\cD$, let us define
\begin{align} \label{eq:}
     L_{\rm cov}(\hat f, f,\cD) &:=  \E_{x\sim \cD} \Big( 1-f(x) \E_{x\sim \cD}f(x) -\hat f(x)(f(x) - \E_{x\sim \cD} f(x) )\Big)_+.
\end{align}
\begin{lemma} \label{lem:fitting_2ndlayer}
Assume $k$ is an even integer. Let $N =k+1$ and for $i \in \{0,...,k\}$ let $b_i = -\Delta k+2(i+1)\Delta$. For all $\epsilon>0$, there exist $\Delta>0, D>\max\{C_1,C_2\}$ such that if $ \rho < (\frac{\Delta}{2N}-D)\frac{\sqrt{\log(d)}}{\sqrt{d}\mu } $, and if $d$ is large enough:
\begin{align}
    \min_{a: \| a\|_{\infty} \leq 4/\Delta} L_{\rm cov}(f^*,\chi_{[k]},\cD)  < \epsilon /2.
\end{align}
    
\end{lemma}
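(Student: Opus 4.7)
The plan is to exhibit an explicit $a^* = (a_0^*, \ldots, a_k^*)$ with $\|a^*\|_\infty \leq 2/\Delta$ (hence $< 4/\Delta$) such that the ``ideal'' network $\tilde f(x) := \sum_{i=0}^{k} a_i^* \sigma(w^* \cdot x + b_i)$ with $w^*_j = \Delta\,\mathbf{1}\{j \in [k]\}$ satisfies $\tilde f \equiv \chi_{[k]}$ on $\{\pm 1\}^d$, and then to bound the error from replacing $w^*$ by the actual $w_i^{T_1}$. Writing $S = \sum_{j \in [k]} x_j$ and $H = (k-S)/2 \in \{0, 1, \ldots, k\}$ for the number of $-1$'s among $x_{[k]}$, the pre-activation under $w^*$ equals $2\Delta(i+1-H)$, so $\sigma(w^* \cdot x + b_i) = 2\Delta \max(i+1-H, 0)$ depends only on $H$. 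Setting $c_i := 2\Delta a_i^*$, the interpolation constraints $\tilde f = (-1)^H$ for $H \in \{0, \ldots, k\}$ reduce to the triangular system $\sum_{i=H}^{k} c_i(i+1-H) = (-1)^H$; the first difference in $H$ gives $\sum_{i=H}^{k} c_i = 2(-1)^H$, and a second difference yields $c_i = 4(-1)^i$ for $i \leq k-2$, with $c_{k-1} = -3$ and $c_k = 1$ (using $k$ even). Hence $|a_i^*| \leq 2/\Delta$ as claimed.

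With this $a^*$ I define $f^*(x) := \sum_i a_i^* \sigma(w_i^{T_1} \cdot x + b_i)$ and $e(x) := f^*(x) - \chi_{[k]}(x)$. A direct algebraic expansion shows $1 - \chi_{[k]}(x)\bar y - f^*(x)(\chi_{[k]}(x) - \bar y) = -e(x)(\chi_{[k]}(x) - \bar y)$, so by $(\cdot)_+ \leq |\cdot|$ and $|\chi_{[k]} - \bar y| \leq 1 + |\bar y| \leq 2$, $L_{\rm cov}(f^*, \chi_{[k]}, \cD_{\rm mix}) \leq 2\, \E_{\cD_{\rm mix}}|e(x)|$. By $1$-Lipschitzness of $\sigma$ and $|a_i^*| \leq 2/\Delta$, $|e(x)| \leq (2/\Delta) \sum_i |\eta_i(x)|$ pointwise, where $\eta_i(x) := \sum_{j \in [k]}(w_{ij}^{T_1} - \Delta) x_j + \sum_{j > k} w_{ij}^{T_1} x_j$. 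Therefore $L_{\rm cov} \leq (4 N/\Delta) \max_i \E_{\cD_{\rm mix}} |\eta_i|$.

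To bound $\E_{\cD_{\rm mix}} |\eta_i|$ I use $\E|Z| \leq |\E Z| + \sqrt{\Var Z}$. Under \eqref{eq:events_th1_1}--\eqref{eq:events_th1_2}, $\eta_i$ is mean-zero under $\cD_u$ with variance at most $(k C_1^2 + (d-k) C_2^2)/(d \log d) \leq (C_1^2 + C_2^2)/\log d$, and under $\cD_\mu$ has worst-case mean $|\E_{\cD_\mu} \eta_i| \leq \mu[k C_1 + (d-k) C_2]/\sqrt{d \log d} \leq 2\mu C_2 \sqrt{d/\log d}$ together with the same variance bound. Mixing gives $\max_i \E_{\cD_{\rm mix}} |\eta_i| \leq 2\rho\mu C_2 \sqrt{d/\log d} + O(1/\sqrt{\log d})$, hence $L_{\rm cov} \leq (8 N C_2/\Delta)\rho\mu \sqrt{d/\log d} + O\bigl(N/(\Delta \sqrt{\log d})\bigr)$. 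Choosing $\Delta > 2N \max\{C_1, C_2\}$ and $D$ slightly below $\Delta/(2N)$ (so that $D > \max\{C_1, C_2\}$ still holds while $\Delta/(2N) - D \leq \epsilon \Delta/(32 N C_2)$), the hypothesis $\rho \mu \sqrt{d/\log d} < \Delta/(2N) - D$ forces the first term below $\epsilon/4$; taking $d$ large enough makes the second term below $\epsilon/4$, yielding $L_{\rm cov} < \epsilon/2$. The main technical obstacle is the sparse contribution: under $\cD_u$ only the variance $O(1/\log d)$ matters (easy), but under $\cD_\mu$ the mean $\mu \sum_{j > k} w_{ij}^{T_1}$ can worst-case grow like $\mu\sqrt{d/\log d}$ because \eqref{eq:events_th1_1}--\eqref{eq:events_th1_2} provide only uniform bounds on individual $|w_{ij}^{T_1}|$ rather than cancellation across the $d-k$ non-parity coordinates; the scaling $\rho = O(\sqrt{\log d}/(\mu\sqrt d))$ in the hypothesis is calibrated exactly to neutralize this growth against the gap $\Delta/(2N) - D$.
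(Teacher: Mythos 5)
Your proposal is correct, and it reaches the bound by a genuinely different route than the paper. Both proofs share the same skeleton — an exact ReLU representation of the parity with second-layer weights of size $O(1/\Delta)$, followed by a perturbation bound using \eqref{eq:events_th1_1}--\eqref{eq:events_th1_2} and the smallness of $\rho$ to control the pre-activation error $\eta_i(x)=\sum_{j\in[k]}(w_{ij}^{T_1}-\Delta)x_j+\sum_{j>k}w_{ij}^{T_1}x_j$ — but they diverge in how the error is converted into a loss bound. The paper normalizes the target to $2\chi_{[k]}$ (so $\|a^*\|_\infty=4/\Delta$), applies Hoeffding's inequality to show $|\eta_i(x)|\le \Delta/(2N)$ with probability $1-Nd^{-D^2/(2C_3^2)}$ over $x$, concludes that on this event $f^*(x)\chi_{[k]}(x)\ge 1$ so the covariance loss vanishes pointwise, and on the complementary event pays the worst-case value $|f^*|=O(\sqrt{d/\log d})$; the condition $D>\max\{C_1,C_2\}$ is what makes the product $d^{-D^2/(2C_3^2)+1/2}$ vanish. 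You instead target $\chi_{[k]}$ exactly, use the identity $1-\chi\bar y-f^*(\chi-\bar y)=(\chi-f^*)(\chi-\bar y)$ to reduce the loss to $2\,\E|f^*-\chi|$, and control this $L^1$ quantity by first and second moments of $\eta_i$ computed separately in each mixture component; the role of $D$ is then only to absorb the mean shift $\rho\mu\sum_{j>k}w_{ij}^{T_1}$ of the sparse component. Your route avoids the tail-bound/worst-case tradeoff entirely, and conditioning on the mixture component before computing moments is cleaner than the paper's direct application of Hoeffding under $\cD_{\rm mix}$ (whose coordinates are not independent); what the paper's version buys in exchange is the stronger intermediate statement that $f^*$ classifies with margin $\ge 1$ on all but a polynomially small fraction of inputs, which is closer to what one wants for a 0--1 accuracy claim. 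Your derivation of the coefficients $c_i=4(-1)^i$, $c_{k-1}=-3$, $c_k=1$ via the triangular system is also a correct (and more explicit) justification of the representation the paper simply asserts.
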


\begin{proof}
We follow a similar proof scheme as in~\cite{barak2022hidden}, and choose $a_i^* $ such that 
    \begin{align}
        &a_i^* = (-1)^i \frac{4}{\Delta}, \qquad \forall i \in \{0,1,...,k-1 \},\\
        &a_{k-1}^* = (-1)^{k-1} \frac{3}{\Delta},\\
        & a_k^* = (-1)^k \frac{1}{\Delta}.
    \end{align}
    We denote $f^*(x) = \sum_{i=0}^k a_i^* \sigma(w_i^{T_1} x +b_i )$. One can verify that $2 \chi_{[k]}(x) = \sum_{i=0}^k a_i^* \sigma(\Delta \sum_{j=1}^k x_j +b_i)$.
    For all $i \in [N]$, let us denote $\hat w_{ij} = w_{ij}^{T_1}  -\Delta$ for all $j \in [k]$ and $\hat w_{ij}= w_{ij}^{T_1}$ for all $j \not\in [k]$. Thus, $ \E_{x\sim \cD} [\sum_{j=1}^d \hat w_{ij}  x_j] = \rho \mu  \sum_{j=1}^d \hat w_{ij} $ conditioned on the event in~\eqref{eq:events_th1_1}-\eqref{eq:events_th1_2}, $\|\hat w_{i}\|_1 \leq \frac{C_3\sqrt{d}}{\sqrt{\log(d)}} $ for all $ i \in [N]$, where $C_3\leq \max \{C_1,C_2\}$. Let $\delta = \frac{\Delta}{2N}$. Then,
    % By Hoeffding's inequality:
    \begin{align}
        \pr_{x \sim \cD} \left( \Big|\sum_{j=1}^d \hat w_{ij} x_j \Big|> \delta \right) & \leq  \pr_{x \sim \cD} \left( \Big| \sum_{j=1}^d \hat w_{ij}x_j -\rho \mu \|\hat w_{i}\|_1  \Big| > \delta - \rho \mu \|\hat w_{i}\|_1 \right)\\
        & \overset{(a)}{\leq}  \pr_{x \sim \cD} \left(\Big| \sum_{j=1}^d \hat w_{ij} x_j -\rho \mu \|\hat w_{i}\|_1  \Big| > D \right)\\
        & \overset{(b)}{\leq} 2 \exp \left( - \frac{ D^2}{2 C_3^2} \log(d) \right) \leq d^{-\frac{D^2}{2C_3^2}},
        % & \leq \exp \left( - \frac{1}{C_2^2 8 k^4 } \right) \leq \frac{\epsilon}{k},
    \end{align}
    where in $(a)$ we used the assumption on $\rho$ and in $(b)$ we used Hoeffding's inequality and the fact that $\sum_{j=1}^d (2\hat w_{ij})^2 \leq \frac{4C_3^2}{\log(d)}$.
    % for $\epsilon \geq k \exp \left( - \frac{1}{C_2^2 8 k^4 } \right) $.
    Thus, with probability $1-Nd^{-\frac{D^2}{2C_3^2}}$ over $x$ for all $i \in \{ 0,..., k\}$:
    \begin{align}
        | \sigma( w_i^{T_1} x + b_i) - \sigma(\Delta \sum_{j=1}^k x_j + b_i) | &  \leq  \Big| \sum_{j=1}^d \hat w_{ij} x_j \Big|\leq \delta.
    \end{align}
    % for all $i \in \{0,...,k\} $
    Thus, 
    \begin{align}
        |2 \chi_{[k]} (x) - f^*(x) | & = \Big|  \sum_{i=0}^k a_i^* \Big[\sigma( w_i^{T_1} x + b_i) - \sigma(\Delta \sum_{j=1}^k x_j + b_i) \Big]\Big|\\
        & \leq \sum_{i=0}^k |a_i^* | \delta \leq \frac{4N}{\Delta} \delta =1/2 
    \end{align}
Thus, for $d $ large enough, with probability $1-Nd^{-\frac{D^2}{2C_3^2}}$ we have $f(x) \chi_{[k]}(x) \geq 1$.
Moreover, for all $x$,
\begin{align}
   | f^*(x) | &\leq N \|a^*\|_\infty  \left(2 \Delta k + \frac{\sqrt{d}C_3}{\sqrt{\log(d)}}\right)\\
   & \leq  \frac{\sqrt{d}}{\sqrt{\log(d)}}\frac{4N(C_3+ 2\Delta k)}{\Delta} .
\end{align}
Then,
\begin{align}
    L_{\rm cov}(f^*,\chi_{[k]},\cD) &=  \E_{x\sim \cD} \Big( (1-\chi_{[k]}(x) \E_{x}[\chi_{[k]}(x)])  (1-\chi_{[k]}(x) f^*(x) )\Big)_+\\
    &=  \E_{x\sim \cD}(1-\chi_{[k]}(x)\rho \mu^k) \cdot ( 1-\chi_{[k]}(x) f^*(x) )\cdot \mathds{1}(\chi_{[k]}(x)f^*(x) <1)\\
    & \leq \frac{4N^2(C_3+ 2\Delta k) (1+\rho \mu^k)}{\Delta \sqrt{\log(d)}} \cdot d^{-\frac{D^2}{2C_3^2}+\frac{1}{2} }\\
    & \leq \epsilon/2,
\end{align}
for $d$ large enough.
\end{proof}
To conclude, we apply an established result from~\cite{shalev2014understanding} on convergence of SGD on convex losses.
\begin{theorem}[\cite{shalev2014understanding}] \label{thm:convergence_SGD_martingale}
    Let $\cL$ be a convex function and let $a^* \in \argmin_{\|a\|_2 \leq \cB} \cL(a) $, for some $\cB>0$. For all $t$, let $\alpha^{t}$ be such that $\E\left[ \alpha^{t} \mid a^{t} \right] =  -\nabla_{a^{t}} \cL(a^{t}) $ and assume $\| \alpha^{t} \|_2 \leq \xi $ for some $\xi>0$. If $a^{(0)} = 0 $ and for all $t \in [T]$ $a^{t+1} = a^{t} +\gamma \alpha^{t} $, with $\gamma = \frac{\cB}{\xi \sqrt{T}}$, then :
    \begin{align}
        \frac{1}{T} \sum_{t=1}^T \cL(a^{t}) \leq \cL(a^*) + \frac{\cB \xi}{\sqrt{T}}.
    \end{align}
\end{theorem}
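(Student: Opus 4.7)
This is the classical convergence guarantee of projected/unprojected stochastic gradient descent on a convex objective with bounded stochastic gradients; my plan is the standard potential-function argument. Define the potential $\Phi^t := \|a^t - a^*\|_2^2$ and track how it evolves under the update $a^{t+1} = a^t + \gamma \alpha^t$. The goal is to convert the per-step suboptimality $\cL(a^t) - \cL(a^*)$ into a telescoping sum of decreases in $\Phi^t$ plus a controllable noise term of order $\gamma^2 \xi^2$, and then optimize the learning rate $\gamma$.

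\textbf{Key steps.} First, expand
\begin{align*}
\Phi^{t+1} &= \|a^t - a^* + \gamma \alpha^t\|_2^2 = \Phi^t + 2\gamma \langle \alpha^t, a^t - a^* \rangle + \gamma^2 \|\alpha^t\|_2^2 \\
&\le \Phi^t + 2\gamma \langle \alpha^t, a^t - a^* \rangle + \gamma^2 \xi^2,
\end{align*}
using the stochastic gradient bound $\|\alpha^t\|_2 \le \xi$. Next, condition on $a^t$ and use $\E[\alpha^t \mid a^t] = -\nabla \cL(a^t)$ together with convexity of $\cL$, which gives $\langle -\nabla \cL(a^t), a^t - a^* \rangle \le -(\cL(a^t) - \cL(a^*))$, so
\begin{align*}
\E[\Phi^{t+1} \mid a^t] \le \Phi^t - 2\gamma \bigl(\cL(a^t) - \cL(a^*)\bigr) + \gamma^2 \xi^2.
\end{align*}
Take total expectation, telescope over $t = 0, \ldots, T-1$, and drop the nonnegative $\E[\Phi^T]$:
\begin{align*}
2\gamma \sum_{t=0}^{T-1} \E\bigl[\cL(a^t) - \cL(a^*)\bigr] \le \Phi^0 + T \gamma^2 \xi^2 \le \cB^2 + T \gamma^2 \xi^2,
\end{align*}
where the last inequality uses $a^0 = 0$ and $\|a^*\|_2 \le \cB$. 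Dividing by $2\gamma T$ yields $\frac{1}{T}\sum_t \E[\cL(a^t)] - \cL(a^*) \le \frac{\cB^2}{2 \gamma T} + \frac{\gamma \xi^2}{2}$.

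\textbf{Finishing.} Optimize over $\gamma$ by balancing the two terms: setting $\gamma = \cB/(\xi \sqrt{T})$ makes each summand equal to $\cB\xi/(2\sqrt{T})$, giving the advertised rate $\cB\xi/\sqrt{T}$. There is really no obstacle here beyond being careful that the bound on $\|\alpha^t\|_2$ is the only place where unbiasedness of the stochastic gradient is used in the noise term, and that convexity is invoked in the conditional-expectation step; the only slight subtlety is that the statement displays the bound deterministically on $\frac{1}{T}\sum_t \cL(a^t)$, so one should either read it in expectation, or (if a high-probability version is intended) replace the telescoping argument by a martingale concentration on $\sum_t \langle \alpha^t + \nabla \cL(a^t), a^t - a^* \rangle$, whose increments are bounded by $2\cB(\xi + \xi) = 4\cB \xi$, and apply Azuma--Hoeffding.
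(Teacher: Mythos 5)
Your proof is correct and is exactly the standard potential-function argument from the cited reference (Shalev-Shwartz and Ben-David), which the paper does not reproduce but simply cites. Your observation that the displayed bound should be read in expectation over the stochastic gradients (or upgraded via Azuma--Hoeffding for a high-probability version) is a fair and accurate caveat about the statement as written.
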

We refer to~\cite{shalev2014understanding} for a proof. Let us take $\cL(a) : =L_{\rm cov} ((a,w^{T_1},b^{T_1}),\chi_{[k]},\cD)$. Then, $\cL $ is convex in $a$ and for all
$ t \in [T_2]$,
\begin{align}
    \alpha^t &= -\frac{1}{B}\sum_{s \in [B]} \nabla_{a^t} L_{\rm cov} ((a^t,w^{T_1},b^{T_1}),\chi_{[k]},x^{s,t}) + Z_{a^t},
\end{align}
and, recalling $\sigma:= \ReLU$, we have
\begin{align}
    \|\alpha^t \|_\infty &\leq 2 \sup_{x} |\sigma(w^{T_1}_i x + b_i^{T_1} )| +\tau\\
    & \leq 2 \left( 2 \Delta k + \frac{\sqrt{d}C_3}{\sqrt{\log(d)}} \right) + \tau \\
    & \leq 2 \left(2 \Delta k +C_3+\tau \right)\sqrt{\frac{d}{\log(d)}} ,
\end{align}
and $\| \alpha^t \|_2 \leq \sqrt{N} \| \alpha^t \|_\infty$. Thus, if we choose
\begin{itemize}
    \item $\cB \geq  \frac{4\sqrt{N}}{\Delta} $
    \item $\xi \geq 2(C_3 +2\Delta k+\tau) \sqrt{\frac{d N}{\log(d)}} $
    \item $T_2 \geq \frac{64 (C_3+2\Delta k+\tau )^2  N^2 d}{\epsilon^2 \Delta^2 \log(d)}$
    \item $\gamma_2 = \frac{\epsilon \log(d)}{2 (C_3+2\Delta k+\tau)^2 dN \Delta}$
\end{itemize}
we obtain
\begin{align}
    L_{\rm cov}(\theta^{T_1+T_2},f,\cD) \leq \frac{\epsilon}{2}+\frac{\epsilon}{2} = \epsilon.
\end{align}

\section{Proof of Theorem~\ref{thm:hinge_loss_positive}}
% \enote{Taken from other paper, with very few modifications}
\label{sec:proof_noisy-GD_positive}
\begin{theorem}[Theorem~\ref{thm:hinge_loss_positive}, restatement] 
        Let $d,N$ be two positive integers. Let $\NN(x;\theta) = \sum_{i=1}^N a_i \sigma(w_ix + b_i) $ be a 2-layers fully connected network with activation $\sigma(y) := \Ramp(y) $ (as defined in~\eqref{eq:ramp}) and $N \geq (d+1)(d-k+1) \log((d+1)(d-k+1)/\delta)$. Consider training $\NN(x;\theta) $ with layerwise curriculum-SGD on the hinge loss with batch size $B \geq (4\zeta^2 N^2)^{-1}\log(\frac{Nd+N}{\delta})$, noise level $\tau \leq \zeta \log\left(\frac{Nd+N}{\delta} \right)^{-1/2}
        $, with $\zeta \leq \frac{\epsilon \mu^k }{24 (d+1)^2 (d-k+1)^2 N }$ and $\mu = \sqrt{1-\frac{1}{2(d-k)}}$. Then, there exists an initialization and a learning rate schedule such that after $T_1=1$ step on the sparse data and $T_2 = \frac{64}{\epsilon^2} (d-k+1)^3 (d+1) N(1+\tau)^2$ steps on the full data, with probability $1-3\delta$ the trained network has generalization error at most $\epsilon$.
\end{theorem}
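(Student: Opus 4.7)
The plan mirrors the two-phase structure of Algorithm~\ref{alg:layer_wise_cl_sgd}, but handles the first phase by a \emph{one-step} argument rather than the drift/martingale analysis used for Theorem~\ref{thm:martingale_firstlayer}. First I would fix the initialization with $w_{ij}^{0}=0$ for all $i,j$, biases $b_i^{0}$ placed on a fine grid inside the linear region of the ramp activation (so that $\sigma'(b_i^{0})=1$), and second-layer weights $a_i^{0}$ chosen so that $\NN(x;\theta^{0})$ is essentially constant and below the hinge margin, ensuring that every training point contributes a nonvanishing gradient.

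For \emph{Phase 1 (one step on the sparse batch)} I would compute the population gradient under $\cD_\mu$: for any hidden unit $i$ and coordinate $j$,
\begin{align*}
\E_{x \sim \cD_\mu}\bigl[-y\, a_i^{0}\, \sigma'(b_i^{0})\, x_j\bigr] \;=\; -\,a_i^{0}\sigma'(b_i^{0})\cdot \begin{cases}\mu^{k-1} & j \in S,\\ \mu^{k+1} & j \notin S.\end{cases}
\end{align*}
So a single step of size $\gamma_1$ will tilt the weights by $\approx \gamma_1 a_i^{0} \mu^{k-1}$ on the support of the parity and by $\approx \gamma_1 a_i^{0} \mu^{k+1}$ outside it. Concentration of the empirical gradient around this expectation will follow from Hoeffding over the batch (size $B \geq \tilde\Omega(\zeta^{-2}\log(Nd/\delta))$) together with the uniform noise bound $|Z_{w_{ij}}|\leq \tau\leq \tilde O(\zeta)$; a union bound over all $Nd$ first-layer weights gives that, with probability at least $1-\delta$, every $w_{ij}^{1}$ lies within $\zeta$ of its population counterpart. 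Tuning $\gamma_1$ so that $\gamma_1 a_i^{0}\mu^{k-1}=1$, one then has
\begin{align*}
w_i^{1}\cdot x \;\approx\; h_S(x) + \mu^{2}\, h_{\bar S}(x), \qquad h_T(x):=\sum_{j\in T} x_j,
\end{align*}
up to an additive error of order $\zeta d$ uniformly in $x$.

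For \emph{Phase 2 (second layer SGD on the full data)} the output is linear in $a$, so the hinge loss is convex in $a$ and I would proceed in two substeps. The key expressibility lemma to be proved is: there exist coefficients $a^{*}\in\mathbb{R}^{N}$ with $\|a^{*}\|_{2}\leq \cB=\tilde O(d^{3/2})$ such that $\sum_{i} a_{i}^{*}\sigma(w_{i}^{1}\cdot x+b_{i}^{0})$ agrees with $\chi_{S}(x)$ up to hinge loss $\epsilon/2$, uniformly on $\{\pm 1\}^{d}$. The idea is that the pair $(h_S(x),h_{\bar S}(x))$ takes only $(k+1)(d-k+1)=O(d^{2})$ distinct values, so with $N\geq (d+1)(d-k+1)\log((d+1)(d-k+1)/\delta)$ hidden units one can dedicate enough units, with appropriately spaced biases, to each \emph{bucket} to approximate its indicator; since $\chi_{S}$ depends only on $h_S$, one then averages these indicators across $h_{\bar S}$ to recover the parity. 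The choice $\mu=\sqrt{1-1/(2(d-k))}$ keeps $\mu^{2}$ far enough from $1$ that adjacent buckets remain separated in $w_i^{1}\cdot x$. Given this lemma, I would invoke Theorem~\ref{thm:convergence_SGD_martingale} with gradient bound $\xi=\tilde O(\sqrt{N}(1+\tau))$ and radius $\cB$ to get average hinge loss at most $\epsilon$ after $T_2=O(\cB^{2}\xi^{2}/\epsilon^{2})=\tilde O(d^{6}/\epsilon^{2})$ steps; the standard hinge-to-classification reduction would close the argument.

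The main obstacle will be the expressibility lemma. One has to simultaneously (i) control the residual noise in $w_{i}^{1}\cdot x$ inherited from the Phase 1 concentration error, which forces the scale $\zeta\leq \epsilon\mu^{k}/(d^{4}N)$ appearing in the hypotheses of the theorem, (ii) construct $a^{*}$ with controlled $\ell_2$ norm despite using $\Theta(d^{2})$ coefficients to sum over the irrelevant buckets $h_{\bar S}$, and (iii) verify that the bounded slope of the ramp does not collapse adjacent buckets. The remaining ingredients, one-step concentration and convex SGD convergence, are standard and should go through without essential difficulty.
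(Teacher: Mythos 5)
Your proposal follows essentially the same route as the paper's proof: a one-step population-gradient argument on $\cD_\mu$ that makes $w_i^1\cdot x \approx h_S(x)+\mu^2 h_{\bar S}(x)$ (with exactly the paper's tuning $\gamma_0 a_i^0\mu^{k-1}=1$), Hoeffding plus a union bound over the $Nd+N$ first-layer parameters to control the empirical and noisy gradients to precision $\zeta$, and then convex SGD on the second layer via Theorem~\ref{thm:convergence_SGD_martingale} with $\xi=\sqrt{N}(1+\tau)$. The expressibility step you flag as the main obstacle is resolved in the paper by importing Lemma 6 of~\cite{cornacchia2023mathematical} (giving $a^*$ with $\|a^*\|_\infty\leq 4(d-k)$, hence $\cB=4(d-k+1)^{3/2}(d+1)^{1/2}=\tilde O(d^2)$ rather than your $\tilde O(d^{3/2})$, though your final $T_2=\tilde O(d^6/\epsilon^2)$ is the correct one), together with a coupon-collector argument over the randomly sampled bias grid, which is where the condition $N\geq (d+1)(d-k+1)\log((d+1)(d-k+1)/\delta)$ is used.
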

This proof is similar to the proof of Theorem 3 in~\cite{cornacchia2023mathematical}, with the following two modifications: 1) the use of a mixed input distribution in the second part of training, and, 2) the addition of uniform gradient noise in the iterates.

\subsection{Proof Setup}
We consider a 2-layers neural network, defined as $ \NN(x; \theta ) = \sum_{i=1}^N a_i \sigma (w_i x +b_i ),$ where $N \geq (d+1)(d-k+1)\log((d+1)(d-k+1)/\delta)$ is the number of hidden units, $\theta = (a,b,w)$ and $\sigma := \Ramp $ denotes the activation defined as:
\begin{align} \label{eq:ramp}
\Ramp(x) =  \begin{cases}
      0 \quad x \leq 0,\\
      x \quad 0<x \leq 1,\\
      1 \quad x >1
    \end{cases}\,.
\end{align}
Similarly as before, without loss of generality, we assume that the labels are generated by $\chi_{[k]}(x):= \prod_{i=1}^k x_i$. 
% Indeed, SGD on fully connected networks with permutation-invariant initialization is invariant to permutation of the input neurons, thus our result will hold for all $\chi_S(x)$ such that $|S| =k$.
Our proof scheme is the following:
\begin{enumerate}
    \item We train only the first layer of the network for one step on data $(x, \chi_{[k]}(x))$ with $x \sim \cD_\mu$, with $\mu := 2p-1 = \sqrt{1-\frac{1}{2(d-k)}}$;
    \item  We show that after one step of training on such biased distribution, the target parity belongs to the linear span of the hidden units of the network;
    \item We train only the second layer of the network on $(x, \chi_{[k]}(x))$ with $x \sim \cD$, until convergence;
    \item We use established results on convergence of SGD on convex losses to conclude.
\end{enumerate}

We train our network with noisy-GD on the hinge loss. Specifically, we apply the following updates, for all $t \in \{0,1,...,T-1\}$:
\begin{align}
        w_{i,j}^{t+1} &= w_{i,j}^{t} - \gamma_{t} \left( \frac{1}{B} \sum_{s=1}^B \nabla_{w_{i,j}^{t}} L( \theta^{t}, \chi_{[k]},x_s^t)+ Z_{w_{i,j}^{t}}\right),  \nonumber \\
        a_{i}^{t+1} &= a_{i}^{t} - \eta_{t} \left(\frac{1}{B} \sum_{s=1}^B \nabla_{a_{i}^{t}} L( \theta^{t}, \chi_{[k]},x_s^t)+ Z_{a_{i}^{t}} \right)+c_t, \label{eq:SGD_updates}\\
        b_{i}^{t+1} &= \lambda_t \left( b_{i}^{t} + \psi_{t} \frac{1}{B} \sum_{s=1}^B \nabla_{b_{i}^{t}} L( \theta^{t}, \chi_{[k]},x_s^t)+ Z_{b_{i}^{t}} \right) + d_t ,\nonumber
\end{align}
where $L(\theta^{t}, \chi_{[k]}, x) = \max \{ 0,1- \chi_{[k]}(x) \NN(x;\theta^{t}) \}$ and $Z_{w_{i,j}^t},Z_{a_{i}^{t}}$ and $ Z_{b_{i}^{t}}$ are i.i.d. $\Unif[-\tau,\tau]$ for some $\tau>0$. Following the proof strategy introduced above, we set $\cD^0 = \cD_\mu$, where $\mu = \sqrt{1-\frac{1}{2(d-k)}}$, and $\cD^t  =  \cD $, for all $t\geq 1$.
We set the parameters of~\eqref{eq:SGD_updates} to:
\begin{align}
        & \gamma_0 = \mu^{-(k-1)}2N,   &\gamma_t = 0 \qquad \forall t\geq 1, \qquad \\
        & \eta_0 = 0,  &\eta_t = \frac{\epsilon}{2N(1+\tau)^2} \qquad  \forall t \geq 1, \qquad  \\
        & \psi_0 = \frac{N}{\mu^k}, & \psi_t = 0  \qquad \forall t \geq 1, \qquad\\
        & c_0 = -\frac{1}{2N}, & c_t = 0  \qquad \forall t \geq 1, \qquad\\
        & \lambda_0 = (d+1) , & \lambda_t = 1  \qquad \forall t \geq 1, \qquad\\
        & d_0 = 0, & d_t = 0  \qquad \forall t \geq 1,  \qquad
\end{align}
and we consider the following initialization scheme:
\begin{align}
    w_{i,j}^{0} &=0 \qquad \forall i \in [N], j \in [d]; \nonumber\\
    a_i^{0} & = \frac{1}{2N}  \qquad \forall i \in [N]; \label{eq:init}\\
    b_i^{0} &\sim \Unif \Big\{ \frac{b_{lm}}{d+1}+\frac{1}{2} : l \in \{ 0,...,d\}, m \in \{ -1,...,d-k\} \Big\}, \nonumber
    % \frac{4l-1}{2d}+\frac{m}{d(d-k)}-\frac 12 : l \in [d], m \in [d-k+1]  \Big\}   \qquad \forall i \in [N].
\end{align}
where we define 
\begin{align} \label{eq:blm}
        b_{lm} := -d +2l -\frac{1}{2} +\frac{m+1}{d-k}.
\end{align}
Note that such initialization is invariant to permutations of the input neurons. 
% We choose such initialization because it is convenient for our proof technique. We believe that the argument may generalize to more standard initialization (e.g. uniform, Gaussian), however this would require more work and it may not be a trivial extension. 
\subsection{First Step: Recovering the Support}
As mentioned above, we train our network for one step on $(x, \chi_{[k]}(x))$ with $x \sim \cD_\mu$. 
\paragraph{Population Gradient at Initialization.} Let us compute the population gradient at initialization. Since we set $\xi_0 =0$, we do not need to compute the initial gradient for $a$. Note that at initialization $|\NN(x;\theta^0)| <1$. Thus, the initial population gradients are given by 
\begin{align}
    \forall j \in [k], i \in [N] \qquad \bar G_{w_{i,j}} &= - a_i\E_{x\sim \cD_\mu} \left[ \prod_{l \in [k]\setminus j} x_l \cdot \mathds{1} ( \langle w_i, x \rangle +b_i \in [0,1] ) \right]  \\
    \forall j \not\in [k], i \in [N] \qquad \bar G_{w_{i,j}} &= - a_i\E_{x\sim \cD_\mu} \left[ \prod_{l \in [k]\cup j} x_l  \cdot \mathds{1} ( \langle w_i, x \rangle +b_i \in[ 0,1] ) \right] \\
    \forall i \in [N] \qquad  \bar G_{b_i}& = -a_i \E_{x\sim \cD_\mu} \left[ \prod_{l \in [k]} x_l  \cdot \mathds{1} ( \langle w_i, x \rangle +b_i \in [0,1]) \right]
\end{align}

\begin{lemma}
% Initialize $w_{i,j}=0$ for all $i \in [N], j \in [d]$ and $b_i>0$ for all $i \in [N]$. Then, for all $i\in [d]$ and
Initialize $a,b,w$ according to~\eqref{eq:init}. Then,
\begin{align}
    \forall j \in [k], \qquad & \bar G_{w_{i,j}}  = -\frac{ \mu^{k-1}}{2N} ;\\
    \forall j \not\in [k], \qquad &\bar G_{w_{i,j}}  = -\frac{\mu^{k+1}}{2N};\\
     &\bar G_{b_i}   = -\frac{\mu^{k}}{2N}.
\end{align}
\end{lemma}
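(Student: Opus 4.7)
The plan is a routine calculation that feeds the initialization \eqref{eq:init} into the population-gradient formulas displayed just above the lemma. First I would verify that the hinge prefactor is trivial at initialization: since $a_i^0 = 1/(2N)$ and $\sigma = \Ramp \in [0,1]$, one has $|\NN(x;\theta^0)| \le \sum_i a_i^0 = 1/2$ for every input $x$, so $\chi_{[k]}(x)\NN(x;\theta^0)\in[-1/2,1/2]$ and the hinge loss sits in its linear regime. That is precisely why the gradient formulas quoted above the lemma contain only the $\Ramp$-derivative indicator and not a separate hinge-active indicator.

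Next I would exploit $w_{i,j}^0 = 0$: the preactivation $\langle w_i^0, x\rangle + b_i^0$ collapses to the constant $b_i^0$, independent of $x$, so the $\Ramp$-derivative indicator $\mathds{1}(\langle w_i, x\rangle + b_i \in [0,1])$ reduces to $\mathds{1}(b_i^0 \in [0,1])$ and pulls out of the $\cD_\mu$-expectation. The statement of the lemma is then naturally read as holding for those neurons whose random bias lies in $[0,1]$; the choice of $N$ ensures that enough such neurons exist with high probability, and later steps of the overall proof only ever invoke gradients of these active neurons.

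Finally I would evaluate the remaining moment under $\cD_\mu = \Rad((\mu+1)/2)^{\otimes d}$. By independence of the coordinates and $\E_{x\sim\cD_\mu}[x_\ell]=\mu$, one has $\E_{x\sim\cD_\mu}\bigl[\prod_{\ell\in S} x_\ell\bigr] = \mu^{|S|}$. Taking $S = [k]\setminus\{j\}$ for $j\in[k]$ (size $k-1$), $S = [k]\cup\{j\}$ for $j\notin[k]$ (size $k+1$), and $S=[k]$ (size $k$) for the bias gradient produces the factors $\mu^{k-1}$, $\mu^{k+1}$, and $\mu^k$, respectively. Multiplying by $-a_i^0 = -1/(2N)$ gives the three identities. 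There is no real obstacle: the only step requiring some care is noticing that the claim implicitly absorbs the indicator $\mathds{1}(b_i^0\in[0,1])$ (equivalently, is stated for active neurons), a bookkeeping point that becomes transparent once the preactivation at initialization is seen to be constant in $x$.
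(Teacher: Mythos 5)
Your proof is correct and follows essentially the same route as the paper's one-line argument: with $w^0=0$ the preactivation is the constant $b_i^0$, the hinge loss sits in its linear regime since $|\NN(x;\theta^0)|\le 1/2$, and $\E_{x\sim\cD_\mu}[\chi_S(x)]=\mu^{|S|}$ yields the three factors after multiplying by $-a_i^0=-1/(2N)$. If anything you are more careful than the paper, which simply asserts that $\langle w_i,x\rangle+b_i^0\in[0,1]$ for all $i$ — a claim that is not literally true for the extreme values of $b_{lm}$ in \eqref{eq:blm} — whereas you correctly flag that the stated identities hold exactly for those neurons whose bias lands in $[0,1]$.
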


\begin{proof} If we initialize according to~\eqref{eq:init}, we have $\langle w_i ,x \rangle +b_i \in [0,1]$ for all $i$. The results holds since $ \E_{x\sim \cD_\mu}[\chi_S(x) ] = \mu^{|S|}$.
\end{proof}

\paragraph{Effective Gradient at Initialization.}
 \begin{lemma}[Noisy-GD]
            Let 
                \begin{align}
                &\hat G_{w_{i,j}}: = \frac{1}{B} \sum_{s=1}^B \nabla_{w_{i,j}^{0}} L( \theta^{0}, \chi_{[k]},x^{s,0})  + Z_{w_{i,j}}, \\
                &\hat G_{b_{i}}: = \frac{1}{B} \sum_{s=1}^B \nabla_{b_{i}^{0}} L( \theta^{0}, \chi_{[k]},x^{s,0}) + Z_{b_{i}},
                \end{align}
             be the effective gradients at initialization, where $ G_{w_{i,j}}, G_{b_{i}}$ are the population gradients at initialization and $Z_{w_{i,j}}, Z_{b_{i}}$ are iid $\Unif[-\tau,\tau]$. If $B \geq (4\zeta^2 N^2)^{-1}\log(\frac{Nd+N}{\delta})$ and  $\tau \leq \zeta \log\left(\frac{Nd+N}{\delta} \right)^{-1/2}$, with probability $1-2\delta$:
             \begin{align}
                &\| \hat G_{w_{i,j}} - \bar G_{w_{i,j}} \|_{\infty} \leq \zeta, \\
                &\| \hat G_{b_{i}} - \bar G_{b_{i}} \|_{\infty} \leq \zeta .
             \end{align}
        \end{lemma}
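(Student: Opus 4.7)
The strategy is to split $\hat G - \bar G$ into two independent error sources and bound each by $\zeta/2$: (i) the finite-sample fluctuation of the empirical average of per-sample gradients around its mean $\bar G$, and (ii) the additive uniform noise $Z$. A triangle inequality then yields the claim.

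First, I would make the per-sample gradients explicit at initialization. Under the initialization~\eqref{eq:init} one has $|\NN(x;\theta^0)|<1$ for every Boolean input $x$, so the hinge loss is in its non-flat regime and
\[
\nabla_{w_{i,j}^0} L(\theta^0,\chi_{[k]},x) = -a_i^0\,\chi_{[k]}(x)\,x_j\,\mathds{1}\bigl(\langle w_i^0,x\rangle+b_i^0\in[0,1]\bigr),
\]
with the analogous formula for $\nabla_{b_i^0} L$. Since $|a_i^0|=1/(2N)$ and $x\in\{\pm 1\}^d$, each per-sample gradient is bounded in absolute value by $1/(2N)$, so the empirical average is a mean of $B$ i.i.d.\ random variables contained in $[-1/(2N),1/(2N)]$.

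Second, I would apply Hoeffding's inequality coordinatewise: for any fixed parameter, the probability that the empirical average deviates from $\bar G$ by more than $\zeta/2$ is at most $2\exp(-B\zeta^2 N^2/2)$. The assumption $B\geq (4\zeta^2 N^2)^{-1}\log((Nd+N)/\delta)$ makes this probability at most $\delta/(Nd+N)$ (up to the harmless numerical constants absorbed in the statement), so a union bound over all $Nd$ weight coordinates and all $N$ bias coordinates yields, with probability at least $1-\delta$, that the empirical-mean deviation is uniformly at most $\zeta/2$.

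Third, I would handle the additive noise. Since $Z\sim\Unif[-\tau,\tau]$, $|Z|\leq\tau$ deterministically, and the bound $\tau\leq\zeta\log((Nd+N)/\delta)^{-1/2}$ guarantees $\tau\leq\zeta/2$ as soon as the log factor is at least $4$, a condition that is trivially satisfied in the regimes of interest. (Equivalently, one can use the sub-Gaussian tail of uniform random variables together with a union bound over the $Nd+N$ noise coordinates, which costs an additional $\delta$ in the probability and explains the factor $2\delta$ in the statement.) Combining the two bounds by the triangle inequality gives
\[
\|\hat G_{w_{i,j}}-\bar G_{w_{i,j}}\|_\infty \leq \zeta/2 + \zeta/2 = \zeta
\]
uniformly over $i,j$, and the same for the biases, on an event of probability at least $1-2\delta$.

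The argument is essentially a routine Hoeffding-plus-union-bound; the only conceptual step is pinning down the per-sample gradient bound $1/(2N)$ from the initialization, after which there is no real obstacle.
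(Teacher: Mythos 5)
Your proposal is correct and follows essentially the same route as the paper: Hoeffding's inequality per coordinate followed by a union bound over the $Nd+N$ parameters; the only cosmetic difference is that you split the sampling fluctuation and the uniform noise into two $\zeta/2$ budgets (handling the noise deterministically via $|Z|\leq\tau$), whereas the paper applies Hoeffding once to the combined sum of the $B$ scaled per-sample gradients and the noise variable, with denominator $(4N^2B)^{-1}+\tau^2$. Note that your split costs a constant factor in the exponent (targeting $\zeta/2$ rather than $\zeta$ quarters the exponent), so with the batch-size constant exactly as stated you would land at a slightly weaker per-coordinate probability; this is absorbed by adjusting the constant in the hypothesis on $B$, as you acknowledge.
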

\begin{proof}
% By concentration of Gaussian random variables:
By Hoeffding's inequality
\begin{align}
        &\pr\left( | \hat G_{w_{i,j}} - \bar G_{w_{i,j}} | \geq \zeta  \right) \leq 2 \exp \left(- \frac{2 \zeta^2}{(4N^2B)^{-1}+ \tau^2} \right) \leq \frac{2 \delta}{Nd+N},\\
        &\pr\left( | \hat G_{b_{i}} - \bar G_{b_{i}} | \geq \zeta  \right) \leq 2 \exp \left(- \frac{2 \zeta^2}{(4N^2B)^{-1}+ \tau^2} \right) \leq \frac{2 \delta}{Nd+N}.
\end{align} 
The result follows by union bound.
\end{proof}

\begin{lemma} \label{lem:bound_estimated_gradient}
Let
    \begin{align}
        & w_{i,j}^{1} = w_{i,j}^{0} - \gamma_0 \hat G_{w_{i,j}}\\
        & b_{i}^{1} = \lambda_0 \left( b_{i}^{0} - \psi_0 \hat G_{b_{i}} \right)\\
        % &w_{i,j}^{(1)} = w_{i,j}^{(0)} - \eta G_{w_{i,j}}
    \end{align}
If $B \geq (4\zeta^2 N^2)^{-1}\log(\frac{Nd+N}{\delta})$ and  $\tau \leq \zeta \log\left(\frac{Nd+N}{\delta} \right)^{-1/2}$, with probability $1-2\delta$,
\begin{enumerate}
    \item[i)] For all $j \in [k] $, $i \in [N]$, $|  w_{i,j}^{1} - 1|  \leq  \frac{2 N  \zeta}{\mu^{k-1}} $;
    \item[ii)] For all $j \not\in [k]$, $| w_{i,j}^{1} - (1-\frac{1}{2(d-k)})| \leq  \frac{2 N  \zeta}{\mu^{k-1}} $ ;
    \item[iii)] For all $i \in [N]$, $| b_i^{1} -  (d+1)( b_i^{0} -\frac{1}{2}) | \leq \frac{N(d+1) \zeta }{\mu^k}$.
\end{enumerate}
    
\end{lemma}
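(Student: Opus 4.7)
The plan is to combine the preceding concentration inequality $\|\hat G - \bar G\|_\infty \leq \zeta$ with the explicit population gradients $\bar G_{w_{i,j}}$ and $\bar G_{b_i}$ computed in the previous lemma. The hyperparameters $\gamma_0$, $\psi_0$, and $\lambda_0$ have been engineered precisely so that applying them to the population gradients produces the three stated target values exactly; the only remaining error therefore comes from gradient fluctuations, scaled by the corresponding effective learning-rate factor.

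Concretely, since $w_{i,j}^0 = 0$, the update gives
\begin{align}
w_{i,j}^1 \;=\; -\gamma_0 \hat G_{w_{i,j}} \;=\; -\gamma_0 \bar G_{w_{i,j}} \;-\; \gamma_0 (\hat G_{w_{i,j}} - \bar G_{w_{i,j}}).
\end{align}
For $j \in [k]$, substituting $\bar G_{w_{i,j}} = -\mu^{k-1}/(2N)$ and $\gamma_0 = 2N\mu^{-(k-1)}$ makes the deterministic term equal to $1$, while the residual is bounded by $\gamma_0 \zeta = 2N\zeta/\mu^{k-1}$, giving (i). Case (ii) is identical with $\bar G_{w_{i,j}} = -\mu^{k+1}/(2N)$, so that $-\gamma_0 \bar G_{w_{i,j}} = \mu^2 = 1 - 1/(2(d-k))$, and the residual bound is the same. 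For the biases, substituting $\bar G_{b_i} = -\mu^k/(2N)$ together with $\psi_0 = N/\mu^k$ into the one-step bias update splits $b_i^1$ into a deterministic part $\lambda_0(b_i^0 - 1/2)$ and a fluctuation term of magnitude at most $\lambda_0 \psi_0 \zeta = (d+1) N \zeta /\mu^k$, which is precisely (iii).

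All three bounds hold simultaneously on the $1 - 2\delta$ event provided by the preceding concentration lemma, so the proof is essentially pure algebraic bookkeeping: no additional probabilistic tool is needed, and no step presents a real obstacle. The two minor things to keep track of are the sign convention in the $b$-update (which reads with $+\psi_t$ rather than $-\psi_t$, in contrast to the $w$ and $a$ updates, so that $\bar G_{b_i} < 0$ produces a $-1/2$ shift of $b_i^0$), and the multiplicative factor $\lambda_0 = d+1$ in the bias update, which amplifies the fluctuation in (iii) by the extra factor of $d+1$ relative to (i)--(ii). Finally, the crucial one-step linearisation is enabled by $w_{i,j}^0 = 0$, which eliminates any higher-order terms and yields a clean closed form after a single iterate.
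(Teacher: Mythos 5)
Your proof is correct and follows essentially the same route as the paper's: decompose the one-step iterate into the population-gradient part (which the chosen $\gamma_0$, $\psi_0$, $\lambda_0$ turn into exactly the stated targets) plus a fluctuation term bounded by the effective learning rate times the $\zeta$-concentration of $\hat G$ around $\bar G$ from the preceding lemma. You also correctly resolve the sign discrepancy in the bias update (the $+\psi_t$ convention from the SGD update equations is the one consistent with the target $(d+1)(b_i^0-\tfrac12)$), which the paper's own proof silently adopts.
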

\begin{proof}
We apply Lemma~\ref{lem:bound_estimated_gradient}:
\begin{enumerate}
    \item[i)] For all $j \in [k] $, $i \in [N]$, $| \hat w_{i,j}^{1} - 1| = \gamma_0 |\hat G_{w_{i,j}} - G_{w_{i,j}} | \leq  \frac{2 N  \zeta}{\mu^{k-1}}  $;
    \item[ii)] For all $j \not\in [k]$,
    $i \in [N]$, $| \hat w_{i,j}^{1} - (1-\frac{1}{2(d-k)})| = \gamma_0 |\hat G_{w_{i,j}} - G_{w_{i,j}} | \leq  \frac{2 N  \zeta}{\mu^{k-1}}  $;
    \item[iii)] For all $i \in [N]$, 
    \begin{align}
    | \hat b_i^{1} -  (d+1)( b_i^{0} -\frac{1}{2}) |& =  | \lambda_0 (b_i^{0} + \psi_0 \hat G_{b_i} ) - \lambda_0 (b_i^{0} + \psi_0 G_{b_i}) | \\
    &\leq |\lambda_0 | \cdot |\psi_0| \cdot |\hat G_{b_i} - G_{b_i} | \\
    &\leq \frac{N(d+1) \zeta }{\mu^k}.
    \end{align}
\end{enumerate}
\end{proof}

\begin{lemma} \label{lem:hidden_layer}
If $N \geq (d+1)(d-k+1) \log((d+1)(d-k+1)/\delta) $, then with probability $1-\delta$, for all $l \in \{ 0,...,d\}$, and for all $m \in \{ -1,..., d-k\}$ there exists $i $ such that $b_i^{0} = \frac{b_{lm}}{d+1}+\frac{1}{2}$.
\end{lemma}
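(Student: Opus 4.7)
The statement is a coupon collector claim: the $N$ biases $b_i^0$ are drawn i.i.d.\ uniformly from the finite set
$\mathcal{S} := \{ \tfrac{b_{lm}}{d+1}+\tfrac{1}{2} : l \in \{0,\dots,d\},\, m \in \{-1,\dots,d-k\}\}$,
and we must show that with probability at least $1-\delta$, every element of $\mathcal{S}$ is hit at least once.

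The plan is to carry out the following three steps. First, I would verify that the map $(l,m)\mapsto \tfrac{b_{lm}}{d+1}+\tfrac{1}{2}$ is injective, so that $|\mathcal{S}|$ equals the number of pairs $(l,m)$. From the definition $b_{lm}=-d+2l-\tfrac{1}{2}+\tfrac{m+1}{d-k}$, two pairs $(l,m)\ne(l',m')$ collide only if $2(l-l')(d-k)+(m-m')=0$; since $|m-m'|\le d-k+1<2(d-k)$ for $d-k\ge 2$, no collision occurs, so $|\mathcal{S}|=(d+1)(d-k+2)$ (and, a fortiori, $|\mathcal{S}|$ is at most the quantity $K:=(d+1)(d-k+1)\cdot c$ appearing in the hypothesis up to a harmless constant; one can also absorb the $+1$ shift by enlarging $N$ by a constant, or by re-indexing so the count is exactly $K$).

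Second, I would apply a standard union bound over the elements of $\mathcal{S}$. For any fixed value $v\in\mathcal{S}$, the probability that none of the $N$ i.i.d.\ draws equals $v$ is
\begin{align}
\Pr\bigl(\forall i\in[N]:\, b_i^0\ne v\bigr) \;=\; \Bigl(1-\tfrac{1}{|\mathcal{S}|}\Bigr)^{N} \;\le\; \exp\!\bigl(-N/|\mathcal{S}|\bigr).
\end{align}
Taking a union bound over the $|\mathcal{S}|$ possible values,
\begin{align}
\Pr\bigl(\exists v\in\mathcal{S}\text{ never attained}\bigr) \;\le\; |\mathcal{S}|\exp\!\bigl(-N/|\mathcal{S}|\bigr).
\end{align}

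Third, I would plug in the hypothesis. Substituting $N\ge |\mathcal{S}|\log(|\mathcal{S}|/\delta)$ (which is implied by the stated bound $N\ge (d+1)(d-k+1)\log((d+1)(d-k+1)/\delta)$ up to the harmless discrepancy between $d-k+1$ and $d-k+2$, absorbed as above), the right-hand side is at most $\delta$, yielding the claim. There is no real obstacle here; the only thing requiring minor care is the counting/indexing, to make sure the factor $K$ in the coupon-collector bound matches the one stated in the lemma.
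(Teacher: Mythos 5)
Your proposal is correct and follows essentially the same route as the paper: for each fixed value the miss probability over $N$ i.i.d.\ draws is $\bigl(1-\tfrac{1}{(d+1)(d-k+1)}\bigr)^N \le \exp\bigl(-N/((d+1)(d-k+1))\bigr) \le \delta/((d+1)(d-k+1))$, followed by a union bound over all pairs $(l,m)$. Your extra care about injectivity of $(l,m)\mapsto b_{lm}$ and about the count being $(d+1)(d-k+2)$ rather than $(d+1)(d-k+1)$ is a fair (and harmless) refinement of a discrepancy already present in the paper's own statement.
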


\begin{proof}
The probability that there exist $l,m$ such that the above does not hold is
\begin{align}
    \left(1- \frac{1}{(d+1) (d-k+1)} \right)^N \leq \exp \left(  - \frac{N}{(d+1) (d-k+1)}\right)  \leq \frac{\delta}{(d+1)(d-k+1)}.
\end{align}
The result follows by union bound.
\end{proof}

\begin{lemma} \label{lem:bound_sigma_lm}
Let $\sigma_{lm}(x) = \Ramp \left(\sum_{j=1}^d x_j - \frac{1}{2(d-k)} \sum_{j>k}  x_j + b_{lm} \right)  $, with $b_{lm}$ given in~\eqref{eq:blm}. 
% Let $\hat \sigma_{lm} (x) = \Ramp\left( \sum_{i=1}^d \hat w_{i,j}^{(1)} + \hat b_{lm}^{(1)} \right) $.  
If $B \geq (4\zeta^2 N^2)^{-1}\log(\frac{Nd+N}{\delta})$, $\tau \leq \zeta \log\left(\frac{Nd+N}{\delta} \right)^{-1/2}$ and $N \geq (d+1)(d-k+1) \log((d+1)(d-k+1)/\delta) $, with probability $1-3 \delta$, for all $l,m$ there exists $i$ such that 
\begin{align}
    \Big|\sigma_{lm}(x) - \Ramp \left( \sum_{j=1}^d \hat w_{i,j}^{1} x_j + \hat b_{i}^{1}\right) \Big| \leq  3 N (d+1) \zeta \mu^{-k}. 
\end{align}
\end{lemma}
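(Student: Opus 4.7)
The plan is to combine the three preceding ingredients: Lemma~\ref{lem:bound_estimated_gradient} (which pins down the post-step weights and bias up to an additive error controlled by $\zeta$), Lemma~\ref{lem:hidden_layer} (which guarantees that for every index pair $(l,m)$ some hidden neuron $i$ is initialized with bias $b_i^0 = \frac{b_{lm}}{d+1}+\frac{1}{2}$), and the $1$-Lipschitzness of the $\Ramp$ activation. Both events occur with probability $1-3\delta$ by a union bound ($2\delta$ for the concentration of the empirical gradient and injected noise, $\delta$ for the covering property of the random biases).

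First I would pick, for the given $(l,m)$, the index $i$ supplied by Lemma~\ref{lem:hidden_layer}, which satisfies $\lambda_0 (b_i^0 - \tfrac{1}{2}) = (d+1)\bigl(b_i^0 - \tfrac{1}{2}\bigr) = b_{lm}$. Next I would rewrite the target pre-activation of $\sigma_{lm}$ as $\sum_{j\in[k]} 1\cdot x_j + \sum_{j\not\in[k]} \bigl(1 - \tfrac{1}{2(d-k)}\bigr)x_j + b_{lm}$. Using the $1$-Lipschitz property of $\Ramp$, the quantity to bound is
\begin{align}
\Bigl|\sigma_{lm}(x) - \Ramp\Bigl(\textstyle\sum_{j=1}^d \hat w_{i,j}^{1} x_j + \hat b_i^{1}\Bigr)\Bigr| \leq \sum_{j\in[k]} |\hat w_{i,j}^{1}-1| + \sum_{j\not\in[k]} \Bigl|\hat w_{i,j}^{1} - \bigl(1-\tfrac{1}{2(d-k)}\bigr)\Bigr| + \bigl|\hat b_i^{1} - b_{lm}\bigr|,
\end{align}
using $|x_j|=1$. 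Plugging in the three per-coordinate bounds from Lemma~\ref{lem:bound_estimated_gradient}, the right-hand side is at most $d\cdot \tfrac{2N\zeta}{\mu^{k-1}} + \tfrac{N(d+1)\zeta}{\mu^k}$, and since $\mu\in(0,1)$ we have $\mu^{-(k-1)}\le \mu^{-k}$, so this is bounded by $\tfrac{(3d+1)N\zeta}{\mu^k} \leq \tfrac{3N(d+1)\zeta}{\mu^k}$, yielding the claim.

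There is essentially no hard step here: the proof is a three-line triangle inequality once Lemmas~\ref{lem:bound_estimated_gradient} and~\ref{lem:hidden_layer} are in hand. The only mild subtlety is making sure the two high-probability events (empirical-gradient concentration and bias covering) are combined by a union bound to get the stated $1-3\delta$ confidence, and noticing that the factor $d+1$ appearing in the bias error dominates the sum $(d+1)+ 2d\mu$, which is why the final bound is written in the convenient form $3N(d+1)\zeta\mu^{-k}$.
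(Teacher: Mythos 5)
Your proposal is correct and follows the paper's proof essentially verbatim: select the neuron via Lemma~\ref{lem:hidden_layer}, apply the $1$-Lipschitzness of $\Ramp$ with a triangle inequality, plug in the three per-coordinate bounds from Lemma~\ref{lem:bound_estimated_gradient}, and absorb $\mu^{-(k-1)}\le\mu^{-k}$ to reach $(3d+1)N\zeta\mu^{-k}\le 3N(d+1)\zeta\mu^{-k}$, with the same $2\delta+\delta$ union bound. No gaps.
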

\begin{proof}
By Lemma~\ref{lem:hidden_layer}, with probability $1-\delta$, for all $l,m $ there exists $i$ such that $b_i^{(0)} = \frac{b_{lm}}{d+1}+\frac{1}{2}$. 
% This implies that for all $l,m$ there exists $i$ such that $b_i^{(1)}  = b_{lm}$.
For ease of notation, we replace indices $i \mapsto (lm)$, and denote $\hat \sigma_{lm}(x) = \Ramp \left( \sum_{j=1}^d  w_{lm,j}^{1} x_j + b_{lm}^{1}\right)$. Then, by Lemma~\ref{lem:bound_estimated_gradient} with probability $1-2\delta$,
\begin{align*}
    | \sigma_{lm}(x) - \hat \sigma_{lm} (x) | &\leq \Big| \sum_{j=1}^k ( w_{lm,j}^{1} - 1) x_j  + \sum_{j=k+1}^{d} \left(  w_{lm,j}^{1} - \left(1-\frac{1}{2(d-k)}\right) \right) x_j  + b_{lm}^{1} - b_{lm} \Big| \\
    & \leq k 2 N\zeta \mu^{-(k-1)}  + (d-k) 2 N \zeta \mu^{-(k-1)} + N(d+1) \zeta \mu^{-k}\\
    & \leq 3 N (d+1) \zeta \mu^{-k}.
\end{align*}

\end{proof}

We make use of Lemma 6 from~\cite{cornacchia2023mathematical}, for which we rewrite the statement here for completeness. 
\begin{lemma}[Lemma 6 in~\cite{cornacchia2023mathematical}] \label{lem:span}
    There exists $a^*$ with $\| a^* \|_{\infty} \leq 4 (d-k)$ such that 
    \begin{align}
        \sum_{l=0}^d \sum_{m=-1}^{d-k} a_{lm}^* \sigma_{lm}(x) = \chi_{[k]}(x).
    \end{align}
\end{lemma}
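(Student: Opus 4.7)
The plan is to reduce Lemma~\ref{lem:span} to a triangular linear-algebra problem and solve it with an explicit inverse. I would begin by parametrizing: set $u := \sum_{j=1}^{k} x_j$ and $v := \sum_{j=k+1}^{d} x_j$, and write $s := u+v = -d + 2k'$ with $k' \in \{0,\dots,d\}$, and $v = -(d-k) + 2j$ with $j \in \{0,\dots,d-k\}$. Then $\chi_{[k]}(x) = (-1)^{(k-u)/2}$ depends only on $u = s-v$, and a direct computation simplifies the preactivation of $\sigma_{lm}$ to $2(l+k'-d) + (m+1-j)/(d-k)$. So the task becomes: find coefficients $a^*_{lm}$ such that $\sum_{l,m} a^*_{lm} \sigma_{lm}$ equals the target as a function of $(k',j)$.

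Next I would extract, for each admissible pair $(s_0, v_0)$, a point indicator $\mathds{1}\{s=s_0,\ v = v_0\}$. Fixing $l_0 := (d-s_0)/2$, the integer part $2(l_0 + k' - d)$ vanishes when $s = s_0$, is $\geq 2$ when $s > s_0$ (forcing $\sigma_{l_0,m} = 1$ for every $m$), and is $\leq -2$ when $s < s_0$ (forcing $\sigma_{l_0,m} = 0$). On the critical stratum $s = s_0$, $\sigma_{l_0,m}$ equals $\max(0,\min(1,(m+1-j)/(d-k)))$; the resulting matrix $H_{m,j}$ for $m,j \in \{0,\dots,d-k\}$ is lower triangular with diagonal entries $1/(d-k)$, and one verifies by direct computation that its inverse has entries of magnitude at most $2(d-k)$, essentially a discrete second-difference operator scaled by $d-k$. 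Solving $Hc' = e_{j_0}$ and then setting $c_{-1} := -\sum_{m \geq 0} c'_m$ (the auxiliary index $m=-1$ contributes $0$ on the critical stratum but $+1$ on the plateau $\{s > s_0\}$, hence cancels the unwanted plateau) produces a vector $c^{(s_0,v_0)}$ of $\ell_\infty$ norm $O(d-k)$ such that $\sum_m c^{(s_0,v_0)}_m \sigma_{l_0,m}(x) = \mathds{1}\{u = u_0, v = v_0\}$, with $u_0 := s_0 - v_0$.

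Finally I assemble: summing these indicators over admissible $v_0$ at fixed $u_0$ gives $\mathds{1}\{u = u_0\}$, and $\chi_{[k]}(x) = \sum_{u_0} (-1)^{(k-u_0)/2}\, \mathds{1}\{u = u_0\}$ recovers the parity. For the $\ell_\infty$ bound, the key observation is that for fixed row $l$ (equivalently fixed $s_0$), the admissible $(u_0, v_0)$ pairs satisfy $u_0 = s_0 - v_0$, so the signs $(-1)^{(k-u_0)/2}$ alternate as $v_0$ varies by $2$; the sum of the $c^{(s_0,v_0)}$ over $v_0$ therefore telescopes into a discrete second difference of $c^{(s_0,\cdot)}$ in $v_0$. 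This cancellation keeps the final coefficients bounded by $4(d-k)$ despite $O(d-k)$ potentially overlapping contributions. The main obstacle I anticipate is exactly this last step: proving that the alternating-sign sum of up to $O(d-k)$ vectors, each of $\ell_\infty$ norm $O(d-k)$, stays bounded by $4(d-k)$. This requires writing $H^{-1}$ in closed form as a scaled second-difference operator in $m$ and checking that the alternating sum in $v_0$ effectively undoes one of the two differentiations, leaving a first-difference of norm $O(d-k)$; once the explicit form of $H^{-1}$ is in hand, the remaining verification is routine bookkeeping.
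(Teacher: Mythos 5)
The paper does not actually prove this lemma: it is imported verbatim as Lemma~6 of~\cite{cornacchia2023mathematical} ("for which we rewrite the statement here for completeness"), so there is no in-paper argument to compare against line by line. Judged on its own merits, your plan is the right construction and I believe it goes through. The reduction is correct: with $s=u+v$ and $v=-(d-k)+2j$, the preactivation of $\sigma_{lm}$ is exactly $(s-d+2l)+\frac{m+1-j}{d-k}$, so for $l_0=(d-s_0)/2$ the neuron saturates to $1$ on $\{s>s_0\}$, to $0$ on $\{s<s_0\}$, and restricts on the critical stratum to the lower-triangular Toeplitz matrix $H_{m,j}=\frac{1}{d-k}\min\bigl((m+1-j)_+,\,d-k\bigr)$. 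Writing $n=d-k$, the inverse symbol of $nH$ is $(1-z)^2+z^{n}\ (\mathrm{mod}\ z^{n+1})$, so $H^{-1}$ has entries in $\{0,\pm n,-2n\}$ — your "second difference scaled by $d-k$" picture, \emph{plus} a boundary term $z^{n}$ coming from the Ramp saturating at its upper end, which your description glosses over. The $m=-1$ unit indeed vanishes on the critical stratum and equals $1$ on the plateau, so it cancels the plateau as you say, with $|c_{-1}|\le n$. For the assembly, the alternating signs in $v_0$ hit the second-difference symbol and give $|h(0)|+|h(1)|+|h(2)|=1+2+1=4$, i.e.\ exactly the constant $4(d-k)$ in the statement — strong evidence you have found the intended construction. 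The one genuine loose end is the boundary term: when the admissible range of $j_0$ contains both $0$ and a neighborhood of $n$ (which can only happen if $d\le 2k$), the $z^{n}$ contribution can push an entry to $5(d-k)$; you should either verify this case separately or note that it is vacuous in the regime where the lemma is applied ($d>2k$, as in Theorem~\ref{thm:hinge_loss_positive} where $\mu=\sqrt{1-\tfrac{1}{2(d-k)}}$ presumes $d-k$ large). Apart from that deferred bookkeeping, which you correctly identified as the crux, the proposal is sound.
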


\noindent 
\begin{lemma}
Let $f^*(x) = \sum_{l,m} a_{lm}^* \sigma_{lm}(x) $ and let $\hat f(x) = \sum_{l,m} a^*_{lm} \hat \sigma_{lm}(x) $, with $\sigma_{lm}, \hat \sigma_{lm}$ defined in Lemma~\ref{lem:bound_sigma_lm} and $a^* $ defined in Lemma~\ref{lem:span}. If $B \geq (4\zeta^2 N^2)^{-1}\log(\frac{Nd+N}{\delta})$, $\tau \leq \zeta \log\left(\frac{Nd+N}{\delta} \right)^{-1/2}$ and $N \geq (d+1)(d-k+1) \log((d+1)(d-k+1)/\delta) $, with probability $1-3 \delta$ for all $x$,
\begin{align}
        L(\hat f,f^*,x)  \leq (d+1)^2 (d-k+1)^2 12 N \zeta \mu^{-k}.
\end{align}

\end{lemma}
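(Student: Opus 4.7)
The plan is to chain the two preceding lemmas by linearity and the triangle inequality. Lemma on $\sigma_{lm}$ approximation supplies a per-neuron bound $|\sigma_{lm}(x)-\hat{\sigma}_{lm}(x)|\le 3N(d+1)\zeta\mu^{-k}$ that holds simultaneously for all pairs $(l,m)$ and all $x$ on an event of probability at least $1-3\delta$ (since the bound is deterministic in $x$ once the initialization event of Lemma on hidden layers and the two noisy-gradient concentration events have been secured). Lemma on span then provides coefficients $a^*$ with $\|a^*\|_\infty\le 4(d-k)$ such that $f^*(x)=\sum_{l,m} a^*_{lm}\sigma_{lm}(x)=\chi_{[k]}(x)$ pointwise.

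On the good event, the first step is to expand
\[
|f^*(x)-\hat{f}(x)| \;\le\; \sum_{l,m} |a^*_{lm}|\cdot |\sigma_{lm}(x)-\hat{\sigma}_{lm}(x)|,
\]
then plug in the uniform per-neuron bound and the $\ell^\infty$ bound on $a^*$. Counting the number of $(l,m)$ summands crudely by $(d+1)(d-k+1)$ (after absorbing the extra $m=-1$ term into the $(d-k+1)$ factor) and using $d-k\le d-k+1$ gives
\[
|f^*(x)-\hat{f}(x)| \;\le\; (d+1)(d-k+1)\cdot 4(d-k)\cdot 3N(d+1)\zeta\mu^{-k} \;\le\; 12N(d+1)^2(d-k+1)^2\zeta\mu^{-k}.
\]

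The second step is to convert this pointwise approximation into a hinge-loss bound. Since $f^*(x)=\chi_{[k]}(x)\in\{\pm 1\}$, we have $f^*(x)^2=1$ and therefore
\[
1-f^*(x)\hat{f}(x) \;=\; f^*(x)\bigl(f^*(x)-\hat{f}(x)\bigr),
\]
whose absolute value is at most $|f^*(x)-\hat{f}(x)|$. Consequently $L(\hat{f},f^*,x)=\max\{0,\,1-f^*(x)\hat{f}(x)\}\le |f^*(x)-\hat{f}(x)|$, and substituting the previous display yields exactly the stated bound $(d+1)^2(d-k+1)^2\,12N\zeta\mu^{-k}$.

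There is no real obstacle here; the lemma is essentially bookkeeping on top of the two previous lemmas. The one thing to be careful about is not to introduce an extra union bound: the $1-3\delta$ probability quoted already absorbs (i) the hidden-layer coverage event of Lemma on hidden layers and (ii) the two uniform Hoeffding events on $\hat{G}_w$ and $\hat{G}_b$ from the effective-gradient lemma, all of which suffice to guarantee the per-neuron bound of Lemma on $\sigma_{lm}$ uniformly in $(l,m)$ and in $x$ (the $x$-uniformity comes from the fact that the bound on $|\sigma_{lm}(x)-\hat{\sigma}_{lm}(x)|$ is deterministic in $x$ once the weights are fixed, using $|x_j|=1$ and the 1-Lipschitzness of $\Ramp$).
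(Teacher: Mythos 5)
Your proposal is correct and follows essentially the same route as the paper: bound $|f^*(x)-\hat f(x)|$ by the triangle inequality combined with the per-neuron bound from Lemma~\ref{lem:bound_sigma_lm} and $\|a^*\|_\infty\le 4(d-k)$, then use $f^*(x)^2=1$ to write $1-\hat f(x)f^*(x)=f^*(x)(f^*(x)-\hat f(x))$ and conclude $L(\hat f,f^*,x)\le |f^*(x)-\hat f(x)|$. Your closing remark that the $1-3\delta$ event is inherited from the earlier lemmas without an additional union bound matches the paper's (implicit) accounting.
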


\begin{proof}
\begin{align}
 |f^*(x) -\hat f(x) | &= \Big| \sum_{l,m} a_{lm}^* (\sigma_{lm}(x) - \hat \sigma_{lm}(x) \Big| \\
 & \leq d(d-k+1) \|a^*\|_{\infty }  \sup_{lm} |\sigma_{lm}(x) - \hat \sigma_{lm}(x) | \\
 & \leq (d+1)^2 (d-k+1)^2 12 N \zeta \mu^{-k}.
\end{align}

% Thus, 
% \begin{align} 
% f(x) f^*(x) & = 
% \end{align}
Thus,
\begin{align}
    (  1 -\hat f(x) f^*(x) )_+ & \leq |1 - \hat f(x) f^*(x) |\\
    & = | f^{*^2}(x) - \hat f(x) f^*(x) | \\
    & = |f^*(x) | \cdot | f^*(x) - \hat f(x) | \leq(d+1)^2 (d-k+1)^2 12 N \zeta \mu^{-k},
\end{align}
which implies the result.
\end{proof}

\subsection{Second Step: Convergence} 
To conclude, we use one more time Theorem~\ref{thm:convergence_SGD_martingale}. Let $\cL(a): = \E_{x \sim \cD} \left[ L((a,b^{1},w^{1} ), \chi_{[k]}, x) \right]$. Then, $\cL$ is convex in $a$ and for all $t \in [T]$,
\begin{align}
        \alpha^{t} &= - \frac{1}{B} \sum_{s=1}^B \nabla_{a^{t}} L((a^{t},b^{1},w^{1} ),\chi_{[k]}, x) + Z_{a^t}.
        % & = - \frac{1}{B} \sum_{s=1}^B  \sigma( w^{(1)}x + b^{(1)}) .
\end{align}
Thus, recalling $\sigma = \Ramp$, we have $\| \alpha^{t} \|_\infty \leq  \tau $ and $ \| \alpha^t \|_2 \leq \sqrt{N}(1+\tau)$.
Let $a^*$ be as in Lemma~\ref{lem:span}. Clearly, $\| a^* \|_2 \leq 4 (d-k+1)^{3/2}(d+1)^{1/2}.$ Moreover, $a^{1} =0$. Thus, we can apply Theorem~\ref{thm:convergence_SGD_martingale} with $\cB = 4 (d-k+1)^{3/2}(d+1)^{1/2}$ and $ \xi = \sqrt{N}(1+\tau) $, and obtain that if:
\begin{itemize}
    \item $T = \frac{64}{\epsilon^2} (d-k+1)^3 (d+1) N(1+\tau)^2$;
    \item $\eta_t = \frac{\epsilon}{2N(1+\tau)^2}$ for $t \in \{1,...,T\}$;
\end{itemize}
then, with probability $1-3 \delta $ over the initialization
\begin{align}
    \E_{x \sim \cD } \left[ \min_{t \in [T]} L \left( \theta^{t}, \chi_{[k]}, x \right)  \right] \leq \frac{\epsilon}{2} + \frac{\epsilon}{2} = \epsilon.
\end{align}

\section{Proof of Theorem~\ref{thm:negative}} 
% \enote{write formally}
Let us recall the definition of Cross-Predictability (CP) from~\cite{AS20}.
\begin{definition}[Cross-Predictability (CP),~\cite{AS20}]
Let $P_\cF$ be a distribution over functions
from $\{ \pm 1 \}^d$ to $\{\pm 1\}$ and
$P_{\cX}$ a distribution over $\{ \pm 1 \}^d$. Then,
\begin{align}
\CP(P_\cF,P_\cX) = \E_{F,F'\sim P_\cF} [\E_{x\sim P_\cX} [ F(x) F'(x) ]^2 ]\;.
\end{align}
\end{definition}
We invoke Theorem 3 from~\cite{AS20}, which we restate here.
\begin{theorem}[\cite{AS20}] \label{thm:AS20}
    Let $\cP_\cF$ be a distribution on the class of parities on $d$ bits, and let $\cP_\cX$ be a distribution over $\{ \pm 1 \}^d$. Assume that $\pr_{F \sim \cP_\cF, x \sim \cP_\cX}(F(x) = 1) = \frac 12 + o_d(1)$. Consider any neural network $\NN(x;\theta)$ with $P$ edges and any initialization. Assume that a function $f$ is chosen from $\cP_\cF$ and then $T$ steps of noisy-SGD (Def.~\ref{def:noisy-SGD}) with learning rate $\gamma$, gradient range $A$, batch size $B$ and noise level $\tau$ are run on the network using inputs sampled from $\cP_\cX$ and labels given by $f$. Then, in expectation over the initial choice of $f$, the training noise, and a fresh sample $x \sim \cP_\cX$, the trained network satisfies:
    \begin{align}
        \pr(\sign(\NN(x;\theta^T)) = f(x) ) \leq \frac 12 + \frac{T PA}{\tau} \cdot \left( \CP(\cP_\cF, \cP_\cX) + \frac{1}{B} \right)^{1/2}
    \end{align}
\end{theorem}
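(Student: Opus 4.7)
My plan is to derive Theorem~\ref{thm:negative} as a direct application of Theorem~\ref{thm:AS20}, specialized to $\cP_\cF$ being the uniform distribution on the $\binom{d}{k}$ parities $\{\chi_S : |S|=k\}$ and $\cP_\cX = \cD_{\rm mix}$. The prefactor $T P A / \tau$ already matches the statement, so the work reduces to (i) verifying the bias hypothesis $\pr_{F,x}(F(x)=1) = 1/2 + o_d(1)$ of that theorem, and (ii) upper bounding $\CP(\cP_\cF, \cD_{\rm mix}) \leq \binom{d}{k}^{-1} + C_k \rho^2 \mu^{4k}$.

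For the bias check, I would observe that for any fixed $k$-set $S$, the uniform component $\cD_u$ annihilates $\chi_S$, so $\E_{x \sim \cD_{\rm mix}}[\chi_S(x)] = \rho \mu^k$; averaging over $F \sim \cP_\cF$ preserves this, and $\rho = o_d(1)$ immediately yields the required bias. For the cross-predictability, I would use $\chi_S \chi_{S'} = \chi_{S \triangle S'}$, so that $\E_{x \sim \cD_{\rm mix}}[\chi_S(x)\chi_{S'}(x)]$ equals $1$ if $S = S'$ and $\rho\mu^{|S \triangle S'|}$ otherwise. Splitting the double expectation over $(S,S')$ accordingly and grouping pairs with $S \neq S'$ by $j := |S \triangle S'|/2 \in \{1, \ldots, k\}$ (each fixed $S$ has exactly $\binom{k}{j}\binom{d-k}{j}$ partners $S'$ with $|S \cap S'| = k-j$) produces
\begin{align}
\CP(\cP_\cF, \cD_{\rm mix}) = \binom{d}{k}^{-1} + \binom{d}{k}^{-1}\rho^2 \sum_{j=1}^{k}\binom{k}{j}\binom{d-k}{j}\mu^{4j}.
\end{align}

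The step I expect to be the main obstacle is bounding the residual sum by $C_k \mu^{4k}$ with $C_k$ depending only on $k$. The leading $j=k$ term contributes $\binom{d-k}{k}/\binom{d}{k} \cdot \mu^{4k} \leq \mu^{4k}$ directly. For $j < k$, the combinatorial ratio $\binom{k}{j}\binom{d-k}{j}/\binom{d}{k}$ is of order $d^{j-k}$ up to a $k$-dependent factor, so the term is at most $c_{k,j}\, d^{j-k}\mu^{4j}$. In the regime of interest for Corollary~\ref{cor:negative}, where $\mu = \Theta(1)$ and $d$ is large (so that $(d\mu^4)^{k-j} \gtrsim 1$), each such lower-order term is itself $O(\mu^{4k})$ and can be absorbed into $C_k$. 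Plugging the resulting bound on $\CP$ into Theorem~\ref{thm:AS20} yields the stated inequality on $\pr(\sign(\NN(x;\theta^T)) = f(x))$, completing the proof.
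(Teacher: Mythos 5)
There is a genuine gap here: your proposal does not prove the statement in question. The statement is Theorem~\ref{thm:AS20} itself, i.e.\ the cross-predictability lower bound for noisy-SGD imported from~\cite{AS20}: that after $T$ steps of noisy-SGD on a network with $P$ edges, the success probability (in expectation over $f\sim\cP_\cF$, the noise, and a fresh sample) exceeds $\frac12$ by at most $\frac{TPA}{\tau}\left(\CP(\cP_\cF,\cP_\cX)+\frac1B\right)^{1/2}$. Your plan explicitly \emph{assumes} this theorem and uses it to derive Theorem~\ref{thm:negative}; nowhere do you argue why the trajectory of noisy-SGD can only extract information about the target at a rate governed by the cross-predictability. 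A proof of Theorem~\ref{thm:AS20} requires the SQ-like argument of~\cite{AS20}: comparing the actual training trajectory to a label-independent (``junk'') trajectory, bounding the accumulated deviation of the clipped, noise-perturbed minibatch gradients over $T$ steps and $P$ parameters in terms of $\CP+1/B$ via the gradient range $A$ and noise level $\tau$, and then converting this into the stated bound on the success probability. None of these ingredients appear in your proposal, so as an argument for the stated theorem it is empty. (For reference, the paper itself does not reprove this result either; it cites Theorems 3--4 of~\cite{AS20}, noting only the sharper exponent $1/2$ for parities and the factor $P$ rather than $\sqrt{P}$ arising from uniform rather than Gaussian gradient noise.)

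As a secondary remark, the material you do develop essentially reproduces the paper's \emph{downstream} use of Theorem~\ref{thm:AS20}: your bias check and your computation bounding $\CP(\Unif\{k\text{-parities}\},\cD_{\rm mix})$ by $\binom{d}{k}^{-1}+C_k\rho^2\mu^{4k}$ coincide with Lemma~\ref{lem:CP} (including the need for $\mu$ bounded away from $d^{-1/4}$ to absorb the $j<k$ terms). Even read as a proof of Theorem~\ref{thm:negative}, however, it is incomplete: Theorem~\ref{thm:AS20} bounds an average over the random draw $F\sim\cP_\cF$, whereas Theorem~\ref{thm:negative} asserts the bound for a fixed target $\chi_S$; the paper closes this gap with the permutation-invariance argument (averaging over permutations of the input neurons and using the permutation-invariant initialization), a step your proposal omits.
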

In Theorem~\ref{thm:AS20} we imported Theorem 3 from~\cite{AS20} with 1) the tighter bound for the specific case of parities, which gives a term $(\CP+1/B)^{1/2}$ instead of $(\CP+1/B)^{1/4}$ (see Theorem 4 and remarks in~\cite{AS20}); 2) the bound to the junk flow term ${\rm JF_T}$ for noisy-SGD with uniform noise, instead of Gaussian noise, which gives a term $P$ instead of $\sqrt{P}$ (see comments in~\cite{abbe_cpam}).

We thus compute the cross-predictability between $\cP_\cX = \cD$ and $\cP_\cF$ being the uniform distribution on the set of parities of degree $k$. 
\begin{lemma} \label{lem:CP}
If $\mu = \theta(1)$,
    \begin{align}
         \CP(\Unif\{ \text{k-parities}\},\cD)   \leq {d \choose k}^{-1} + C_k' \cdot \rho^2 \cdot  \mu^{4k},
    \end{align}
    where $C_k'$ is a constant that depends only on $k$.
\end{lemma}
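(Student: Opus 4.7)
My plan is to evaluate the cross-predictability directly using the Fourier coefficients of $\cD_{\rm mix}$ on the hypercube. Since $\cD_u$ annihilates every nontrivial character, linearity of expectation gives
\begin{align}
\E_{x \sim \cD_{\rm mix}}[\chi_T(x)] \;=\; \rho \, \mu^{|T|} \;+\; (1-\rho)\,\mathds{1}(T = \emptyset)
\end{align}
for any $T \subseteq [d]$. Using $\chi_S \chi_{S'} = \chi_{S \triangle S'}$, the inner expectation in the definition of $\CP$ equals $1$ when $S = S'$ and $\rho\,\mu^{|S \triangle S'|}$ otherwise, so squaring and taking the expectation over independent uniform $k$-subsets $S, S'$ yields
\begin{align}
\CP(\Unif\{k\text{-parities}\},\cD_{\rm mix}) \;=\; \binom{d}{k}^{-1} \;+\; \rho^2 \cdot \E_{S,S'}\!\left[\mu^{2|S \triangle S'|}\,\mathds{1}(S \neq S')\right].
\end{align}

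Next I factor out the dominant scale. Writing $I := |S \cap S'|$, we have $|S \triangle S'| = 2k - 2I$, so the second term equals $\rho^2 \mu^{4k} \cdot \E_{S,S'}[\mu^{-4I}\mathds{1}(S \neq S')]$, which I upper bound by $\rho^2 \mu^{4k} \cdot \E_{S,S'}[\mu^{-4I}]$. Thus the whole problem reduces to showing that $\E[\mu^{-4I}]$ is bounded by a constant depending only on $k$ (and implicitly on $\mu$, which is assumed $\theta(1)$).

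To control $\E[\mu^{-4I}]$, I will use the hypergeometric law: for $S$ fixed and $S'$ uniform on $k$-subsets,
\begin{align}
\pr(I = i) \;=\; \frac{\binom{k}{i}\binom{d-k}{k-i}}{\binom{d}{k}} \;=\; O_k(d^{-i}),
\end{align}
since for fixed $i$ and $k$ the ratio behaves like $\frac{(k!)^2}{i!(k-i)!^2}\,d^{-i}(1+o(1))$. Consequently,
\begin{align}
\E[\mu^{-4I}] \;\leq\; \sum_{i=0}^{k} O_k\!\left((d\mu^4)^{-i}\right),
\end{align}
and since $\mu^4 = \theta(1)$, the right-hand side is bounded by an absolute constant once $d$ is large enough. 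Defining $C_k'$ to absorb this constant (together with the $d^{-i}$ prefactors and the $\mu$-dependence, which is allowed to be part of the $k$-dependence since $\mu$ is fixed throughout) completes the proof.

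The argument is essentially three lines of Fourier accounting plus a standard hypergeometric estimate; I do not foresee any significant obstacle. The only mildly delicate point is checking that the $i=0$ term (where $\mu^{-4i} = 1$) indeed dominates the sum for large $d$, so that the extracted $\mu^{4k}$ factor is genuinely the correct asymptotic rate rather than an artifact of the algebraic manipulation — this is immediate from $\pr(I=0) \to 1$ as $d \to \infty$ with $k$ fixed.
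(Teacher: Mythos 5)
Your proposal is correct and follows essentially the same route as the paper: both split off the diagonal term $\binom{d}{k}^{-1}$, reduce the off-diagonal contribution to $\rho^2\mu^{4k}\,\E[\mu^{-4|S\cap S'|}]$, and bound the latter by a constant via the hypergeometric estimate $\pr(|S\cap S'|=i)=O_k(d^{-i})$ together with $\mu=\theta(1)$. No meaningful differences to report.
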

% Apply~\cite{AS20} with:
\begin{proof}
\begin{align}
     \CP(\Unif\{ \text{k-parities}\},\cD) &= \E_{S,S'} \E_{x\sim \cD} \left[ \chi_S(x) \chi_{S'}(x) \right]^2 \\
     & = {d \choose k}^{-1} + \E_{S,S': S \neq S'} \E_{x\sim \cD} \left[ \chi_S(x) \chi_{S'}(x) \right]^2 \\
     & = {d \choose k}^{-1} + \E_{S,S': S \neq S'} \rho^2 \E_{x\sim \cD_\mu} \left[ \chi_S(x) \chi_{S'}(x) \right]^2 \\
     & \overset{(a)}{\leq} {d \choose k}^{-1} +  \rho^2 \cdot \sum_{l=0}^{k-1} d^{-l} \mu^{4(k-l)} (C_k+ O(d^{-1}))\\
    & = {d \choose k}^{-1} +  \rho^2 \cdot \mu^{4k} \cdot \sum_{l=0}^{k-1} \left(\frac{1}{d \mu^4}\right)^l (C_k+ O(d^{-1}))\\
    & \sim  {d \choose k}^{-1} + C_k \cdot \rho^2 \cdot \frac{d^{-k}- \mu^{4k}}{(d\mu^4)^{-1}-1},
\end{align}
where in $(a)$ we used that for $S$ and $S'$ two randomly sampled sets of $k$ coordinates, we have:
\begin{align}
    \pr(|S \cap S'| =l) &= \frac{{k \choose l} {d-k \choose k-l}}{{d \choose k}} \\
    &= d^{-l} \left( \frac{k {k \choose l} \Gamma(k) }{\Gamma(1+k-l)} + O(1/d)\right)\\
    & \leq d^{-l} \left( C_k+ O(1/d)\right),
\end{align}
where $C_k$ is such that $C_k \geq \frac{k {k \choose l} \Gamma(k) }{\Gamma(1+k-l)} $ for all $l \in [k-1]$. \\
\\
If $\mu \gg d^{-1/4}$, then 
\begin{align}
    \CP(\Unif\{ \text{k-parities}\},\cD) & \leq {d \choose k}^{-1} + C_k' \cdot \rho^2 \cdot  \mu^{4k},
\end{align}
where $C_k'$ depends only on $k$.
\end{proof}
Finally, we need to discuss that Theorem~\ref{thm:negative} applies to any fully connected network with weights initialized from a distribution that is invariant to permutation of the input neurons. Let $\chi_{S}(x)=\prod_{j \in S} x_j$, with $|S| =k$, and let $\pi$ be a random permutation of the input neurons. Theorem~\ref{thm:AS20} and Lemma~\ref{lem:CP} imply that after $T$ steps of training with noisy-SGD with the parameters $A,B$ and $\tau$, on data generated by $\cD$ and $\chi_S$, the trained network is such that
\begin{align} \label{eq:perm_invariant}
    \E_{\pi} \pr(\NN(x;\theta^T)  =(\chi_S \circ \pi) (x)) \leq \frac 12 + \frac{T PA}{\tau} \left( {d \choose k}^{-1} + C_k' \rho^2  \mu^{4k} + \frac{1}{B} \right)^{1/2}.
\end{align}
Recall that the network is fully connected and the weights outgoing from the input neurons are initialized from a distribution that is invariant to permutation of the input neurons. Consider the action induced by $\pi$ on the weights outgoing from the input neurons. By properties of SGD, it follows that each conditional probability on $\pi$ contributes equally to the left hand side of~\eqref{eq:perm_invariant}, thus the same bound holds also for the single function:
\begin{align} 
    \pr(\NN(x;\theta^T)  =\chi_S  (x)) \leq \frac 12 + \frac{T PA}{\tau} \left( {d \choose k}^{-1} + C_k' \rho^2  \mu^{4k} + \frac{1}{B} \right)^{1/2}.
\end{align}

% Recall that the network is fully connected and the weights outgoing from the input neurons are initialized from a distribution that is invariant to permutation of the input neurons. Consider the action induced by a permutation of the input neuro

\section{Further Comments on Covariance Loss}
\label{app:covariance_loss}
We prove that for binary classification datasets with a non-negligible fractions of $+1$ and $-1$ labels, an estimator that achieves small covariance loss, mis-classifies only a small fraction of samples.
\begin{proposition} \label{prop:covariance_loss}
    Let $(X,Y) = \{ (x^s,y^s)\}_{s \in [m]}$ be a dataset, where for all $s$, $x^s \in \cX $ and $y^s \in \{ \pm 1 \}$. Let $\bar y =\frac{1}{m}\sum_{s \in [m]} y^s$ and assume $|\bar y |<1-\delta$, for some $\delta>0$. Let $\hat f: \cX \to \{\pm 1 \} $ be an estimator. If $\frac{1}{m}\sum_{s \in [m]} L_{\rm cov} (\hat f,x^s,y^s) <\epsilon,$ then 
    % \begin{align}
        $\frac{1}{m} \sum_{s \in [m]} \mathds{1}(\sign(\hat f(x^s)) \neq  y^s ) <\frac{\epsilon}{\delta}.$
    % \end{align}
\end{proposition}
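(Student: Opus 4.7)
The plan is to exploit the factorization of the covariance loss that the paper already highlights in its remark, namely that
\begin{align*}
    1 - y^s \bar y - \hat f(x^s)(y^s - \bar y) \;=\; (1 - y^s \bar y)\,(1 - y^s \hat f(x^s)),
\end{align*}
which follows from $(y^s)^2 = 1$. So $L_{\mathrm{cov}}(\hat f, y^s, x^s) = \bigl((1-y^s\bar y)(1-y^s\hat f(x^s))\bigr)_+$, a product of two factors whose signs are easy to track.

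First, I would observe that the assumption $|\bar y| < 1 - \delta$ yields $1 - y^s \bar y \geq 1 - |\bar y| > \delta > 0$ for every $s$, independent of the label. Thus the first factor is strictly positive and bounded below by $\delta$, and the truncation $(\,\cdot\,)_+$ is determined solely by the sign of the second factor $1 - y^s \hat f(x^s)$.

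Next, I would show the key pointwise lower bound: for any sample $s$ with $\sign(\hat f(x^s)) \neq y^s$, we have $y^s \hat f(x^s) \leq 0$ (using that $y^s \in \{\pm 1\}$ and $\hat f$ is $\{\pm 1\}$-valued, so in fact $y^s \hat f(x^s) = -1$), hence $1 - y^s \hat f(x^s) \geq 1$. Combined with the bound on the first factor, this gives
\begin{align*}
    L_{\mathrm{cov}}(\hat f, y^s, x^s) \;\geq\; \delta \cdot \mathds{1}\!\bigl(\sign(\hat f(x^s)) \neq y^s\bigr).
\end{align*}

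Finally, averaging this inequality over $s \in [m]$ and using the hypothesis $\tfrac{1}{m}\sum_s L_{\mathrm{cov}}(\hat f, x^s, y^s) < \epsilon$ yields $\tfrac{\delta}{m}\sum_s \mathds{1}(\sign(\hat f(x^s)) \neq y^s) < \epsilon$, which upon rearranging gives exactly the claimed bound $\tfrac{1}{m}\sum_s \mathds{1}(\sign(\hat f(x^s)) \neq y^s) < \epsilon/\delta$. There is no real obstacle here; the only step requiring mild care is the algebraic factorization and the observation that, because $\hat f \in \{\pm 1\}$, misclassification forces $y^s \hat f(x^s) = -1$ (so the bound is actually loose by a factor of $2$, but this is irrelevant for the stated conclusion).
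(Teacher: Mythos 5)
Your proof is correct and follows essentially the same route as the paper's: factor the loss as $\bigl((1-y^s\bar y)(1-y^s\hat f(x^s))\bigr)_+$, lower-bound the first factor by $\delta$ using $|\bar y|<1-\delta$, note that misclassification forces $1-y^s\hat f(x^s)\geq 1$, and average. No gaps.
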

\begin{proof}
Note that 
\begin{align}
     1- y^s \bar y - \hat f(x^s) (y^s -\bar y) & = (y^s)^2- y^s \bar y - \hat f(x^s) (y^s -\bar y)\\
     & = (y^s - \hat f(x^s))(y^s -\bar y)\\
     & = (1 - y^s\hat f(x^s))(1 -y^s \bar y)
\end{align}
Since $|\bar y|<1$, $\sign(1 -y^s\bar y)=1 $, and $ \sign((1 - y^s\hat f(x^s))(1 -y^s \bar y)) =  \sign(1- y^s\hat f(x^s))$. Thus,
\begin{align}
        \epsilon &> \frac{1}{m}\sum_{s\in [m]} \left( (1 - y^s\hat f(x^s))(1 -y^s \bar y)\right)_+\\
        & = \frac{1}{m}\sum_{s\in [m]} \left((1 - y^s\hat f(x^s))(1 -y^s \bar y)\right)_+ \mathds{1} (y^s\hat f(x^s) <1 )\\
        & \geq \frac{1}{m}\sum_{s\in [m]} \left( (1 - y^s\hat f(x^s))(1 -y^s \bar y)\right)_+ \mathds{1} (y^s\hat f(x^s) <0 )\\
        & \geq \delta \frac{1}{m}\sum_{s\in [m]}  \mathds{1} (y^s\hat f(x^s) <0 ).
\end{align}

% \begin{align}
%    \epsilon & > \frac{1}{m^2} \sum_{s,t \in [m]} ( 1- y^s y^t - \hat f(x^s) (y^s-y^t))_+ \\
%    & \overset{(a)}{=} \frac{1}{m^2} \sum_{s,t: y^s \neq y^t} ( 1- y^s y^t - \hat f(x^s) (y^s-y^t))_+ \\
%    & = \frac{2 }{m^2} \sum_{s,t: y^s \neq y^t} ( 1 - \hat f(x^s)y^s)_+\\
%    & = \frac{2 \#\{+1\} }{m^2} \sum_{s: y^s =-1} ( 1 - \hat f(x^s)y^s)_+ + \frac{2 \#\{-1\} }{m^2} \sum_{s: y^s =+1} ( 1 - \hat f(x^s)y^s)_+\\
%    & \geq \frac{2 \#\{+1\} }{m^2} \sum_{s: y^s =-1} \mathds{1}(\hat f(x^s)y^s<0) + \frac{2 \#\{-1\} }{m^2} \sum_{s: y^s =+1} \mathds{1}(\hat f(x^s)y^s<0)\\
%    & \geq \frac{1}{m} \sum_{s \in [m]} (\sign(\hat f(x^s)) \neq y^s) - \frac{2 \delta}{m} .
% \end{align}
% where in $(a)$ we used that for $y^s =y^t$, $ 1- y^s y^t - \hat f(x^s) (y^s-y^t)=0$,
\end{proof}

\section{Experiment Details and Additional Experiments} \label{app:experiments} 
In this section, we provide more details on the implementation of the experiments. Moreover, we present further experiments on the comparison of the curriculum strategy and standard training. 
\subsection{Experiment Details}
\subsubsection{Architectures and Loss Functions}
% We used an MLP model trained by SGD under the $\ell_2$ loss for the results presented in the main part. In this appendix, we provide additional experimental results using other models, particularly, mean-field and Transformer and other loss functions studied in this paper, i.e., hinge loss and covariance loss. Below, we describe each of these. 

\paragraph{Architectures.}  We used an MLP model trained by SGD under the $\ell_2$ loss for the results presented in the main part. In this appendix, we further present experimental results using other models, particularly, mean-field \cite{mei2018mean} and Transformer \cite{vaswani2017attention-transformer}. Below, we describe each in detail. 
\begin{itemize}
    \item \textbf{MLP.} The MLP model is a fully-connected architecture consisting of 4 hidden layers of sizes 512, 1024, 512, and 64. The ReLU activation function has been used for each of the layers (except the last one). Moreover, we have used PyTorch's default initialization which initializes weights of each layer with $U(\frac{-1}{\sqrt{\mathrm{dim}_{\mathrm{in}}}}, \frac{1}{\sqrt{\mathrm{dim}_{\mathrm{in}}}})$ where $\mathrm{dim}_{\mathrm{in}}$ is the input dimension of the corresponding layer. 
    \item \textbf{Mean-field.} We also use a two-layer neural network with mean-field \cite{mei2018mean} parametrization. Particularly, we define $f_{\mathrm{MF}}(x) = \frac{1}{N}\sum_{i=1}^{N} a_i\sigma (\langle w_i, x \rangle + b_i)$ where $N=2^{16}$  and $\sigma = \mathrm{ReLU}$ are the number of neurons and the activation function respectively. We also initialize the weights following $a_i \sim U(-1, 1)$, $w_i \sim U(\frac{-1}{\sqrt{d}}, \frac{-1}{\sqrt{d}})^{\otimes d}$, and $b_i \sim U(\frac{-1}{\sqrt{d}}, \frac{-1}{\sqrt{d}})$, where $d$ is the dimension of the input. Note that with this parametrization, one should employ large learning rates (e.g., 1000) assuming SGD is used as the optimizer. 
    \item \textbf{Transformer.} We use the standard encoder part of Transformer architecture \cite{vaswani2017attention-transformer} which is commonly used in language modeling (\cite{raffel2019exploring}) and vision applications (particularly, Vision Transformers \cite{dosovitskiy2020image}). More specifically, we first embed $\pm 1$ into a 256-dimensional vector using a shared embedding layer. Then, the embedded input is processed through 6 transformer layers. In each transformer layer, the size of the MLP hidden layer is also 256, and 6 attention heads are used. Finally, a linear layer is utilized for computing the output. 
\end{itemize}

\paragraph{Loss Functions.} We have mainly used the $\ell_2$ loss for our experiments. Nonetheless, we additionally present results on the hinge and covariance loss which are used in our theoretical results. We briefly review each of these losses below. 
\begin{itemize}
    \item \textbf{$\mathbf{\ell_2}$ loss.} We use the squared loss $(\hat{y} - y)^2$ for each sample where $\hat{y}$ and $y$ are the predicted and the true values respectively. 
    \item \textbf{Hinge loss.} We use the hinge loss defined as $\max(0, 1 - \hat{y}y)$ for each sample where the true label $y \in \{\pm 1\}$ and the predicted value $\hat{y} \in \mathbb{R}$.
    \item \textbf{Covariance loss.} We also use the covariance loss as defined in Definition \ref{def:cov_loss}, i.e., $\max(0, (y - \bar{y})(y - \hat{y}))$ = $\max(0, 1 - y\bar{y} - \hat{y}(y - \bar{y}))$, where $y \in \{\pm 1\}$, $\bar{y} \in (-1,1)$ and $\hat{y} \in \mathbb{R}$ present the true label, the average of the true label (on the training distribution), and the predicted value. 
\end{itemize}

\subsubsection{Procedure}
We have implemented the experiments using the PyTorch framework \cite{paszke2019pytorch}. The experiments were executed (in parallel) on NVIDIA A100, RTX3090, and RTX4090 GPUs and consumed approximately 200 GPU hours (excluding the selection of hyperparameters). Note that each experiment has been repeated using 10 different random seeds and the results are reported with $95\%$ confidence intervals. 

\paragraph{Curriculum Strategy/Standard Training.} Here we explain the general setting of training. When using the curriculum strategy, the model is first trained on the sparse samples until the training loss reaches below $10^{-2}$. Then, in the second phase of the curriculum, the training is continued on the mixed distribution until the training loss drops below $10^{-3}$. On the other hand, during the standard training, the training is done on the mixed distribution, again, until the training loss becomes less than $10^{-3}$.

\paragraph{Number of Samples/Number of Iterations.} We generally focused on two components to compare the curriculum and standard training: sample complexity and the number of iterations needed for convergence.  
To compare the sample complexity, for a given number of training samples, we first generate a training set according to $\rho$ and $\mu$. In the case of using the curriculum, the sparse samples are then selected by checking if their Hamming weight satisfies $H(x) < d(\frac{1}{2} - \frac{\mu}{4})$. Afterward, training is done as described earlier with or without the curriculum. Finally, the test accuracy (or the test loss) is compared. We are also interested in the number of steps needed for the convergence of models whether the curriculum is used or not. To study the number of optimization steps independently of the number of samples, we consider an online setting in which each mini-batch is freshly sampled and there is no fixed training set. Note that in this case, when in the curriculum's first phase, we directly sample the sparse distributions from $\mathcal{D}_{\mu}$. Similarly, the model is trained until the training loss drops below $10^{-3}$, and then the number of iterations is compared. The only exception is $f_{\mathrm{middle}}$ in Figure \ref{fig:different-functions}, where we were mainly interested in the weak learning of the target function. In that case, the training is stopped when the training loss reaches below $0.26$.
%Note that when the training set is small, the model overfits the training data. Thus, one may be able to get better test losses by employing regularization techniques. Nonetheless, the would not change the general picture 

% explaining plots -- at least the sample complexity plot

% explain the particular case of cov loss and the approximation of the mean. 
\paragraph{Hyperparameter Tuning.}
Note that the experiments are aimed to compare the standard and curriculum-based training in a fair and similar context. As a result, we did not perform extensive hyperparameter tuning for our experiments. (Also note that the task and/or the training distribution varies from one experiment to another.) Nonetheless, we tried different batch sizes and learning rates to ensure the robustness of our findings. Generally, we did not observe any significant change in the patterns and potential gains of the curriculum method (of course changes like using a smaller learning rate would make both curriculum and standard training slower). Consequently, we used a moderate batch size  of 64 for all of our experiments (other than Transformers) and we selected the learning rate based on the speed and the stability of the convergence. Particularly, we used a learning rate of $0.003$ when optimizing the MLP model with SGD under the $\ell_2$ loss (which covers all experiments presented in the main part). For the mean-field model, we use a learning rate of 1000. Also for the experiments of Transformers, we use the Adam \cite{kingma2014adam} optimizer with batch size 512 and learning rate $0.0001$.

% The exact values of hyperparameters for other experiments are reported in our code. 
\subsection{Additional Experiments}
First, we complete Figure \ref{fig:parity5-computation-and-mu-and-parity} by presenting results showing the gain of the curriculum method in the number of iterations needed for convergence for different values of $\mu$ and parity degrees. Similar to the previous experiments, we consider the parity embedded in dimension $d=100$. In Figure \ref{fig:mu-degree-time} (left), we focus on different values of $\mu$ for parity of 5 bits. It can be seen that for very small values of $\mu$, e.g., $\mu =0.02$, there is no significant difference between the curriculum and standard training. Indeed, this result can be expected since $\mathcal{D}_{\mu}$ is close to the uniform distribution. Nonetheless, for even moderate values of $\mu$, e.g., $\mu=0.5$, there is a significant benefit given by the curriculum strategy. We further present the number of steps needed for convergence for parities of different degrees in Figure \ref{fig:mu-degree-time} (right). As expected, it is shown that parities of higher degree require more iterations to be learned in general. However, this increase in iterations is mild when the curriculum is employed, which leads to an increasing gap between the curriculum and standard training as the parity degree is increased. 

\begin{figure}[H]
     \centering
     \begin{subfigure}[b]{0.49\textwidth}
         \centering
         \includegraphics[width=\textwidth]{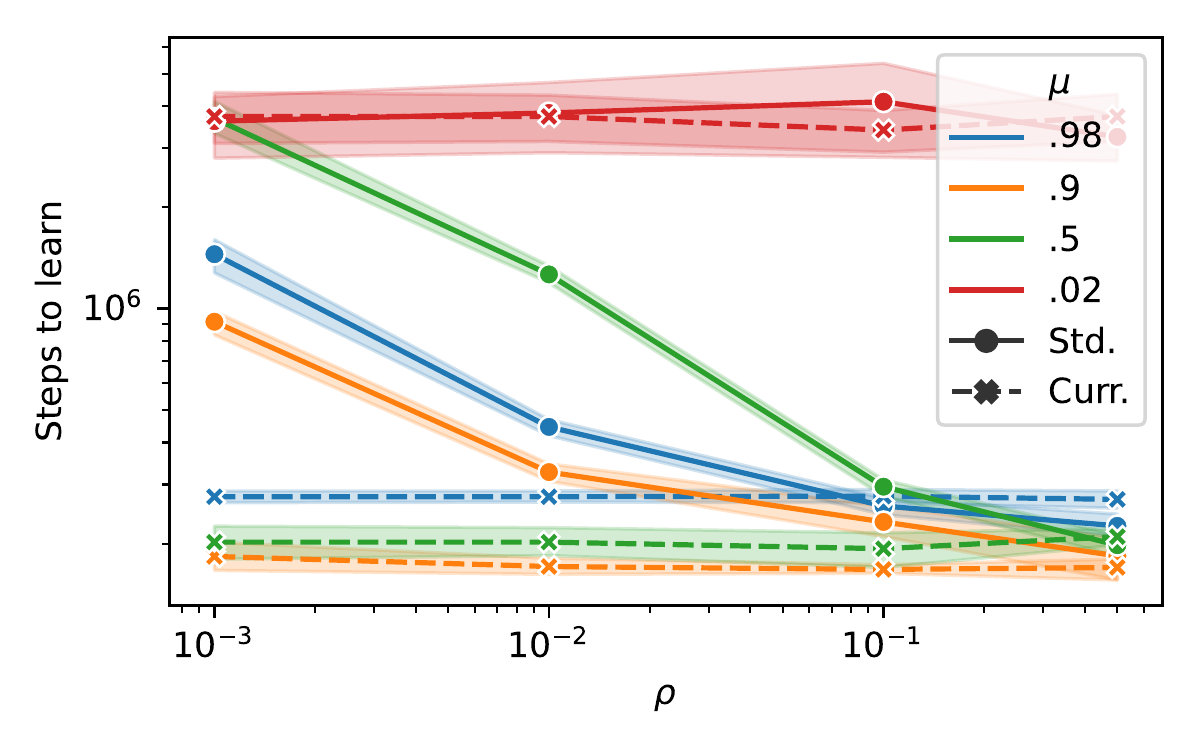}
         % \caption{Dependency on $\mu$}
     \end{subfigure}
     \hfill
     \begin{subfigure}[b]{0.49\textwidth}
         \centering
         \includegraphics[width=\textwidth]{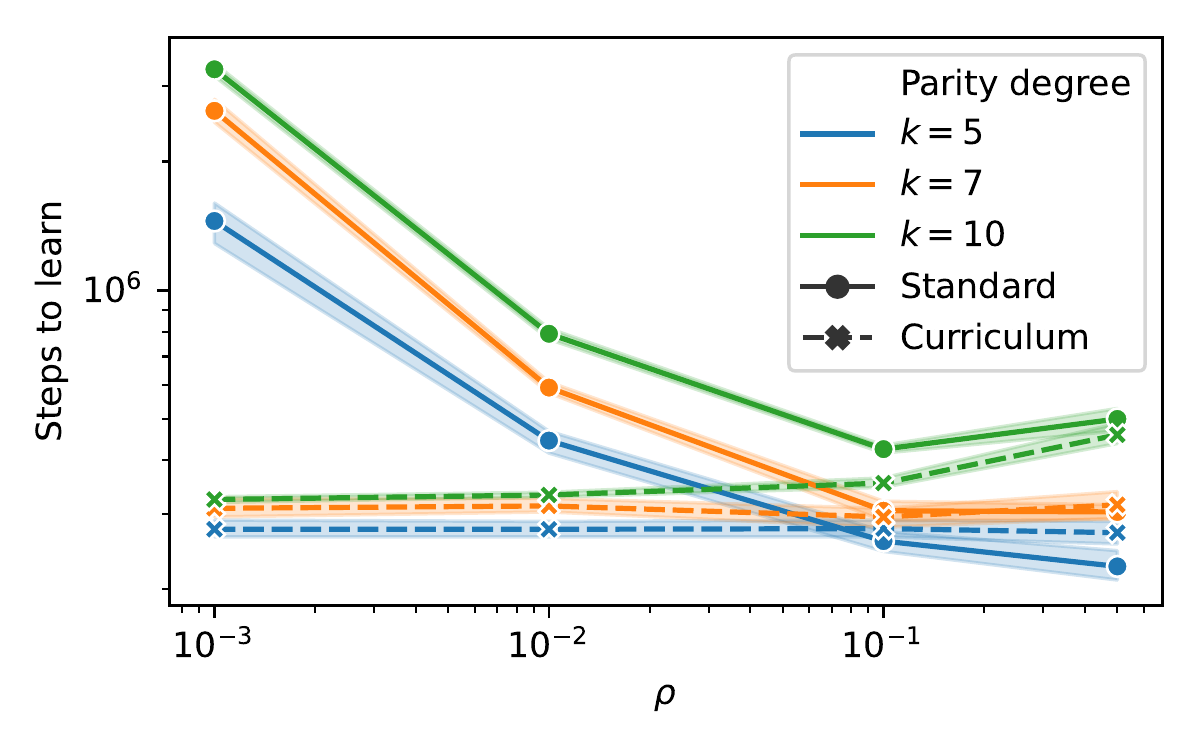}
         % \caption{Dependency on $\mu$}
     \end{subfigure}
     \hfill
     \caption{Dependency of the gains of the curriculum strategy in the number of training steps for (left) different $\mu$'s and (right) different degrees of the parity function. It can be seen that the curriculum method can be advantageous for both high and moderate values of $\mu$. Moreover, the potential gain of the curriculum method can be increased if parities of higher degrees are used.}
    \label{fig:mu-degree-time} 
\end{figure}
Next, we move to the mean-field model. Similar to the other experiments, we evaluate the benefit of the curriculum strategy for both the sample complexity and the number of steps needed for convergence. We empirically observed that the mean-field model is able to learn parity functions more easily (with fewer samples and iterations) than the MLP model used in the main text. We thus use parity of degree $k=7$ in dimension 100 and $\mu=0.98$ to demonstrate the gains of the curriculum strategy more clearly. Particularly, we report the accuracy of the mean-field model trained under the $\ell_2$ loss for $\rho = 0.01$ and different training set sizes in Figure \ref{fig:meanfield} (left). Likewise, the number of iterations needed for convergence for different values of $\rho$ is presented in \ref{fig:meanfield} (right). As expected, the gap between curriculum strategy and standard training in both the sample complexity and number of optimization steps are similarly exhibited if a mean-field model is used.
\begin{figure}[H]
     \centering
     \begin{subfigure}[b]{0.49\textwidth}
         \centering
         \includegraphics[width=\textwidth]{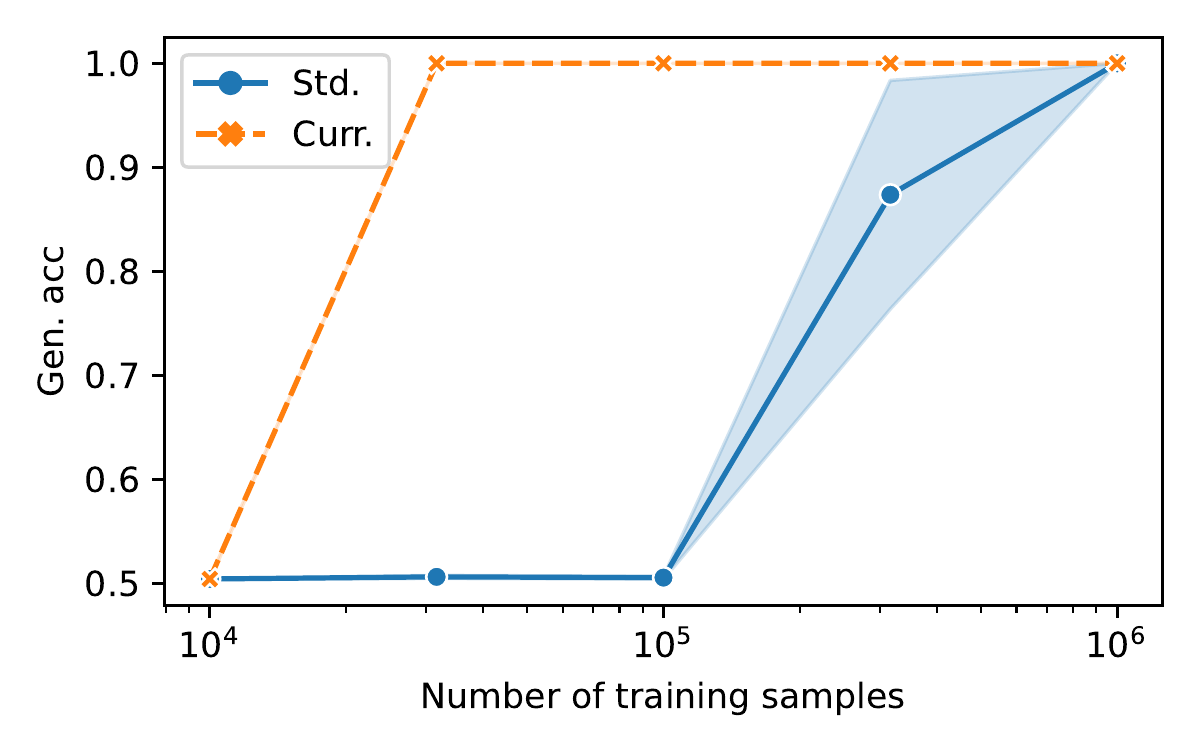}
         % \caption{Dependency on $\mu$}
     \end{subfigure}
     \hfill
     \begin{subfigure}[b]{0.49\textwidth}
         \centering
         \includegraphics[width=\textwidth]{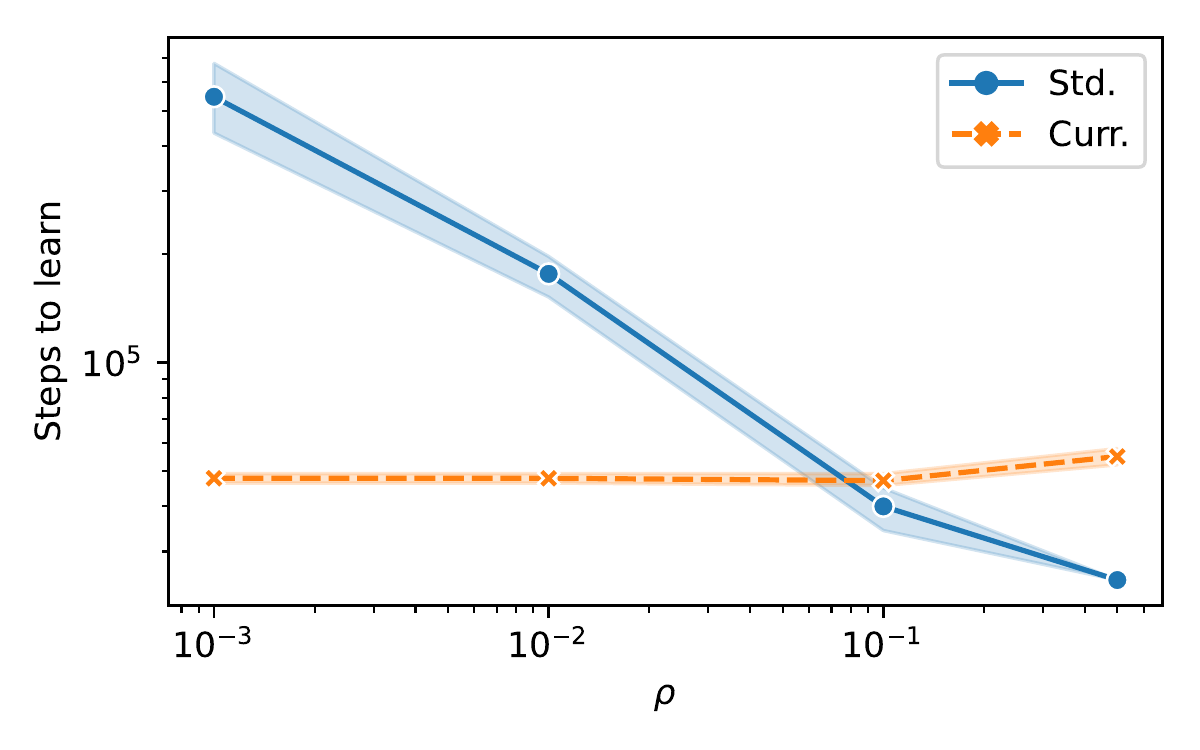}
         % \caption{Dependency on $\mu$}
     \end{subfigure}
     \hfill
     % \caption{Considering gains in sample complexity (top row) and number of steps (bottom row) beyond parity targets. Plot in the left, middle, and right column present cases in which curriculum is beneficial, is only beneficial for weak learning, and is not beneficial, respectively.}
     \caption{Comparison of curriculum and standard training for the mean-field model trained under the $\ell_2$ loss; (left) accuracy of the model trained on training sets with varying sizes, (right) the number of iterations needed for convergence of the model. The general picture of the potential benefits of the curriculum method remains unchanged.}
    \label{fig:meanfield} 
\end{figure}
Moreover, we study other loss functions studied in this paper. To this end, we keep the same setting as above, considering the mean-field model, parity of 7 bits, and $d=100$, $\mu = 0.98$, and $\rho = 0.01$ (unless varying). We report the results of the hinge and covariance loss in Figure \ref{fig:hinge} and Figure \ref{fig:covariance} respectively. It can be seen that the patterns are generally similar to the case that $\ell_2$ loss was being used (Figure \ref{fig:meanfield}). In sum, the curriculum method makes the learning of the parity function possible with fewer samples and iterations for small enough values of $\rho$.
\begin{figure}[H]
     \centering
     \begin{subfigure}[b]{0.49\textwidth}
         \centering
         \includegraphics[width=\textwidth]{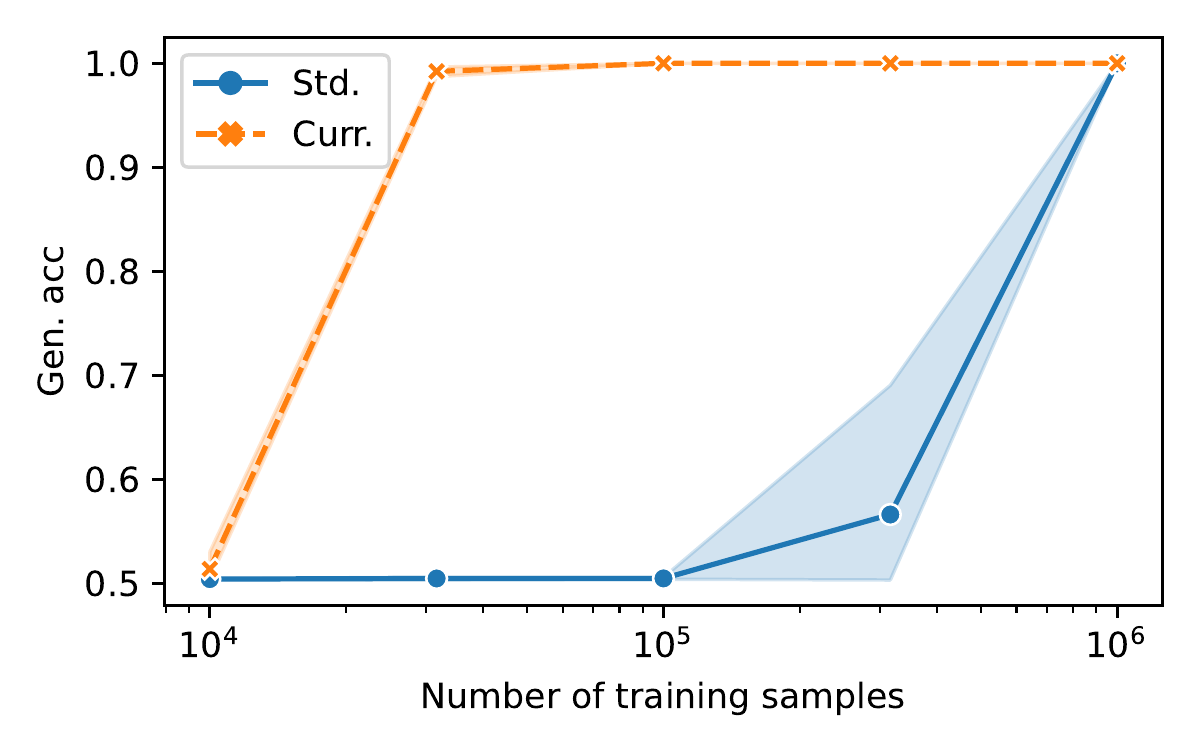}
         % \caption{Dependency on $\mu$}
     \end{subfigure}
     \hfill
     \begin{subfigure}[b]{0.49\textwidth}
         \centering
         \includegraphics[width=\textwidth]{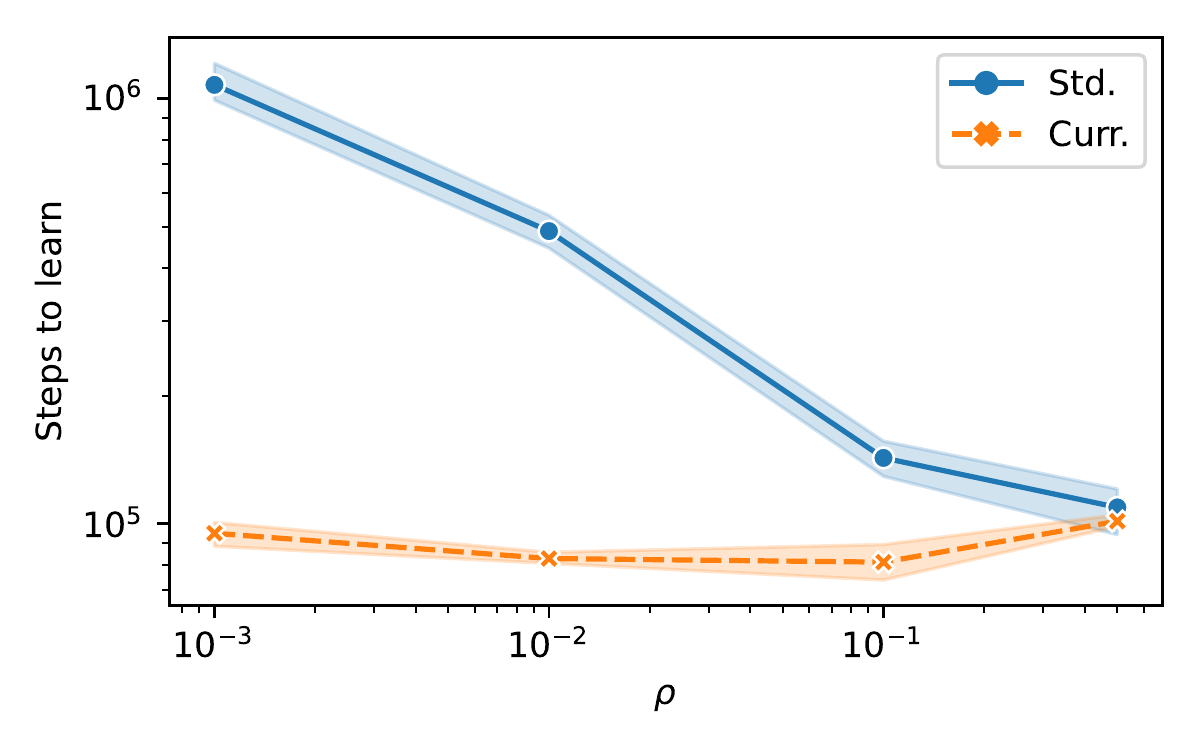}
         % \caption{Dependency on $\mu$}
     \end{subfigure}
     \hfill
     \caption{Potential benefits of the curriculum strategy for the mean-field model trained with the hinge loss; (left) accuracy for different training set sizes, (right) the number of iterations (with fresh samples) needed for the convergence of the model. Similar to the case of the $\ell_2$ loss, the curriculum is beneficial for small enough values of $\rho$.}
    \label{fig:hinge} 
\end{figure}
\begin{figure}[H]
     \centering
     \begin{subfigure}[b]{0.49\textwidth}
         \centering
         \includegraphics[width=\textwidth]{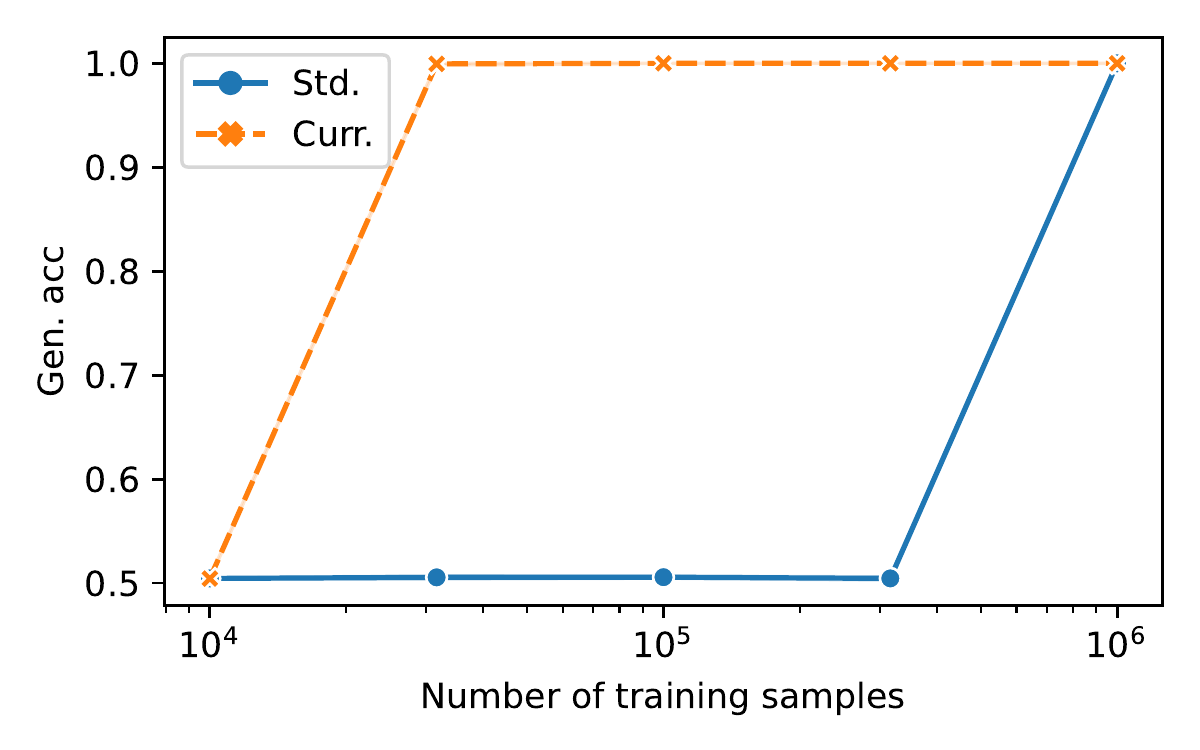}
         % \caption{Dependency on $\mu$}
     \end{subfigure}
     \hfill
     \begin{subfigure}[b]{0.49\textwidth}
         \centering
         \includegraphics[width=\textwidth]{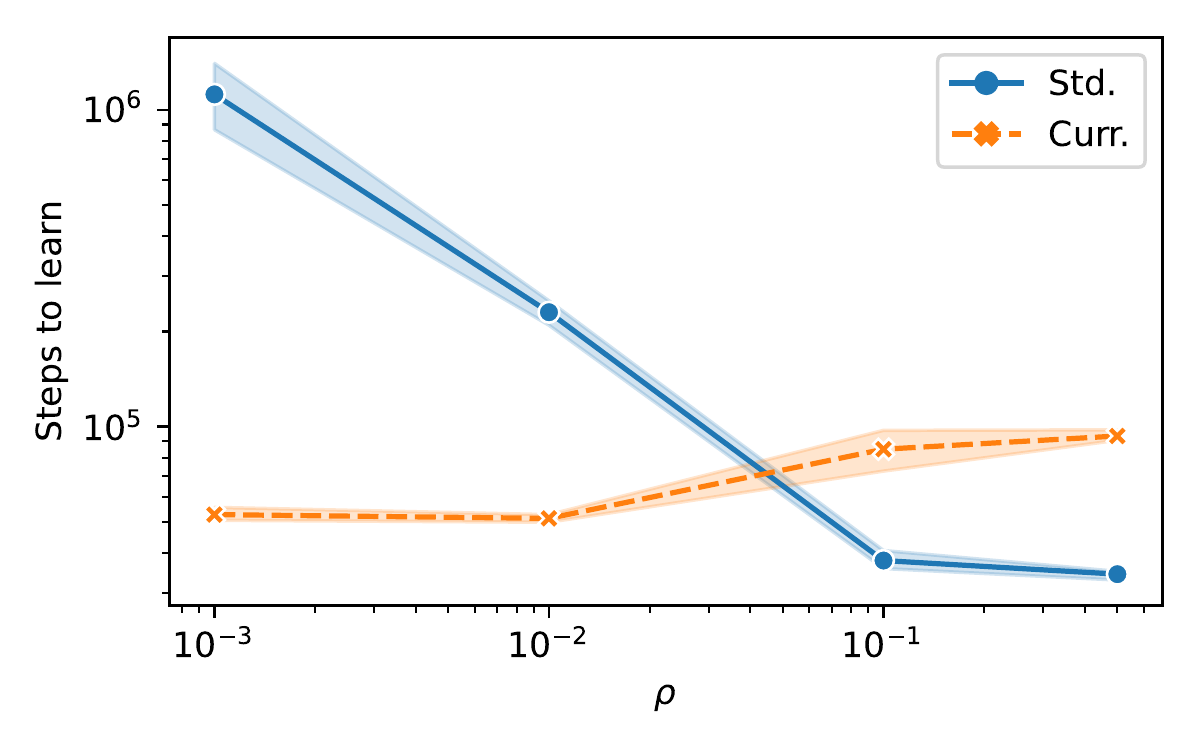}
         % \caption{Dependency on $\mu$}
     \end{subfigure}
     \hfill
     \caption{Comparing curriculum strategy and standard training for the mean-field model trained with the covariance loss; (left) accuracy for different training set sizes, (right) the number of optimization steps needed for learning the parity function. Similar to experiments where $\ell_2$ or hinge loss are used, curriculum potentially reduces the number of iterations and also the number of samples needed for learning.}
    \label{fig:covariance} 
\end{figure}
Finally, we evaluate the curriculum strategy for the Transformer model trained under the $\ell_2$ loss. To make the optimization of the model faster, we use Adam \cite{kingma2014adam} optimizer with batch size 512 for this experiment. We also stop the training when the training loss reaches below $10^{-2}$ instead of $10^{-3}$.  Similar to the above-mentioned setting, we set $\mu = 0.98$, $\rho=0.01$ (unless varying), $d=100$, and we consider learning the parity of degree $k=7$. The validation accuracy of the Transformer for different training set sizes is presented in Figure \ref{fig:transformer} (left). It can be seen that there is a significant, albeit weak, advantage in terms of sample complexity when the curriculum is used. Similarly, the number of iterations with fresh samples needed for convergence of the model is shown in Figure \ref{fig:transformer} (right). The gain in the number of steps is also weaker than the other models, nonetheless, it is significant (especially for small values of $\rho$ such as $0.001$).

\begin{figure}[H]
     \centering
     \begin{subfigure}[b]{0.49\textwidth}
         \centering
         \includegraphics[width=\textwidth]{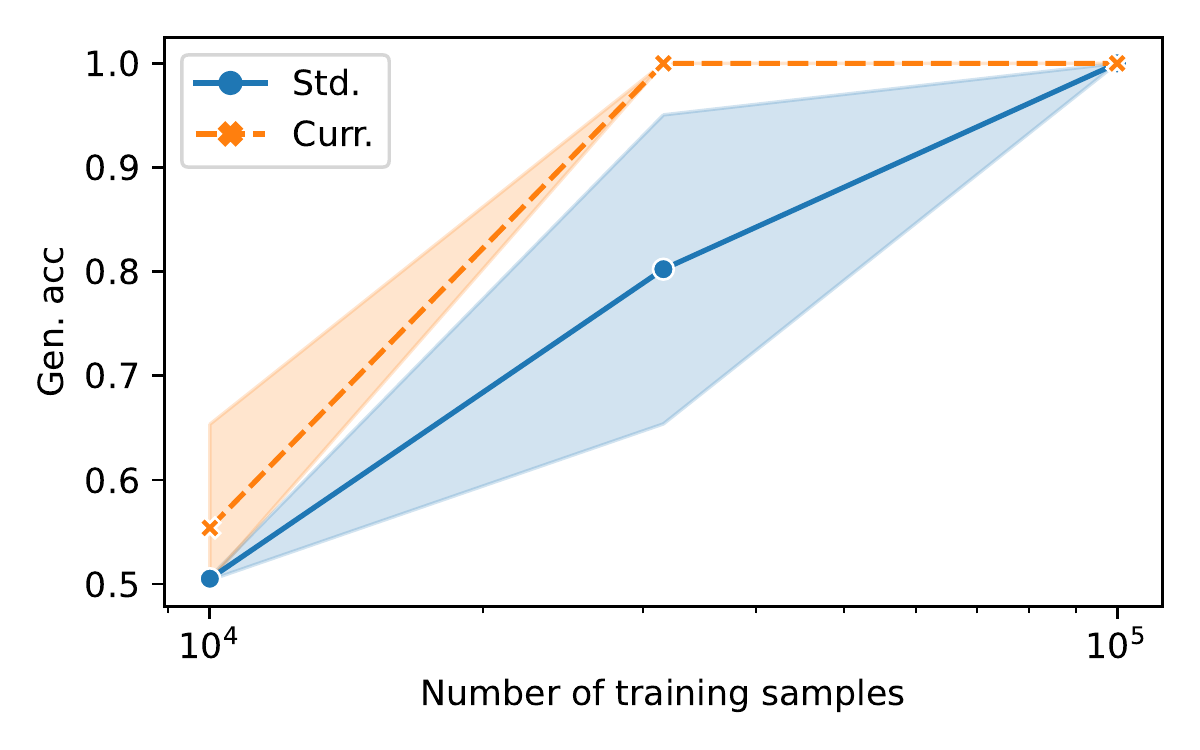}
         % \caption{Dependency on $\mu$}
     \end{subfigure}
     \hfill
     \begin{subfigure}[b]{0.49\textwidth}
         \centering
         \includegraphics[width=\textwidth]{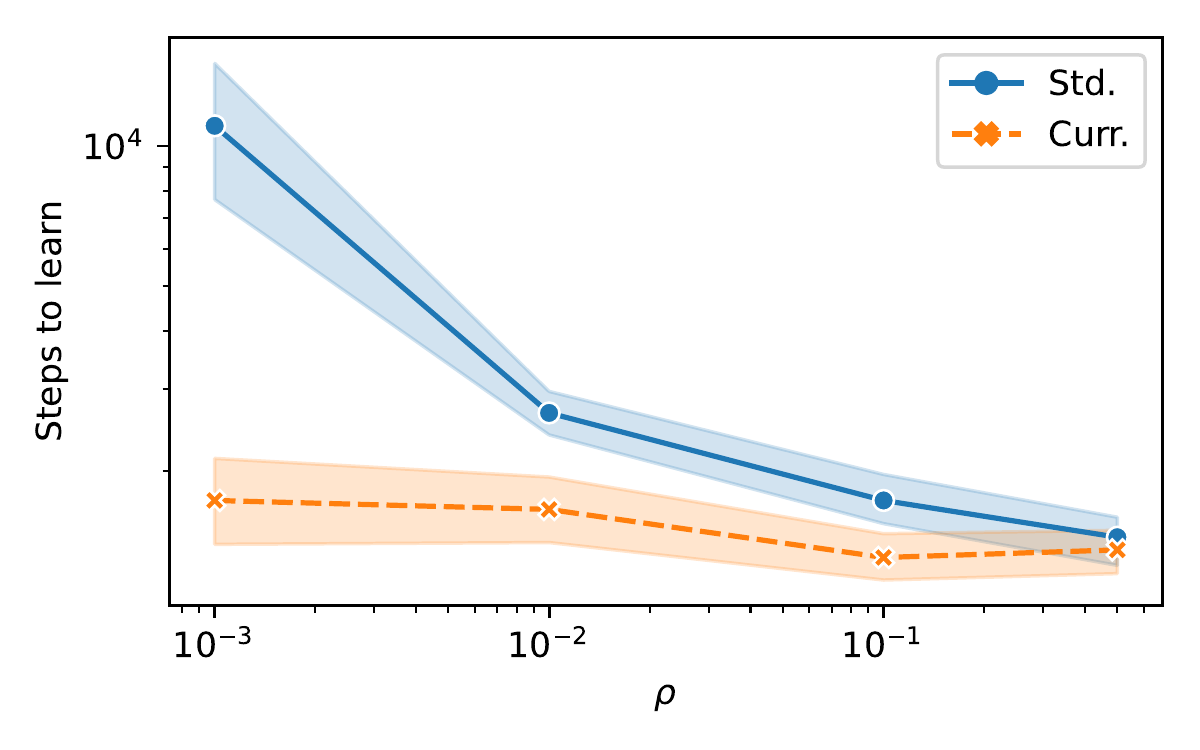}
         % \caption{Dependency on $\mu$}
     \end{subfigure}
     \hfill
     \caption{Evaluating the curriculum for the Transformer model in terms of sample complexity (left) and the number of iterations (right). It can be seen that the curriculum can still be significantly beneficial, although the difference between the curriculum and standard training is weaker in comparison to the MLP and mean-field model.}
    \label{fig:transformer} 
\end{figure}

% \bibliography{references}
% \bibliographystyle{alpha}

\end{document}